%% file: main.tex
\documentclass{article}

\PassOptionsToPackage{sort, numbers, compress}{natbib}

\usepackage[preprint]{neurips_2026}

\usepackage[utf8]{inputenc} %
\usepackage[T1]{fontenc}    %
\usepackage{hyperref}       %
\usepackage{url}            %
\usepackage{booktabs}       %
\usepackage{amsfonts}       %
\usepackage{nicefrac}       %
\usepackage{microtype}      %
\usepackage{xcolor}         %

\usepackage{algorithm}
\usepackage{algpseudocode}
\usepackage{color}
\usepackage{amsmath} 
\usepackage{amsthm}
\usepackage{amssymb}
\usepackage{mathtools}
\usepackage{comment}
\usepackage{caption}
\usepackage{subcaption}

\usepackage{cleveref}

\usepackage[all]{foreign}   %
\usepackage{pgfplots}
\pgfplotsset{compat=1.18}
\usepackage{placeins}
\usepackage{longtable}
\usepackage{chngcntr}

\usepackage{tikz}
\usetikzlibrary{patterns,arrows,arrows.meta,calc,shapes,shadows,decorations.pathmorphing,decorations.pathreplacing,automata,shapes.multipart,positioning,shapes.geometric,fit,circuits,trees,shapes.gates.logic.US,fit,backgrounds,decorations.text,topaths}

\newcommand{\distr}[1]{\Delta(#1)}

\DeclareMathOperator*{\argmax}{arg\,max}

\newcommand{\last}[1]{\text{last}\left(#1\right)}

\newcommand{\infnorm}[1]{\left\lVert#1\right\rVert_{\infty}}

\newtheorem{theorem}{Theorem}
\newtheorem{lemma}{Lemma}
\newtheorem{definition}{Definition}

\newtheorem{remark}{Remark}
\newtheorem{example}{Example}

\input{include/macros}

\input{include/colors}
\input{include/crefnames}

\title{
Robust Shielding for Safe Reinforcement Learning
}

\author{%
Edwin Hamel-De le Court$^{1}$ \quad
Thom Badings$^{2,3}$ \quad
Alessandro Abate$^{3}$ \quad
\\
\textbf{Francesco Belardinelli}$^{4}$ \quad
\textbf{Francesco Fabiano}$^{3}$ 
\\ \, \\
$^1$ Department of Computer Science, University of Manchester, United Kingdom \\
\texttt{edwin.hamel-delecourt@manchester.ac.uk} \\
$^2$ Faculty of Computer Science \& DSME, RWTH Aachen University, Germany \\
$^3$ Department of Computer Science, University of Oxford, United Kingdom \\
$^4$ Department of Computing, Imperial College London, United Kingdom
}

\begin{document}

\maketitle

\begin{abstract}
Shielding is an effective approach to formally guarantee the safety of reinforcement learning agents in Markov decision processes (MDPs). However, existing shielding techniques typically assume knowledge of the safety-relevant transition dynamics -- a requirement that is seldom met in practice. To address this limitation, we introduce a novel shielding framework for robust MDPs (RMDPs), \ie MDPs with sets of transition probabilities. We define safety as the satisfaction of a linear temporal logic (LTL) formula with a certain threshold probability under the worst-case transition probabilities of the RMDP. We prove that our shielding framework is both sound and optimal for the RMDP: every policy admissible by the shield is safe, and conversely, every safe RMDP policy is admissible by the shield. We combine our approach with existing sampling methods for learning transition probabilities of MDPs with probably approximately correct (PAC) guarantees. This combination enables the construction of shields for MDPs that, with high confidence, guarantee safety while remaining minimally restrictive. Our experiments show that our shields for learned RMDPs guarantee safety in unknown MDPs while recovering strong expected return as the number of samples increases.
\end{abstract}

\section{Introduction}
\label{sec:introduction}
Reinforcement learning (RL)~\citep{10.5555/3312046} optimizes the behavior of an agent taking actions in an unknown environment, generally modeled as a \emph{Markov decision process}~(MDP).
While success stories of RL have reached into key areas, such as robotics~\citep{SurveyRobotics}, game playing~\citep{DBLP:journals/corr/MnihKSGAWR13}, and autonomous driving~\citep{DBLP:journals/tits/KiranSTMSYP22}, \emph{safety} in RL remains a major concern.
In particular, state-of-the-art RL algorithms let  agents explore actions unrestrictedly and will thus inevitably select potentially harmful actions~\citep{DBLP:journals/jmlr/GarciaF15}.

This \emph{unsafe exploration} problem has triggered research on \emph{shielded RL}~\citep{ABENTShielding}.
A shield is a mechanism that prevents (\ie \emph{``shields''}) RL agents from executing unsafe actions.
Shields have been developed for MDPs~\citep{DBLP:conf/aaai/CourtBG25,DBLP:conf/concur/0001KJSB20} as well as partially observable MDPs (POMDPs)~\citep{DBLP:conf/aaai/Carr0JT23}.
To guarantee safety, these approaches require knowing the MDP's transition dynamics relevant to safety. 
However, in typical RL problems, these dynamics are unknown.
Instead, our starting assumption is to have access to some prior data on the dynamics, whether from previous executions or a simulator.
One solution is to use this data to learn \emph{point estimates} of the dynamics, but doing so introduces statistical errors~\citep{DBLP:journals/mansci/MannorSST07} that compromise the shield's safety guarantees.
An alternative is to learn an \emph{uncertainty set} around each unknown transition probability that is \emph{probably approximately correct}~(PAC), \ie contains the true probability with high confidence~\citep{DBLP:journals/ior/NilimG05,DBLP:conf/birthday/SuilenBB0025}.
Unfortunately, existing shielding approaches are limited to models with exact probabilities and thus incompatible with such uncertainty sets and PAC~bounds.

\begin{figure}[t!]
    \centering
    \scalebox{0.85}{%
      \input{figures/overview}
    }
    \caption{Our shielding framework. Offline (shown by dashed arrows), we use sampled transitions on the MDP to learn the safety-relevant transition probabilities in the form of an RMDP with PAC guarantees. We use the learned RMDP to compute a shield that acts on the (unknown) MDP, by constraining the learning agent to the subset of stochastic policies that guarantee safety.}
    \label{fig:overview}
\end{figure}

\paragraph{Contributions: shielding robust MDPs.}
We address this gap by developing a novel shielding framework for MDPs with sets of transition probabilities, known as \emph{robust MDPs} (RMDPs)~\citep{DBLP:journals/mor/WiesemannKR13}.
An RMDP can be seen as a game between the \emph{agent}, which chooses the actions, and an \emph{adversary}, which chooses transition probabilities from the uncertainty sets.
Intuitively, our shield prevents the agent from deploying \emph{any} policy that leads to unsafe outcomes under \emph{any} choice by the adversary.
As is common in shielding~\cite{ABENTShielding,DBLP:conf/aaai/CourtBG25}, we adopt \emph{linear temporal logic} (LTL) as a specification language, focusing on the \emph{safety} fragment for algorithmic purposes~\citep{DBLP:journals/fmsd/KupfermanV01}.
When used to constrain the agent's policy, our shield guarantees that, regardless of the choices by the adversary, the probability of satisfying a given LTL formula meets a prescribed threshold.
As depicted in \cref{fig:overview}, we can instantiate our framework to shield unknown MDPs in two steps:
\begin{enumerate}
    \item We use sampled transitions in the environment to learn the transition probabilities of the (unknown) MDP as an RMDP with PAC guarantees, \ie the RMDP \emph{``contains''} the true MDP with a prescribed (high) confidence probability $\delta$.
    As common in shielding~\citep{ABENTShielding}, we do not need to learn \emph{all} probabilities, but instead only those relevant to the safety specification.
    \item Based on the learned RMDP, we construct a shield.
    The shield restricts the agent's choices to a subset of probability distributions over actions.
    When used to constrain the actions of the agent on the true MDP, the shield guarantees that, with the same confidence probability~$\delta$, the safety LTL formula is satisfied with at least the desired threshold probability.
\end{enumerate}
As a key contribution, we prove, under mild assumptions detailed in \cref{thm:RMDP_soundness,thm:RMDP_optimality}, that our shield is both \emph{sound and optimal} on the RMDP: every policy admissible under the shield is safe and, conversely, every safe RMDP policy is admissible under the shield. Since the RMDP converges to the true MDP as the number of sampled transitions increases, this optimality carries over to optimality for the MDP in the limit.
Thus, our shield guarantees safety on the true MDP at any point with high confidence -- both during learning and execution -- and is \emph{minimally restrictive} to the learning agent, enabling it to converge to optimal rewards as the quality of the learned RMDP increases.
We demonstrate these results empirically and, by contrast, show that state-of-the-art shielding for learned MDPs with point estimates of probabilities can violate safety.

\paragraph{Related work.}
Safety in RL is an active area of research~\cite{DBLP:journals/jmlr/GarciaF15,DBLP:journals/arcras/BrunkeGHYZPS22,DBLP:journals/pami/GuYDCWWK24}.
Shielding has proved to be a successful approach to block unsafe actions during training and execution~\cite {ABENTShielding,DBLP:conf/concur/0001KJSB20,DBLP:conf/aaai/Carr0JT23,DBLP:conf/sii/OdriozolaOlaldeZA23,DBLP:conf/ijcai/YangMRR23,DBLP:conf/atal/Elsayed-AlyBAET21,DBLP:journals/isse/KonighoferRPTB23,DBLP:journals/cacm/KonighoferBJJP25}.
Closest to our approach is the %
probabilistic shielding for MDPs proposed in~\cite{DBLP:conf/aaai/CourtBG25}, which, however, assumes knowledge of the underlying MDP.
Here, we consider the more challenging case of RMDPs and thereby shielding unknown MDPs learned as an RMDP.
Existing shields on learned dynamics generally lack finite-sample guarantees by relying on, \eg Monte Carlo sampling~\cite{DBLP:conf/atal/GoodallB24} or learned latent representations~\cite{DBLP:conf/ecai/BethellGCI25}.
To our knowledge, the only existing RMDP shielding approach is~\cite{DBLP:conf/amcc/ReedL25}, which considers RMDPs with intervals of probabilities (\emph{interval MDPs}) and iteratively expands the allowed actions per state, yielding sound but overly conservative shields as a \emph{permissive policy}~\cite{huynh2026permissive}.

Lagrangian approaches~\citep{DBLP:journals/jmlr/ChowGJP17,DBLP:conf/icml/StookeAA20} convert safety-constrained MDPs into unconstrained problems with dual variables to penalize cost violations in the reward objective.
Recent works extend these ideas to robust constrained MDPs~\cite{Ganguly2025,DBLP:journals/corr/abs-2010-04870}.
While often empirically effective, these methods cannot provably guarantee safety at any time (as opposed to shields), thus limiting their use in safety-critical scenarios.

Lyapunov-based methods enforce safety by satisfying a Lyapunov decrease condition defined w.r.t. a safe baseline policy~\cite{DBLP:conf/nips/ChowNDG18,DBLP:journals/corr/abs-1901-10031}, where guarantees depend on accurate safety critics and linearizations.
In~\cite{DBLP:conf/nips/BerkenkampTS017}, Gaussian processes (GP) are used for model-based RL, but guarantees rely on the GP prior and a known Lyapunov function, which is hard to construct in practice.
Finally, Dyna-style model-based RL uses rollouts on learned models to generate data for model-free agents~\cite{DBLP:journals/sigart/Sutton91,DBLP:conf/nips/JannerFZL19,DBLP:conf/icml/FrauenknechtESS24}, which reduces required system interactions but cannot give formal safety guarantees as shields do.

\paragraph{Overview.}
After the preliminaries in \cref{sec:preliminaries}, we present our shields for RMDPs in \cref{sec:shields_RMDPs}.
We discuss the setting of shielding unknown MDPs in \cref{sec:unknown_MDPs} and empirically evaluate our approach in \cref{sec:experiments}.
Throughout the paper, we focus on intuition for our results and instead present all rigorous details about, \eg measure-theoretic constructions, and proofs for our results in the appendix.

\section{Preliminaries}
\label{sec:preliminaries}

A \textit{Markov decision process} (MDP) is a tuple $\mathcal
M=\tuple{S,A,P,\sinit,AP,L}$, where 
    $S$ is an (in)finite set of \textit{states} with \textit{initial state} $\sinit\in S$; 
    $A$ is an (in)finite set of \emph{actions};\footnote{For notational simplicity, we assume every action is enabled in every state (see App.~\ref{app:preliminaries} for the case without this restriction).} 
    $P \colon S\times A \to \distr{\States}$ is a transition function;  
    $AP$ is a finite set of \emph{atomic propositions}; 
    and $L \colon S\to2^{AP}$ is the \emph{labeling function}. 
We say that $\mdp$ is an MDP \emph{with rewards} if it is additionally equipped with a \emph{reward function} $R \colon S\to \mathbb R$. 
For simplicity, we may write $P(s,a,s')$ instead of $P(s,a)(\{s'\})$.
A \emph{finite history} in $\mdp$ of length $n$ is a finite word $h=s_0 a_0 \cdots s_{n-1}a_{n-1}s_n \in S( A S)^n$ for some $n\in\mathbb N$, and we denote its last state $s_n$ by $\last{h}$. 
An \emph{infinite history} $h=s_0 a_0\cdots s_n a_n\cdots$ is defined analogously.
A \emph{policy} $\pi \colon H \to \distr{A}$ of \(\mathcal M\) is a mapping from the set $H$  of finite histories to distributions over actions.
Fixing a policy $\pi$ and state $s \in S$ induces a probability measure $\text{prob}^{s}_{\mdp,\pi}$ over histories. 
We write $\text{prob}_{\mdp,\pi}$ for $\text{prob}^{\sinit}_{\mdp,\pi}$.
For details~on policies and induced probability measures, see~\cite{baier2008principles,BSSstochastic}.

\paragraph{Safe linear temporal logic.}
We express objectives for MDPs in \emph{linear temporal logic} (LTL)~\cite{DBLP:conf/focs/Pnueli77}.
Following~\cite{baier2008principles}, an LTL formula $\formula$ over the atomic propositions $AP$ is generated by the \emph{grammar}
\[
\formula \Coloneqq \top \mid p \mid \neg \formula \mid \formula \land \formula
\mid \mathbf{X}\formula \mid \formula \,\mathbf{U}\, \formula,
\qquad p \in AP.
\]
As standard, we define
$\formula_1 \lor \formula_2 \coloneqq \neg(\neg \formula_1 \land \neg \formula_2)$,
$\mathbf{F}\formula \coloneqq \top \,\mathbf{U}\, \formula$,
and $\mathbf{G}\formula \coloneqq \neg \mathbf{F}\neg \formula$.
A \emph{finite (resp. infinite) trace} is a sequence $\tau = b_0b_1\cdots \in \Sigma^*$ (resp. $\Sigma^\omega$), where the \emph{alphabet}~$\Sigma = 2^{AP}$ specifies which propositions hold at each timestep.
Given an (in)finite history $h=s_0 a_0 s_1 a_1 \cdots$, we denote by $L(h)=L(s_0)L(s_1)L(s_2)\cdots$ %
the (in)finite trace induced by the labeling function. 
The satisfaction relation $\tau,i \models \formula$ for a formula $\formula$ is defined inductively over infinite traces $\tau = b_0 b_1\cdots$ of $\mdp$ as %
\begin{alignat*}{2}
\tau,i &\models p &&\iff p \in b_i,\\
\tau,i &\models \neg \formula &&\iff \tau,i \not\models \formula,\\
\tau,i &\models \formula_1 \land \formula_2
&&\iff \tau,i \models \formula_1 \text{ and } \tau,i \models \formula_2,\\
\tau,i &\models \mathbf{X}\formula &&\iff \tau,i+1 \models \formula,\\
\tau,i &\models \formula_1 \,\mathbf{U}\, \formula_2
&&\iff \exists j \geq i \text{ such that } \tau,j \models \formula_2
\text{ and } \tau,k \models \formula_1 \text{ for all } i \leq k < j.
\end{alignat*}
We write $\tau \models \formula$ as shorthand for $\tau,0 \models \formula$. 
An LTL formula $\formula$ is a \emph{safety formula}~\citep{DBLP:journals/fmsd/KupfermanV01} if every violating trace has a finite witness of violation, \ie for every $\tau \in \Sigma^\omega$ such that $\tau \not\models \formula$, there exists a finite prefix $u \prec \tau$ (called a \emph{bad prefix}) such that for every $\rho \in \Sigma^\omega$, $u\rho \not\models \formula$.
For every safety formula $\formula$, there exists a DFA that accepts exactly all bad prefixes of $\formula$~\cite{baier2008principles}:

\begin{definition}[DFA]
    \label{def:DFA}
    A \emph{deterministic finite automaton} (DFA) over the alphabet $\Sigma$ is a tuple $
    \mathcal{A}=\tuple{Q,\Sigma,\qinit,\delta,F}$,
    where $Q$ is a finite set of states with initial state $\qinit \in Q$,
    $\delta \colon Q\times\Sigma\to Q$ is the transition function, and
    $F \subseteq Q$ is the set of accepting states.
\end{definition}

The transition function extends to finite words as standard:
$\delta^*(q,ua)=\delta(\delta^*(q,u),a)$ for $u\in\Sigma^*$, $a\in\Sigma$. 
A finite word $u \in \Sigma^*$ is accepted by DFA $\mathcal{A}$ iff $\delta^*(\qinit,u)\in F$. 
A \emph{specification}~$\spec=\mathbb P_{\geq \lambda}(\formula)$ combines an LTL formula $\formula$ and a threshold $\lambda \in [0,1]$.
For a policy $\pi$, we write $(\mdp,\pi)\models \mathbb P_{\geq \lambda}(\formula)$
iff $
\text{prob}_{\mdp,\pi}
\bigl(
\{h \mid L(h)\models \formula\}
\bigr)
\geq \lambda$,
\ie the probability for generating a satisfying trace is at least~$\lambda$. 

\paragraph{Safe policy optimization.}
Our goal is to compute a policy $\pi$ that (1)~maximizes the $\gamma$-discounted expected return
$
J_{\mdp}^{\gamma}(\pi)
\coloneqq
\mathbb E_{\mdp,\pi}^{\sinit}
\left[
\sum_{t=0}^{\infty} \gamma^t R(s_t)
\right],
$
with the expectation taken w.r.t.~the probability measure $\text{prob}^{\sinit}_{\mdp,\pi}$, and (2)~ensures the probability of violating a safety LTL formula $\formula$ is below a prescribed threshold $p \in [0,1]$.
Formally, this amounts to solving the \emph{safe policy optimization~problem}:
\[
\pi^\star \in \argmax\nolimits_{\pi}\; J_{\mdp}^{\gamma}(\pi)
\quad
\text{subject to}
\quad
(\mdp,\pi)\models \mathbb P_{\geq {1-p}}(\formula).
\]
If the MDP $\mdp$ is known, $\pi^\star$ can be computed using the probabilistic shielding techniques for MDPs from~\cite{DBLP:conf/aaai/CourtBG25}.
Here, however, we assume that the transition function $P$ is \emph{unknown} and we only have access to \emph{samples} from $P$, rendering existing techniques inapplicable.
Instead, we develop a shielding framework for RMDPs (\cref{sec:shields_RMDPs}) and use it to shield a learned RMDP of the unknown MDP (\cref{sec:unknown_MDPs}).

\begin{definition}[RMDP]
\label{def:RMDP}
A \emph{robust MDP} (RMDP) is a tuple $\rmdp = \RMDP$, where $\States$, $\Actions$, $\sinit$, $\labels$, and $\labelfunc$ are defined as in an MDP, and $\cP \colon S\times A \to 2^{\distr{\States}}$ is an \emph{uncertain transition function} that maps every state-action pair to a set of probability measures over $S$.
\end{definition}

An RMDP \emph{with rewards} is additionally equipped with a reward function $R \colon S\to \mathbb R$. 
An RMDP defines a game between the agent that chooses actions via the policy $\pi$, and an \emph{adversary}~$\nature$ that chooses probability distributions in $\cP$.
Formally, the adversary $\nature$ is a mapping that associates every finite history $h \in H$ to a distribution in $\cP(\last{h}, a)$ for every action $a \in A$. 
Fixing an adversary $\nature$ for $\rmdp$ induces a standard MDP~\cite{DBLP:conf/birthday/SuilenBB0025}, denoted by $\rmdp[\nature]$.
We say that $\rmdp$ satisfies the specification $\mathbb P_{\geq 1-p}(\formula)$, denoted as $(\rmdp,\pi)\models \mathbb P_{\geq 1-p}(\varphi)$, iff $(\rmdp[\nature],\pi)\models \mathbb P_{\geq 1-p}(\varphi)$ for every adversary $\nature$.

\section{Probabilistic Shields for Robust MDPs}
\label{sec:shields_RMDPs}

In this section, we present our shielding approach for RMDPs.
We first define our general notion of shield in \cref{sec:shield_general}, instantiate it to finite RMDPs in \cref{sec:shield_finite}, and show its correctness in~\cref{sec:shield_correctness}.

\subsection{Shields for RMDPs}
\label{sec:shield_general}
A shield may need internal memory to remember enough about the history to determine which actions can still be taken while satisfying the specification.
We encode this memory in a \emph{controllable monitor}, which extends standard monitors (used in, \eg runtime verification~\cite{DBLP:conf/rv/HavelundR18,DBLP:series/lncs/BartocciDDFMNS18}) with auxiliary actions.

\begin{definition}[Controllable monitor]
    \label{def:monitor}
    A \emph{controllable monitor} over an \emph{observation space} \(\Xi\)
    is a tuple $\mathcal D=\tuple{M, U,\minit,\zeta}$,
    where \(M\) is a space of \emph{monitor states},
    \(U\) is a space of \emph{auxiliary actions},
    \(\minit\in M\) is the \emph{initial state}, and $\zeta \colon M\times \Xi\times U\to M$ is a \emph{transition function}.
\end{definition}

We will use the auxiliary action to allocate a remaining \emph{violation budget}, which records how much probability of violating the specification is still allowed over the remaining execution.

\begin{example}
    \label{ex:monitor}
    A DFA (without its accepting states) can be seen as a controllable monitor with singleton auxiliary action $U=\{\ast\}$ and observation space $\Xi = \Sigma$ as the alphabet of the DFA. An NFA, on the other hand, additionally requires auxiliary actions that resolve nondeterminism at each step.
\end{example}

To combine an RMDP with a controllable monitor, we construct a product model based on an \emph{observation map} $\alpha \colon S \to \Xi$ from the RMDP states $S$ into the observation space $\Xi$ of the monitor.
The product state $(s,m)$ contains both the RMDP and monitor state, and the product action $(a,u)$ consists of both an RMDP action $a \in A$ and an auxiliary action $u \in U$.

\begin{definition}[Product RMDP]
    \label{def:product-rmdp-monitor}
    Let $\rmdp=\tuple{S,A,\mathcal P,\sinit,AP,\labelfunc}$
    be an RMDP, let $\alpha \colon S \to \Xi$ be an observation map, and let $\mathcal D=\tuple{M, U,\minit,\zeta}$
    be a controllable monitor over $\Xi$.
    The \emph{product RMDP} of $\rmdp$ and $\mathcal D$ (relative to
    $\alpha$) is the RMDP $\rmdp\otimes_{\alpha} \mathcal D = \tuple{\overline S,\overline A,\overline{\mathcal P},\overline{\sinit},
    AP,\overline{\labelfunc}}$, where $\overline S=S\times M$, $\overline{A}=A\times U$, $\overline{\sinit}=(\sinit,\minit)$, and for every $(s,m)\in \overline S$, 
    \begin{enumerate}
        \item $\overline{\labelfunc}(s,m)=\labelfunc(s)$, and
        \item for every action $(a,u)\in\overline A((s,m))$,
        letting $
        T(s')
        =
        \bigl(s',\zeta(m,\alpha(s'),u)\bigr)
        $, the uncertainty set
        $\overline{\mathcal P}((s,m),(a,u))$
        consists of all pushforward measures
        $\mu\circ T^{-1}$ with $\mu\in\mathcal P(s,a)$.
    \end{enumerate}
\end{definition}

Intuitively, choosing action $(a,u)$ in state $(s,m)$ of the product RMDP means that the next RMDP state $s' \in S$ is given by $\cP(s,a)$, and the next monitor state is $m' = \zeta(m,\alpha(s'),u)$.
\cref{def:product-rmdp-monitor} extends the usual product between an MDP $\mdp$ and a DFA $\mathcal A$, used in model checking~\cite{baier2008principles} and shielding~\cite{ABENTShielding}, with auxiliary actions. 
Indeed, this usual product can be recovered with $\mathcal M\otimes_{\labelfunc} \mathcal A$, where $\labelfunc$ is the labeling function of the MDP, and the DFA is identified with its associated monitor as in \cref{ex:monitor}.

We can now define our notion of a shield.
In a nutshell, a shield specifies, for every observation and monitor state, which randomized choices over (RMDP and auxiliary) actions are allowed.

\begin{definition}[Shield]
\label{def:shield}
Let $\Xi$ be a set of observation states and let $A_\Xi$ be a set of actions. A \emph{shield} over $(\Xi,A_\Xi)$ is a pair $
\mathfrak S=(\mathcal D,\Gamma)$, where $\mathcal D=\tuple{ M,U,\minit,\zeta}$
is a controllable monitor over $\Xi$, and $\Gamma$ is a map from $\Xi\times M$ to  $2^{\distr{A_\Xi \times U}}$ that models the allowed distributions over actions.
\end{definition}

The shield in \cref{def:shield} is defined over an \emph{observation space} $\Xi$, rather than directly over the full state space $S$.
This feature, in particular, allows us to construct shields on a \emph{safety abstraction} of the RMDP~\cite{ABENTShielding}.
Let $\rmdp=\tuple{S,A,\mathcal P,\sinit,AP,\labelfunc}$ be an RMDP, let $\mathfrak S=(\mathcal D,\Gamma)$ be a shield over $(\Xi,A_\Xi)$, and let $\alpha \colon S \to \Xi$ be an observation map.
We say that the shield $\mathfrak S$ \emph{acts on} $(\rmdp,\alpha)$ if $A_{\Xi}=A$.
As visualized in \cref{fig:shield}, at runtime, the shield observes only the abstract state $\xi = \alpha(s)$ and monitor state~$m$, and enforces the agent to select a randomized choice over actions contained in $\Gamma(\xi,m)$.

\begin{figure}[t!]
    \centering
    \scalebox{0.85}{%
      \input{figures/shield}
    }
    \caption{The shield $\mathfrak S = (\mathcal D, \Gamma)$ deployed on an RMDP $\rmdp$ with an abstraction map $\alpha$.}
    \label{fig:shield}
\end{figure}

\subsection{A Shield for Finite RMDPs}
\label{sec:shield_finite}
We now instantiate the general shield from \cref{def:shield} to \emph{finite} RMDPs.
Throughout this section, we assume we are given an RMDP $\rmdp$ with \textbf{finite} sets $S$ and $A$.
Inspired by~\cite{DBLP:conf/aaai/CourtBG25}, our construction uses a \emph{robust upper bound} on the probability of reaching a bad state, formalized as follows.
\begin{definition}
    \label{def:robust_Bellman}
    An $(\rmdp,U)$-\emph{inductive value function} is a function $\beta:S\to [0;1]$ such that
    \(
    \left(\mathcal B_{\rmdp}^{U}(\beta)\right)(s) \leq \beta(s)
    \)
    for every $s \in S$, where \(\mathcal B_{\rmdp}^{U}\) is the robust Bellman operator defined as
    \[
    \left(\mathcal B_{\rmdp}^{U}(\beta)\right)(s)
    =
    \begin{cases}
    1, & \text{ if } s\in U,\\[1mm]
    \displaystyle \inf_{a\in A}\ \sup_{\mu\in\mathcal P(s,a)} \mathbb E_{s'\sim \mu}
    \left[\beta(s')\right],
    & \text{otherwise.}
    \end{cases}
    \]
\end{definition}
In our case, the states $U$ correspond to violating a given safety LTL formula $\varphi$.
This~violating set is defined as $S \times F$ on the product $\rmdp \otimes_L \mathcal A$ between the RMDP $\rmdp$ and DFA $\mathcal A=\tuple{Q,2^{AP},\qinit,\delta,F}$ that recognizes the bad prefixes of $\formula$.
Thus, we use an $(\rmdp\otimes_L\mathcal A, S \times F)$-inductive value function $\beta \colon S \times Q \to [0,1]$.
Intuitively, $\beta(s,q)$ is an upper bound on the probability of violating $\varphi$ in state $(s,q)$ under an optimal policy and worst-case adversary.
To enforce the probability threshold $p \in [0,1]$, we thus require that $\beta(\sinit,\qinit) \leq p$.
Furthermore, we define
\(
    \mathcal V_\beta=\prod\nolimits_{(s,q)\in S\times Q} \left[ \beta(s,q),1 \right].
\)
Each $v \in \mathcal V_\beta$ assigns an admissible \emph{violation budget} $v(s,q) \in [0,1]$ to every product state $(s,q)$.
\begin{definition}[Shield for finite RMDP]
\label{def:shield_finite_rmdp}
Let $\rmdp$, $\mathcal A$, and $\beta$ be defined as above.
The shield $\mathfrak S(\rmdp,\mathcal A,\beta) = (\mathcal D,\Gamma)$ is defined as an instantiation of \cref{def:shield} as follows:
\begin{enumerate}
    \item $\mathcal D = \tuple{M, \mathcal V_\beta,\minit,\zeta}$ is a controllable monitor over $S$ with 
    states $M \coloneqq Q \times [0,1]$ and $\minit \coloneqq (\qinit,p)$,
    auxiliary actions $\mathcal V_\beta$,
    and
    transition function $\zeta \colon M \times S \times \mathcal V_\beta \to M$ defined for all $(q,y) \in M$, $s \in S$, and $v \in \mathcal V_\beta$ as
    \(
    \zeta((q,y),s,v)
    = \bigl(\delta(q,\labelfunc(s)),
    v(s,\delta(q,\labelfunc(s)))\bigr);
    \)
    \item for all states $(s,q,y)$ of $\rmdp\otimes\mathcal D$, the set $\Gamma(s,q,y)$ contains exactly those distributions $\overline\rho \in \distr{A \times \mathcal V_\beta}$ such that
    \(
    \mathbb E_{(a,v)\sim\overline\rho}
    \Big[
    \sup_{\mu\in\mathcal P(s,a)}
    \mathbb E_{s'\sim\mu}
    \left[
    v\bigl(s',\delta(q,\labelfunc(s'))\bigr)
    \right]
    \Big]
    \le y.
    \)
\end{enumerate}
\end{definition}

Intuitively, $\Gamma$ contains those distributions $\overline\rho$ such that the \emph{worst-case expected next budget} remains below the \emph{current budget $y$}.
Since the initial budget is at most $p$, every policy allowed by the shield keeps the worst-case probability of reaching $S \times F$ (and thus of a bad prefix) at most~$p$.

\subsection{Sound and Optimal Shields on Safety Abstractions}
\label{sec:shield_correctness}
We now analyze the guarantees of the shield from \cref{def:shield_finite_rmdp}.
In this section, we return to the setup in \cref{fig:shield}, where $\rmdp$ may be an \emph{infinite} RMDP but the observation space $\Xi$ is \emph{finite}.
Thus, we in fact analyze the shield $\mathfrak S(\rmdp/\alpha,\mathcal A,\beta)$ for the finite safety abstraction $\rmdp/\alpha$, which is defined as the standard quotient (see~\cref{def:quotient_RMDP} in App.~\ref{app:preliminaries}).
Yet, the shield is deployed on the (non-abstracted) RMDP $\rmdp$, so we need to reason about policies on $\rmdp[\overline\nature] \otimes_\alpha \mathcal D$ for different adversaries $\overline\nature$, \ie on the product of the induced MDP $\rmdp[\overline\nature]$ and the monitor $\mathcal D$, relative to the abstraction map $\alpha$.

\begin{definition}[Compliant Policies]
    A policy $\overline\pi$ on $\rmdp \otimes_\alpha \mathcal D$ is \emph{compliant} with the shield $\mathfrak S$ acting on $(\rmdp, \alpha)$ up to time $T$ under the adversary $\overline\nature$, if $\overline\pi$ only outputs distributions consistent with $\Gamma$,~\ie
    \[
    \prob_{\rmdp[\overline\nature]\otimes_\alpha\mathcal D,\overline\pi}
    \left[
    \forall t < T,\;
    \overline\pi(\cdot\mid \overline h_t)
    \in
    \Gamma(\alpha(s_t),m_t)
    \right]
    =1,
    \]
    where $\overline h_t$ denotes the product history up to time $t$, and $(s_t, m_t)$ is its last product state.
\end{definition}

We say that a shield is \emph{realizable} if, after following any compliant policy up to time $t$, there always exists a continuation policy that is compliant under every adversary.

\begin{definition}[Shield realizability]
    \label{def:shield_reliability}
    A shield \(\mathfrak S=(\mathcal D,\Gamma)\) acting on \((\rmdp,\alpha)\) is
    \emph{realizable} over \((\rmdp,\alpha)\) if, for
    all \(t\in\Nat\), adversaries \(\overline\nature\) of
    \(\mathcal M\otimes_\alpha\mathcal D\), and policies
    \(\overline\pi\) compliant with \(\mathfrak S\) up to time~\(t\) under \(\overline\nature\), there exists a policy $\overline\pi'$ that 
        (1)~coincides with $\overline\pi$ on all histories of length up to $t$, and
        (2)~is compliant with $\mathfrak S$ under every adversary $\overline\nature'$ that coincides with $\overline\nature$ on all histories of length up to~$t$.
\end{definition}

As a final ingredient for defining soundness and optimality, we need to project policies $\overline\pi$ on $\rmdp \otimes_\alpha \mathcal D$ onto the original RMDP $\rmdp$.
Intuitively, the projected policy $\overline\pi^\downarrow$ is obtained from $\overline\pi$ by ignoring the monitor component, \ie at each history, it selects actions according to the marginal distribution induced by $\pi$ over product histories consistent with that history (see~\cref{def:projected_policy} for a formal definition).

\begin{definition}[Shield soundness]
    \label{def:shield_soundness}
    Let \(\spec = \mathbb P_{\geq {1-p}}(\formula)\) be a specification, and let \(\rmdp\) be an RMDP equipped with
    an observation map \(\alpha \colon S \to \Xi\). A shield
    \(\mathfrak S=(\mathcal D,\Gamma)\) acting on \((\rmdp,\alpha)\) is
    \emph{sound} %
    for \(\spec\) if, for
    every policy $
    \overline\pi$
    on \(\rmdp\otimes_\alpha\mathcal D\) compliant with \(\mathfrak S\), we have $
    (\rmdp,\,\overline\pi^{\downarrow}) \models \Phi$.
\end{definition}

\begin{definition}[Shield approximate optimality]
    \label{def:shield_optimality}
    Let $\spec = \mathbb P_{\geq {1-p}}(\formula)$ be a specification, let $\rmdp$ be an RMDP equipped with observation map $\alpha \colon S \to \Xi$, let $\epsilon>0$, and let $\gamma\in(0,1]$.
    A shield $\mathfrak S=(\mathcal D,\Gamma)$ acting on $(\rmdp,\alpha)$ is
    $(\epsilon,\gamma)$-\emph{optimal} for $\spec$ if, for every
    policy $\pi$ of $\rmdp$ such that $(\rmdp,\pi)\models\spec$, there
    exists a policy $\overline\pi$ of $\rmdp\otimes_\alpha\mathcal D$
    compliant with $\mathfrak S$ such that, for every adversary~$\nature$,
    \[
    J^\gamma_{\rmdp[\nature]}(\overline\pi^{\downarrow})
    \ge
    J^\gamma_{\rmdp[\nature]}(\pi)-\epsilon.
    \]
\end{definition}

We now state the main results of this section.
Recall $\rmdp$ is an RMDP equipped with a \textbf{finite} safety abstraction $\alpha:S\to\Xi$, defining the quotient RMDP $\rmdp/\alpha = \tuple{\Xi,A_\alpha,\mathcal P_\alpha,\xiinit,AP,\labelfunc_\alpha}$.
Also, let \(\spec=\mathbb P_{\ge 1-p}(\formula)\), let
\(\mathcal A=\tuple{Q,2^{AP},\qinit,\delta,F}\) recognize the bad
prefixes of \(\formula\), and let $\beta:\Xi\times Q\to[0,1]$ be an $((\rmdp/\alpha)\otimes_{\labelfunc_\alpha}\mathcal A,\Xi\times F)$-
inductive value function
such that $
\beta(\xiinit,\qinit)\le p$.

\begin{theorem}[Realizability and soundness]
    \label{thm:RMDP_soundness}
    The shield $\mathfrak S(\rmdp/\alpha,\mathcal A,\beta)$ acts on
    $(\rmdp,\alpha)$, is realizable over
    $(\rmdp,\alpha)$, and is sound over $(\rmdp,\alpha)$
    for $\spec$.
\end{theorem}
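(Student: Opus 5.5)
The proof has three parts, taken in the order of the statement: the shield acts on the RMDP, it is realizable, and it is sound.

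\emph{Acting on the RMDP.} This holds by construction. The shield $\mathfrak S(\rmdp/\alpha,\mathcal A,\beta)$ is defined over the observation space $\Xi$ and the action set $A_\alpha$ of the quotient. By \cref{def:quotient_RMDP}, $A_\alpha=A$ (the quotient keeps the actions and merges states), so $A_\Xi=A$.

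\emph{Realizability.} The key invariant is that every reachable monitor state $(q,y)$ in observation $\xi$ satisfies $y\ge \beta(\xi,q)$. This holds initially, since $y=p\ge\beta(\xiinit,\qinit)$. It is preserved because every auxiliary action $v\in\mathcal V_\beta$ sets the next budget to $v(\xi',q')\ge\beta(\xi',q')$.

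Given this invariant, $\Gamma(\xi,q,y)$ is nonempty, which follows from the inductive inequality $\mathcal B(\beta)\le\beta$.
\begin{itemize}
\item If $(\xi,q)\notin \Xi\times F$, then $\inf_a\sup_\mu \mathbb E[\beta]\le\beta(\xi,q)\le y$.
\item If the infimum is attained (as it is, since $A$ is finite), pick a minimizing $a^\ast$ and take the Dirac distribution on $(a^\ast,\beta)$. Here $\beta$ itself is used as an element of $\mathcal V_\beta$.
\item If $(\xi,q)\in \Xi\times F$, then $\beta=1$, so $y=1$ and any choice with budget $\le1$ is allowed.
\end{itemize}
Since $\Gamma$ depends only on $(\alpha(s),m)$ and not on the adversary, the continuation policy is defined as follows: it copies $\overline\pi$ up to time $t$, and afterwards it plays this greedy element of $\Gamma$ at every history. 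Histories of length $t$ reached with probability zero under the old adversary need the same treatment, and playing the greedy element there also handles them. Compliance then holds under every adversary $\overline\nature'$ agreeing with $\overline\nature$ up to $t$, because the invariant is adversary-independent.

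\emph{Soundness.} Fix a compliant $\overline\pi$ and an arbitrary adversary $\nature$ of $\rmdp$, and work in the induced product MDP. I expect this step to be the main obstacle, because of the interaction between the adversary on the concrete RMDP and the worst case taken in the abstract uncertainty sets. The plan is to show that the budget process $Y_t$ (the $y$-component of the monitor) is a supermartingale with respect to the product history filtration. Compliance gives
\[\mathbb E\bigl[\,v(\xi_{t+1},q_{t+1})\,\bigm|\,\overline h_t, (a,v)\bigr]\le \sup_{\mu\in\mathcal P_\alpha(\xi_t,a)}\mathbb E_{\xi'\sim\mu}\bigl[v(\xi',\delta(q_t,\labelfunc_\alpha(\xi')))\bigr],\]
because the pushforward under $\alpha$ of the adversary's chosen $\mu\in\mathcal P(s_t,a)$ belongs to $\mathcal P_\alpha(\alpha(s_t),a)$ by definition of the quotient. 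It also uses $\labelfunc=\labelfunc_\alpha\circ\alpha$. Averaging over $(a,v)\sim\overline\pi(\cdot\mid\overline h_t)$ and applying the $\Gamma$ constraint gives $\mathbb E[Y_{t+1}\mid\overline h_t]\le Y_t$.

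Next, once the DFA component enters $F$, the invariant forces $Y_t\ge\beta=1$. Hence, for each $T$,
\[\prob(\text{reach } F\text{ by } T)\le\mathbb E[Y_T]\le Y_0=p.\]
This uses $Y_t\ge0$ and the fact that the budget stays in $[0,1]$. Letting $T\to\infty$ gives that the probability of a bad prefix is at most $p$. Since $\mathcal A$ recognizes exactly the bad prefixes of the safety formula $\formula$, we get $\prob(L(h)\not\models\formula)\le p$.

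Finally, the measure on traces induced by $\overline\pi$ on the product coincides with the one induced by $\overline\pi^\downarrow$ on $\rmdp[\nature]$. This is the defining property of the projection (\cref{def:projected_policy}), and it should be checked on cylinder sets. It yields $(\rmdp[\nature],\overline\pi^\downarrow)\models\spec$ for every $\nature$. Some measurability care is needed because the auxiliary actions $v$ range over a continuum, which is handled by the appendix constructions.
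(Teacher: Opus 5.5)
Your proposal is correct and follows the paper's proof essentially step for step. The invariant $y\ge\beta(\xi,q)$ with the witness $\Dirac{a^\ast}\otimes\Dirac{\beta}$ is exactly the paper's non-blocking argument; you build the continuation policy directly from this explicit selector, where the paper goes through a general non-blocking-implies-realizable lemma with a measurable selection. Soundness is the same budget-supermartingale argument, relying on the fact that pushforwards of $\mathcal P(s,a)$ land in $\mathcal P_\alpha(\alpha(s),a)$.

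The one step you leave implicit is that $\prob(\text{reach } F \text{ by } T)\le\mathbb E[Y_T]$ needs $F$ to be absorbing along runs from $\qinit$. This holds because the set of bad prefixes is closed under extension. The paper avoids the issue by bounding $\mathbb E[Y_{\tau_F\wedge T}]$ for the stopped process instead.
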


We say an RMDP $\rmdp$ is \emph{conditionally deterministic} w.r.t.~abstraction $\alpha$ if every reachable abstract history determines a unique concrete history, \ie for every abstract history $\widehat h_t=\xi_0a_0\xi_1a_1\cdots a_{t-1}\xi_t$
with \(\xi_0=\alpha(\sinit)\), there is at most one concrete history $h_t=s_0a_0s_1a_1\cdots a_{t-1}s_t$
such that \(s_0=\sinit\), \(\alpha(s_i)=\xi_i\) for all \(i\le t\), and
\(h_t\) has positive probability under some adversary of \(\rmdp\). 
We remark that this assumption is trivially satisfied if the safety abstraction $\alpha$ is the identity. 
As another example, this assumption is satisfied by the Pacman environment used for our experimental evaluation in Section \ref{sec:experiments}. 
This assumption is needed for \cref{thm:RMDP_optimality} because an optimal safe policy might otherwise induce different safety-relevant behaviors on different concrete states associated with the same abstract history, in which case it might become impossible to certify the safety of that safe optimal policy without some further knowledge of the dynamics of the concrete MDP.

\begin{theorem}[Approximate optimality]
    \label{thm:RMDP_optimality}
    Let
    \(
    \epsilon=
    \frac{\|\beta-\beta^{\infty}\|_\infty}
    {\|\beta-\beta^{\infty}\|_\infty+p-\beta(\xiinit,\qinit)},
    \)
    with \(\beta^\infty\) the least fixed point of 
    \(
    B^{\Xi \times F}_{(M_R/\alpha)\otimes_{L_\alpha} A}
    \),
    suppose \(\beta(\xiinit,\qinit)<p\), and that $\rmdp$ is conditionally deterministic w.r.t. $\alpha$.
    Then, for the shield \(\mathfrak S(\rmdp/\alpha,\mathcal A,\beta)\), we have:
    \begin{enumerate}
        \item if the total undiscounted return is defined on all infinite paths, and if its absolute value is
        uniformly bounded by \(B\), then
        \(\mathfrak S(\rmdp/\alpha,\mathcal A,\beta)\) is
        \(
        (2B\epsilon,1)\text{-optimal}
        \)
        over \((\rmdp,\alpha)\) for \(\spec\);
        \item if \(0<\gamma< 1\) and $
        \sup_{(s,a)\in S\times A}|R(s,a)|\le Z$,
        then \(\mathfrak S(\rmdp/\alpha,\mathcal A,\beta)\) is
        \(
        (\frac{2Z\epsilon}{1-\gamma},\gamma)\text{-optimal}.
        \)
    \end{enumerate}
\end{theorem}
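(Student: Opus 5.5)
The plan is a mixing argument. The compliant policy follows the given safe policy $\pi$ with probability $1-\epsilon$. With the remaining probability $\epsilon$ it follows a ``fallback'' policy $\sigma$ that is greedy with respect to $\beta$. The violation budgets along the $\pi$-branch are taken to be $\pi$'s own worst-case conditional violation probabilities, inflated by $d \coloneqq \|\beta-\beta^\infty\|_\infty$.

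\textbf{Step 1 (the fallback branch).} From the inductive inequality $\mathcal B^{\Xi\times F}(\beta)\le\beta$ on the finite quotient product, I extract an action $a$ attaining (or nearly attaining) the infimum, using finiteness of $A_\alpha$. I set the auxiliary budget to $v=\beta$. Then $\sigma$ is compliant from any state $(\xi,q,y)$ with $y\ge\beta(\xi,q)$. This is essentially the realizability argument of \cref{thm:RMDP_soundness}, which I take as given.

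\textbf{Step 2 (the $\pi$-branch budgets).} For a history $h$ of $\rmdp$, let $w(h)$ be the supremum over adversaries (agreeing with the past) of the conditional probability that $\pi$ reaches $S\times F$ from $h$. Three facts are needed.
\begin{enumerate}
\item $w(\sinit)\le p$, since $\pi$ is safe.
\item $w(h)\ge\beta^\infty(\alpha(\last h),q_h)\ge \beta(\alpha(\last h),q_h)-d$. Here I use that $\beta^\infty$ is the optimal robust violation probability on the quotient. Any concrete policy induces an abstract one, because the quotient's uncertainty sets contain the pushforwards.
\item The robust supermartingale inequality holds:
\[
w(h)\ \ge\ \mathbb E_{a\sim\pi(h)}\Big[\sup_{\mu\in\mathcal P(\last h,a)}\mathbb E_{s'\sim\mu}\big[w(h a s')\big]\Big].
\]
This follows because adversaries are history-dependent, hence rectangular, so the sup can be chosen independently at each successor history.
\end{enumerate}
I then define the budget $y(h)=\min\{1,\,w(h)+d\}$, which satisfies $y\ge\beta$ by fact 2. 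By fact 3 and monotonicity of the truncation, the distribution $\pi(h)$ paired with the budget map $v(\xi',q')=y(h a s')$ satisfies the $\Gamma$-constraint at budget $y(h)$. Conditional determinism is used exactly here. The concrete successor $s'$ is determined by $h$, $a$ and $\alpha(s')$, so $v$ is a legitimate function on $\Xi\times Q$ and the whole construction is well defined on product histories. The same fact makes $\pi$ itself expressible from what the shield observes.

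\textbf{Step 3 (combining the branches).} At time $0$ the policy randomizes over pairs (action, budget), tagging the branch through the chosen auxiliary budget map so that the monitor state records it. The initial compliance condition becomes
\[
(1-\epsilon)(p+d)+\epsilon\,\beta(\xiinit,\qinit)\le p .
\]
This is equivalent to $\epsilon\ge d/(d+p-\beta(\xiinit,\qinit))$, which is precisely the stated $\epsilon$, and it is valid because $\beta(\xiinit,\qinit)<p$. After time $0$, compliance propagates on each branch by Steps 1 and 2.

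\textbf{Step 4 (return bound).} The projection $\overline\pi^\downarrow$ is the $(1-\epsilon,\epsilon)$ mixture of $\pi$ and $\sigma^\downarrow$. Under any adversary $\nature$, the returns of $\overline\pi^\downarrow$ and $\pi$ therefore differ by at most $\epsilon$ times the range of the return. That range is $2B$ in the undiscounted case and $2Z/(1-\gamma)$ in the discounted case, which gives both items of the theorem.

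The main obstacle is Step 2. Two points are delicate there: making $w$ well defined and measurable as a worst case over history-dependent adversaries, and justifying the interchange that yields the supermartingale inequality with the $\sup_\mu$ inside the expectation over actions. I would first prove the inequality for finite horizons by backward induction and then pass to the limit by monotone convergence of the reachability probabilities.
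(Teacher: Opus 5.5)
Your proposal is correct and is essentially the paper's proof, which goes through its $\mathrm{HR}$-$\epsilon$-TV-completeness theorem. The paper uses the same budgets $\min\{1,V_\pi(\widetilde h)+d\}$, made well defined on $\Xi\times Q$ via conditional determinism, the same $\beta$-greedy fallback with auxiliary action $\beta$, and the same initial identity $(1-\epsilon)(p+d)+\epsilon\,\beta(\xiinit,\qinit)=p$; it then bounds the return gap by $\mathrm{TV}\le\epsilon$ times the span, which is your mixture computation. The one point to change is branch selection at later histories: the paper decides by the test $y\ge\min\{1,w(h)+d\}$ rather than by a tag, and you should do the same, because your tag is not guaranteed to be recorded. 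When the budgets happen to coincide, neither the monitor state nor even the product history distinguishes the branches, and such a tie can safely be resolved in favour of the $\pi$-branch because the test then holds anyway.
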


\cref{thm:RMDP_soundness,thm:RMDP_optimality} show that the shield in \cref{def:shield_finite_rmdp} is sound and approximately optimal, respectively.
Furthermore, if the value function $\beta$ used to define the shield is equal to the exact least fixed point~$\beta^\infty$, then our shield is (exactly) optimal, as long as the corresponding return is well-defined.

\section{Shielding Unknown MDPs via Learned RMDPs}
\label{sec:unknown_MDPs}

We use the shielding approach from~\cref{sec:shields_RMDPs} to shield unknown finite MDPs learned as an RMDP.
A common approach to learning an unknown (finite) MDP is to estimate transition probabilities from samples and represent the resulting uncertainty via intervals~\cite{DBLP:conf/qestformats/MeggendorferWW25,DBLP:conf/cdc/NazeriBSSA25,DBLP:journals/jair/BadingsRAPPSJ23,DBLP:conf/nips/SuilenS0022,DBLP:conf/tacas/SchnitzerAP25}. 
For each state-action $(s,a)$ and next state $s'$, we construct an interval $[\TrL(s,a,s'), \TrU(s,a,s')] \subseteq [0,1]$ and define the uncertainty set
\begin{equation*}
    \cP(s,a) = \left\{ \mu \in \distr{\States} \;:\; \forall s' \in \States,\; \mu(s') \in [\TrL(s,a,s'), \TrU(s,a,s')] \right\}.
\end{equation*}
Suppose that, for a fixed pair $(s,a)$, we obtain $N_{s,a}$ samples from $P(s,a)$, out of which $N_{s,a,s'}$ led to a transition to $s'$.
For any confidence budget $\conf \in (0,1)$, the Clopper-Pearson confidence interval~\cite{clopper1934use} provides a \emph{probably approximately correct} (PAC) estimator of the unknown probability $P(s,a,s')$:
\begin{equation*}
    \Prob\big\{ P(s,a,s') \in [\TrL(s,a,s'), \TrU(s,a,s')] \big\} \ge 1 - \conf,
\end{equation*}
which is defined via the inverse CDF of the beta distribution $\mathrm{Beta}(\alpha, \nu)$, denoted by $\mathrm{Beta}^{-1}(\cdot; \alpha, \nu)$:
\begin{align*}
\TrL(s,a,s') &=
\begin{cases}
0, & N_{s,a,s'} = 0,\\
\mathrm{Beta}^{-1}(\tfrac{\conf}{2};\,\, N_{s,a,s'},\, N_{s,a}-N_{s,a,s'}+1), \,\,\,\,\,\,\,\,\,\, & \text{otherwise},
\end{cases}
\\[0.0em]
\TrU(s,a,s') &=
\begin{cases}
1, & N_{s,a,s'} = N_{s,a},\\
\mathrm{Beta}^{-1}(1-\tfrac{\conf}{2};\,\, N_{s,a,s'}+1,\, N_{s,a}-N_{s,a,s'}), & \text{otherwise}.
\end{cases}
\end{align*}
By repeating this procedure for every state-action pair, we obtain the transition function $\cP$ for an~RMDP such that
$
\Prob\left\{ P \in \cP \right\} \geq 1-\numEdges \cdot \conf,
$
with $\numEdges \leq |S|^2 \cdot |A|$ the number of transitions in the~MDP.

\begin{remark}
    \label{remark:minimum_probability}
    Without further assumptions on the MDP, the Clopper-Pearson confidence interval cannot prove the absence of a transition $(s,a,s')$, even if $N_{s,a,s'} = 0$ and $N_{s,a}$ is high.
    A common workaround is to assume a minimum probability $p_\text{min}$ for each transition in the MDP~\cite{DBLP:conf/qestformats/MeggendorferWW25,DBLP:journals/tocl/DacaHKP17}.
    If the sum of the lower bounds $\check{P}(s,a,s')$ for all states $s' \in \States$ with $N_{s,a,s'} > 0$ exceeds $1-p_\text{min}$, then the probability of having overlooked another transition falls within the confidence budget.
\end{remark}
In our experiments, we either assume that (a)~the graph of the MDP (and thus $\numEdges$) is known, or (b)~the graph of the MDP is unknown, but the minimum probability $p_\text{min}$ is known (and $\numEdges = |S|^2 \cdot |A|$).

\paragraph{Shielding unknown MDPs.}
Finally, we use our approach from \cref{sec:shields_RMDPs} to shield a learned RMDP $\widehat{\mdp}$, representing an \textbf{unknown} MDP $\mdp = \tuple{S,A,P,\sinit,AP,\labelfunc,\rewardfunc}$~equipped with a safety abstraction \(\alpha \colon S\to\Xi\) into a \textbf{finite} set $\Xi$.
We write $\mdp/\alpha = \tuple{\Xi,A_\alpha,P_\alpha,\xiinit,AP,\labelfunc_\alpha}$
for the quotient MDP (cf.~\cref{def:quotient_RMDP}), $\spec=\mathbb P_{\ge 1-p}(\formula)$ is a specification, and  $\mathcal A$ is a DFA for the bad prefixes of $\formula$.
In the following, we assume we have learned a finite RMDP 
such that the \emph{true $P_\alpha$} is contained in the learned set $\widehat\cP$,
\ie $P_\alpha(\xi,a)\in\widehat{\mathcal P}(\xi,a)$ 
$\,\forall (\xi,a) \in \Xi \times A_\alpha$.
In practice, this holds with probability $\geq 1-\numEdges \cdot \conf$, so this confidence level carries over to \cref{thm:unknown_MDP_soundness,thm:unknown_MDP_optimality} below.

Let
\(\widehat\beta:\Xi\times Q\to[0,1]\) be an $
(\widehat{\mdp}\otimes_{\labelfunc_\alpha}\mathcal A,\Xi\times F)$-inductive value function
such that $
\widehat\beta(\xiinit,\qinit)\le p$.
Then, the shield $\mathfrak S(\widehat{\mdp},\mathcal A,\widehat\beta)$ computed on the learned RMDP $\widehat\mdp$ is sound for the unknown MDP~$\mdp$:

\begin{theorem}[Soundness for the unknown MDP]
\label{thm:unknown_MDP_soundness}
The shield $
\mathfrak S(\widehat{\mdp},\mathcal A,\widehat\beta)$
acts on \((\mdp,\alpha)\), and every compliant policy $\overline\pi$ satisfies the specification $\Phi$ on the true MDP $\mdp$, \ie $(\mdp,\overline\pi^\downarrow) \models \mathbb P_{\ge 1-p}(\formula)$.

\end{theorem}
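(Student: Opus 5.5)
Our plan is to reduce the statement to \cref{thm:RMDP_soundness} by viewing the true MDP $\mdp$ as an RMDP whose adversaries are "inside" the learned RMDP $\widehat\mdp$.

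\textbf{The auxiliary RMDP.} Define $\rmdp'$ on the concrete state space $S$ with uncertainty sets
\[
\mathcal P'(s,a)=\{\mu\in\distr{S} : \mu\circ\alpha^{-1}\in\widehat{\mathcal P}(\alpha(s),a)\}.
\]
By the standing assumption $P_\alpha(\xi,a)\in\widehat{\mathcal P}(\xi,a)$, and by the definition of the quotient (\cref{def:quotient_RMDP}), we have $P(s,a)\circ\alpha^{-1}=P_\alpha(\alpha(s),a)\in\widehat{\mathcal P}(\alpha(s),a)$. Hence $P(s,a)\in\mathcal P'(s,a)$, and the stationary adversary $\nature_P$ that always picks $P(s,a)$ satisfies $\rmdp'[\nature_P]=\mdp$.

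\textbf{Reducing to \cref{thm:RMDP_soundness}.} Next we check that the quotient $\rmdp'/\alpha$ has exactly the uncertainty sets of $\widehat\mdp$. If the quotient construction of \cref{def:quotient_RMDP} instead yields a set $\mathcal P'_\alpha(\xi,a)$ that is only contained in $\widehat{\mathcal P}(\xi,a)$, this suffices. The robust Bellman operator is monotone in the uncertainty sets, because a smaller set gives a smaller supremum. Therefore $\widehat\beta$ is also an $((\rmdp'/\alpha)\otimes\mathcal A,\Xi\times F)$-inductive value function. For the same reason, the set $\Gamma$ of $\mathfrak S(\widehat\mdp,\mathcal A,\widehat\beta)$ is contained in the set $\Gamma$ of $\mathfrak S(\rmdp'/\alpha,\mathcal A,\widehat\beta)$, since the constraint only involves $\sup_\mu$, which can only decrease. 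Consequently, any policy compliant with the learned shield is compliant with the shield of \cref{thm:RMDP_soundness} instantiated on $\rmdp'$.

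\textbf{Main step.} \cref{thm:RMDP_soundness} applied to $\rmdp'$ gives that the shield acts on $(\rmdp',\alpha)$, and hence on $(\mdp,\alpha)$, since the action set is $A$. It also gives $(\rmdp'[\nature],\overline\pi^\downarrow)\models\spec$ for every adversary $\nature$, and in particular for $\nature_P$. This yields $(\mdp,\overline\pi^\downarrow)\models\spec$.

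\textbf{Expected obstacle: compliance on a single MDP.} In the statement, "compliant" refers only to the single MDP $\mdp$, that is, compliance almost surely under $\mdp\otimes_\alpha\mathcal D$. \cref{def:shield_soundness}, however, quantifies compliance over adversaries of the product. Handling this mismatch is the main obstacle. I would either verify that the proof of \cref{thm:RMDP_soundness} only uses compliance under the particular adversary whose measure is evaluated, or modify $\overline\pi$ off the support of $\mathrm{prob}_{\mdp\otimes\mathcal D,\overline\pi}$. Such a modification would replace $\overline\pi$ by a compliant continuation, which exists by realizability in \cref{thm:RMDP_soundness}. It changes neither $\overline\pi^\downarrow$ on $\mdp$-reachable histories nor the violation probability.

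A secondary check is that the pushforward measures in \cref{def:product-rmdp-monitor} commute with $\alpha$. This ensures that the product with the monitor over $\Xi$ behaves identically whether it is built from $\rmdp'$ or from $\mdp$ under $\nature_P$.
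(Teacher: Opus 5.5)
Your proposal is correct and follows essentially the paper's route: lift $\widehat{\mdp}$ to a concrete RMDP whose quotient is $\widehat{\mdp}$ and which contains $\mdp$ as an induced MDP, apply \cref{thm:RMDP_soundness}, and resolve the single-MDP compliance mismatch, where either of your fixes works (the supermartingale argument in that proof only uses compliance under the adversary being evaluated; for the patch, what you actually need is the memoryless compliant selector on $D_\Gamma$ from the non-blocking argument, which is exactly what the paper uses, rather than the fixed-time continuations given by the realizability definition). The only difference is that you use the full-preimage lift $\mathcal P'(s,a)=\{\mu:\mu\circ\alpha^{-1}\in\widehat{\mathcal P}(\alpha(s),a)\}$ instead of the paper's conditional lift $\widetilde{\mdp}$: by surjectivity of $\alpha$ its quotient is exactly $\widehat{\mdp}$, so your ``only contained'' hedge is unnecessary, and its graph is measurable as a preimage, which suffices for soundness, whereas the paper's tighter lift is chosen because it is reused in the completeness results, which need $\mathrm{TV}(P(s,a),\mu)\le\eta$ for every lifted $\mu$ and conditional determinism of the lift.
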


We now present our optimality results, which bound the gap in expected return between the optimal policy under the shield $\mathfrak S(\widehat{\mdp},\mathcal A,\widehat\beta)$, and the optimal safe policy on the true MDP $\mdp$.
These bounds depend on the total variation abstraction error
\(
\eta
\coloneqq
\sup_{(\xi,a) \in \Xi \times A_\alpha}
\sup_{\widehat P\in\widehat{\mathcal P}(\xi,a)}
\mathrm{TV}(P_\alpha(\xi,a),\widehat P)
\),
the least fixed points \(\widehat\beta^\infty\) and \(\beta^\infty\) of the Bellman operators $\mathcal B_{\widehat{\mdp}\otimes_{\labelfunc_\alpha}\mathcal A}^{\Xi\times F}$ and~$\mathcal B_{(\mdp/\alpha)\otimes_{\labelfunc_\alpha}\mathcal A}^{\Xi\times F}$ as in \cref{def:robust_Bellman}, and the parameter $q_\text{min} = \frac{p_\text{min}\left(\mathcal M\right) \cdot p_\text{min}(\widehat\mdp)}{p_\text{min}(\mathcal M/\alpha)}$, where $p_\text{min}(\cdot)$ denotes the minimum transition probability of the respective model.
Finally, let $H_{\max}$ be a uniform upper bound on the expected hitting time to either a bad automaton state or a safe absorbing component, taken over all deterministic \(\mathcal A\)-memory policies and all transitions in $\widehat\cP$.
As a conservative bound, we may take $H_{\max}= \frac{|S| \cdot |Q|}{ (q_\text{min})^{|S| \cdot |Q|}}$.

In \cref{thm:unknown_MDP_optimality}, we again use optimality as per \cref{def:shield_optimality} and define 
\(
\widehat\epsilon_\beta
\coloneqq
\frac{
\|\widehat\beta-\widehat\beta^\infty\|_\infty
}{
\|\widehat\beta-\widehat\beta^\infty\|_\infty
+
p-\widehat\beta(\xiinit,\qinit)
}
\) for brevity. 
To ensure the denominator of $\epsilon_{\widehat \beta}$ is nonzero, we assume $\widehat\beta(\xiinit,\qinit) < p$, which in turn implies there exists a policy
\(\pi_{\mathrm{sl}}\) and a constant \(\kappa>0\) such that
\(
\inf_{\widehat\nature}
\prob_{\widehat{\mdp}[\widehat\nature],\pi_{\mathrm{sl}}}(\formula)
\ge
1-p+\kappa.
\)

\begin{theorem}[Near-optimality for the unknown MDP]
    \label{thm:unknown_MDP_optimality}
    If $\mathcal M$ is conditionally deterministic w.r.t. $\alpha$, it holds that:
    \begin{enumerate}
        \item if $\mdp$ is finite, the learned $\widehat\mdp$ is graph-preserving,\footnote{An RMDP $\rmdp$ is graph-preserving if each induced MDP $\rmdp[\theta]$ has the same graph. For $\widehat\mdp$, we can ensure graph-preservation (with the specified confidence budget) by using sufficiently many samples to satisfy \cref{remark:minimum_probability}.} and the absolute value of the total undiscounted return is uniformly bounded by $B$, then the shield $\mathfrak S(\widehat{\mdp},\mathcal A,\widehat\beta)$ is
        $(2B\epsilon_{\mathrm{TV}},1)$-optimal
        over $(\mdp,\alpha)$ for $\Phi$, where $\epsilon_{\mathrm{TV}}
        \coloneqq
        \epsilon_{\widehat\beta}
        +
        \frac{\eta H_{\max}}{\kappa+\eta H_{\max}}$.
        \item if $0 < \gamma < 1$ and $\sup_{(s,a)\in S\times A}|R(s,a)|\le Z$, 
        then the shield $\mathfrak S(\widehat{\mdp},\mathcal A,\widehat\beta)$ is
        $\left(
        \frac{2Z\epsilon_T}{1-\gamma},
        \gamma
        \right)$-optimal
        over $(\mdp,\alpha)$ for $\Phi$, where
        \(
        \epsilon_T
        \coloneqq \min_{T\in\Nat}\left(
        \epsilon_{\widehat\beta}
        +
        \gamma^T
        +
        \frac{T\eta + \|\widehat\beta^\infty-\beta^\infty\|_\infty}{\kappa+T\eta + \|\widehat\beta^\infty-\beta^\infty\|_\infty}
        \right).
        \)
    \end{enumerate}
\end{theorem}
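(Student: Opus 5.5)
We would prove \cref{thm:unknown_MDP_optimality} by reduction to \cref{thm:RMDP_optimality}, applied with the true MDP $\mdp$ in the role of the RMDP. Fix a policy $\pi$ with $(\mdp,\pi)\models\spec$. The goal is a compliant policy $\overline\pi$ for $\mathfrak S(\widehat\mdp,\mathcal A,\widehat\beta)$ with $J(\overline\pi^\downarrow)\ge J(\pi)-2B\epsilon$, or the discounted analogue.

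\textbf{Step 1: mixing construction.} As in the proof of \cref{thm:RMDP_optimality}, we use conditional determinism to lift $\pi$ to a policy on abstract histories, hence on $(\mdp/\alpha)\otimes\mathcal A$. We then form the mixture $\pi_\theta=(1-\theta)\pi+\theta\,\pi_{\mathrm{sl}}$, drawn once at time $0$. If $\pi$ has worst-case violation at most $p+e$ on $\widehat\mdp$, and $\pi_{\mathrm{sl}}$ has violation at most $p-\kappa$, then choosing $\theta=e/(e+\kappa)$ gives violation at most $p$. The return loss is at most $2B\theta$, respectively $2Z\theta/(1-\gamma)$. The same mechanism, with slack $p-\widehat\beta(\xiinit,\qinit)$, yields the $\epsilon_{\widehat\beta}$ term. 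It accounts for the gap between $\widehat\beta$ and the true robust value: the budgets $v$ must lie in $\mathcal V_{\widehat\beta}$ rather than above $\widehat\beta^\infty$, and this is handled by mixing toward the policy attaining $\widehat\beta$.

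Since the two error sources combine additively through convexity of the mixture, it suffices to bound $\theta_{\mathrm{tot}}\le\epsilon_{\widehat\beta}+e/(e+\kappa)$. This uses that $x\mapsto x/(x+\kappa)$ is increasing, so any upper bound on $e$ may be substituted.

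\textbf{Step 2: compliance.} Given a policy $\pi'$ whose worst-case violation on $\widehat\mdp\otimes\mathcal A$ is at most $p$, we construct compliant budgets. At each product history we set the budget $y$ to the worst-case conditional violation probability of $\pi'$ from that history onward, taking the supremum over adversaries of $\widehat\mdp$. Then $y\ge\widehat\beta^\infty$, and the shield constraint is exactly the one-step robust dynamic programming inequality for this conditional value. This is the argument reused from \cref{thm:RMDP_soundness,thm:RMDP_optimality}. Since the true $P_\alpha\in\widehat\cP$, $\overline\pi$ is then compliant when deployed on $\mdp$, and $\overline\pi^\downarrow=\pi'$ on $\mdp$, so the returns match.

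\textbf{Step 3: bounding $e$, the main obstacle.} In the undiscounted case, a simulation-lemma argument gives the bound. Couple $P_\alpha$ with any $\widehat P\in\widehat\cP$ step by step; each step decouples with probability at most $\eta$. The violation event is decided by the hitting time of the set of bad automaton states or safe absorbing components. Summing over steps then gives $e\le\eta\,\mathbb E[\text{hitting time}]\le\eta H_{\max}$. Graph preservation ensures that absorbing components coincide under every choice in $\widehat\cP$, and $H_{\max}$ is bounded via $q_{\min}$ by a standard argument: from any state the target is reached within $|S||Q|$ steps with probability at least $q_{\min}^{|S||Q|}$. The delicate point is that $\pi$ is history-dependent while $H_{\max}$ is defined over deterministic $\mathcal A$-memory policies. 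We would handle this by dominating the expected hitting time under an arbitrary policy and adversary by its maximum over memoryless choices on the finite product, since maximal expected total reward in a finite MDP is attained by a deterministic memoryless policy.

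In the discounted case we truncate at horizon $T$. The coupling gives error $T\eta$ on the first $T$ steps. The tail is handled by comparing value functions: the remaining violation is at most $\|\widehat\beta^\infty-\beta^\infty\|_\infty$ beyond the true value, which is already at most the budget. Truncating the mixture's effect after time $T$ costs a $\gamma^T$ fraction of return. Minimizing over $T$ yields $\epsilon_T$.
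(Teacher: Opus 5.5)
\textbf{The undiscounted half of Step 3 fails for history-dependent $\pi$, and domination by memoryless policies cannot rescue it.} The coupling estimate $e\le\eta\,\mathbb E[\tau]$ needs a target set on which the satisfaction outcome is already decided for both coupled runs. For a deterministic $\mathcal A$-memory policy $d$, this set is $C_d\cup(S\times F)$, where $C_d$ is the union of the $F$-avoiding bottom SCCs of the chain induced by $d$. That set depends on $d$, and an HR policy has no analogue. For a policy-independent target, such as the states from which $F$ is unreachable, the maximal expected hitting time is typically infinite: a policy can loop forever in an $F$-avoiding end component from which $F$ is still reachable.

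Worse, the inequality $e\le\eta H_{\max}$ is false for HR policies. Take $\alpha$ the identity and a state whose \emph{flip} action reaches two safe states, with probabilities $\tfrac12,\tfrac12$ in $\mdp$, while $\widehat{\mathcal P}$ allows any split in $[\tfrac12-\eta,\tfrac12+\eta]$. From either of these states the agent can return, enter a bad sink, or enter a safe sink. Let $\pi$ flip $K$ times and then enter the bad sink iff the number of tails exceeds $K/2+c\sqrt K$, with the threshold (randomized if needed) chosen so that the true violation equals $p$. The adversary that biases every flip by $\eta$ raises the violation by order $\eta\sqrt K$ (for $\eta\sqrt K\le 1$). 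But $|S|$, $|Q|$ and the minimum probability, and hence $H_{\max}$, do not depend on $K$. So the bound fails for large $K$, and your Slater repair with $r=\eta H_{\max}/(\kappa+\eta H_{\max})$ does not yield a robustly safe policy.

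\textbf{The missing idea is that you never need to approximate an arbitrary HR policy here.} The paper proves the TV bound only for the class $\mathcal C_{\mathcal A}(\mdp)$ of once-drawn mixtures of deterministic $\mathcal A$-memory policies, bounding each $d$ through its own $\tau_d$ and then averaging. It then uses a finite-CMDP structural lemma. Write safety as a total-cost constraint on $\mdp\otimes\mathcal A$, and assume that total rewards converge almost surely and that an optimal feasible HR policy exists. Then some optimal feasible $\pi^\star$ is a mixture of at most two deterministic stationary product policies, so $\pi^\star\in\mathcal C_{\mathcal A}(\mdp)$. Because $\mdp$ has a single adversary, $J(\overline\pi^\downarrow)\ge J(\pi^\star)-2B\epsilon_{\mathrm{TV}}\ge J(\pi)-2B\epsilon_{\mathrm{TV}}$ for every feasible $\pi$, which is all the theorem requires. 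This route needs those extra hypotheses, which the appendix version of the theorem states.

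Three smaller corrections:
\begin{itemize}
\item \cref{thm:RMDP_optimality} with $\mdp$ as the RMDP concerns the shield built from $\{P_\alpha\}$, whose $\Gamma$ is more permissive than that of $\mathfrak S(\widehat{\mdp},\mathcal A,\widehat\beta)$. The correct object is an RMDP with quotient $\widehat{\mdp}$ that contains $\mdp$ as a resolution, namely the paper's conditional lift $\widetilde{\mdp}$; your Step 2 effectively uses this at the abstract level.
\item Budgets $V_\pi\ge\widehat\beta^\infty$ need not lie in $\mathcal V_{\widehat\beta}$, and mixing alone does not fix this. The paper uses $\min\{1,V_\pi+\delta\}$ with $\delta=\|\widehat\beta-\widehat\beta^\infty\|_\infty$. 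This choice is self-sustaining after the first step, so only the initial mixture pays for the shift.
\item In the discounted case, state explicitly that after time $T$ the policy switches to one attaining $\widehat\beta^\infty$ on $\widehat{\mdp}\otimes\mathcal A$. Only then is the residual violation at most $\beta^\infty+\|\widehat\beta^\infty-\beta^\infty\|_\infty$, and hence at most the true conditional violation of $\pi$ plus that gap. The term $\gamma^T$ is exactly the $W_{1,\gamma}$ cost of this switch. With the switch made explicit, your discounted argument matches the paper's and holds for all HR $\pi$.
\end{itemize}
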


The bounds in \cref{thm:unknown_MDP_optimality} separate three sources of suboptimality. 
First, \(\epsilon_{\widehat\beta}\) comes from using the inductive certificate
\(\widehat\beta\) instead of the exact robust fixed point
\(\widehat\beta^\infty\). 
Second, the terms with~\(\eta\) quantify the mismatch between the learned RMDP and the true abstraction $\mdp/\alpha$. 
Third, the additional term \(\gamma^T\) in the discounted case is the price of switching after \(T\) steps to a robustly safe Slater policy; increasing \(T\) reduces this switching cost but increases the finite-horizon model error
term \(T\eta\).

\section{Experimental Evaluation}
\label{sec:experiments}

We implement our shielding approach and perform numerical experiments to demonstrate our approach for safe reinforcement learning on unknown MDPs via shielding learned RMDPs.

\paragraph{Implementation.}
Our shielding pipeline for unknown MDPs consists of three steps: (1)~Learn an RMDP from sampled transitions; (2)~Construct a shield on the learned RMDP; and (3)~Run policy optimization with the shield active.
We assume access to a simulator of the underlying MDP $\mdp$, but crucially do \emph{not} know its transition function~$P$.
For step (1), we use this simulator (in fact, a simulator of the safety abstraction only suffices) to sample transitions for every state-action pair and learn an RMDP~$\widehat\mdp$ as described in~\cref{sec:unknown_MDPs} that is correct wrt a specified confidence $\delta \in (0,1)$.
For~(2), we run robust value iteration~\cite{DBLP:journals/ior/NilimG05,DBLP:journals/mor/Iyengar05} on the %
product RMDP $\widehat\mdp \otimes_L \mathcal A$ to compute an inductive value function~$\beta$ (cf.~\cref{def:robust_Bellman}) to construct the shield as per \cref{def:shield_finite_rmdp}.
For~(3), we use reinforcement learning on the MDP simulator with the action space restricted to $A^{\mathfrak S}(s,m)=\Gamma(\alpha(s),m)$.
This space $A^{\mathfrak S}(s,m)$ is convex and can thus be given by its extremal points.
In fact, the set of violation budgets $\mathcal V_\beta$ that satisfy condition~2 in \cref{def:shield_finite_rmdp} has finitely many extremal points, allowing us to represent and optimize over the shielded action space (see~\cref{appendix:implementation} for details).
In practice, we optimize the policy with PPO~\cite{DBLP:journals/corr/SchulmanWDRK17} and implement a heuristic for choosing the auxiliary actions, described in \cref{appendix:implementation}.

\paragraph{Experimental setup.}
We conduct experiments comparing our contributions (cases 3 and 4 below) against two baselines that use state-of-the-art probabilistic shielding for MDPs (cases 1 and 2):
\begin{enumerate}
     \item \textbf{Known MDP} (\texttt{MDP}). This baseline assumes full knowledge of the true MDP transition probabilities and uses the probabilistic shielding from~\citep{DBLP:conf/aaai/CourtBG25} directly on the exact model. 
     This baseline provides a reference for safety and performance under full knowledge of the MDP.
    \item \textbf{MLE-based Shielding} (\texttt{MLE}). This baseline applies the probabilistic shielding from~\citep{DBLP:conf/aaai/CourtBG25} on a learned MDP with maximum-likelihood estimates (MLEs) of the transition probabilities.
    While this approach often achieves strong reward performance, the lack of uncertainty modeling prevents theoretical guarantees and can lead to unsafe behavior.
    \item \textbf{RMDP Shield with Known Support} (\texttt{Kno}).
    The MDP is unknown and is learned as an RMDP with PAC guarantees, but the graph of the true MDP is known in advance.
    \item \textbf{RMDP Shield with Unknown Support} (\texttt{Unk}).
    Same as setting 3, but the graph of the true MDP is unknown and must instead be learned based on a given minimum transition probability $p_\text{min}$ as part of the RMDP learning, as described in \cref{remark:minimum_probability}.
\end{enumerate}
We compare these four cases on two variants of three environments: \textbf{Media Streaming}, \textbf{Color Bomb Gridworld}, and \textbf{Pacman} (see~\cref{app:additional_results} for detailed descriptions).
We learn each RMDP~with an overall confidence level of $0.95$, taking between $10$--$20\,000$ samples per distribution $P(s,a)$ in the MDP simulator (except for the \texttt{MDP} baseline, which does not use sampling).
For each configuration, we report two evaluation metrics: (1)~expected discounted reward (\texttt{rew}) is the average episodic return, %
and (2)~specification satisfaction probability (\texttt{sat}) is the empirical probability that an evaluation rollout satisfies the safety specification on 15 episodes.
The experiments are run on a server running Ubuntu 24.04.4 LTS, with an Intel Core i7-10700K CPU, 32 GB of RAM, and NVIDIA GeForce RTX 3080 GPU with 10 GB VRAM.
The code to reproduce our results is provided in the supplementary material.
Our implementation builds upon the MASA library \cite{Goodall2025MASASafeRL}, \eg for the PPO algorithm.

\paragraph{Discussion.}
\Cref{fig:tot-4-size-main} summarizes our experimental results, reporting the averaged metrics (\texttt{rew} and \texttt{sat}) during training as a function of the sample size for each environment.
Learning curves showing the evolution of these metrics are deferred to \cref{app:additional_results} to avoid clutter.
Finally, all experiments are averaged over three random seeds, and shaded regions show the standard deviation.
The \texttt{MDP} baseline corresponds to a perfect-information setting and therefore remains constant across all sample sizes.

First, we observe that \texttt{Kno} and \texttt{Unk} cannot always be evaluated for small sample sizes, as illustrated by \textbf{Pacman Slippery}.
In particular, for too low sample sizes, the learned uncertainty sets $\cP(s,a)$ of the RMDP are too large to construct a shield that satisfies the required satisfaction probability of $1-p$.
Increasing the sample size yields tighter sets $\cP(s,a)$, thereby mitigating this issue.
For \texttt{Unk}, we additionally need at least a certain minimum number of samples from each distribution $P(s,a)$ of the MDP to learn the graph structure, as described in \cref{remark:minimum_probability}.

Despite these sampling requirements,~\cref{fig:tot-4-size-main} shows that the claimed safety probability of $1-p$ is indeed satisfied empirically by our robust shields (\texttt{Kno} and \texttt{Unk}).
As the number of samples increases, the uncertainty sets become progressively tighter, reducing the conservativeness of the robust shield and allowing the methods to achieve increasingly higher expected rewards while maintaining safety.
By contrast, the \texttt{MLE} shield frequently fails to satisfy the required safety threshold for small to moderate sample sizes, despite often achieving strong reward performance, \eg in the \textbf{Pacman} environment.
The reason is that \texttt{MLE} does not explicitly model uncertainty in the learned probabilities and thus cannot provide safety guarantees.

Overall, our results highlight a trade-off between the data used to learn the RMDP and the shield's conservatism.
For small sample sizes, our robust methods lead to suboptimal rewards, while the specification satisfaction probability remains well above the required threshold.
This conservatism is a typical limitation of robust methods, which rely on worst-case reasoning leading to loose bounds for small sample budgets~\cite{DBLP:journals/sttt/BadingsSSJ23,bertsimas2022robust}.
Nevertheless, as the sample size grows, the learned RMDP becomes a tighter approximation to the true MDP, yielding results close to the baseline with a fully known~\texttt{MDP}.

\begin{figure}[tb]
\centering  
    \setlength{\tabcolsep}{2pt}
    \begin{tabular}{|cc|cc|}
        \multicolumn{4}{c}{\input{figures/results/legend}}\\
        \hline
        \input{figures/results/main/GW_finalavg_rew} &
        \input{figures/results/main/GW_finalavg_sat} &
        \input{figures/results/main/GWltl_finalavg_rew} &
        \input{figures/results/main/GWltl_finalavg_sat} \\
        \hline
        \input{figures/results/main/Mini_finalavg_rew} &
        \input{figures/results/main/Mini_finalavg_sat} &
        \input{figures/results/main/MiniSlip_finalavg_rew} &
        \input{figures/results/main/MiniSlip_finalavg_sat} \\
        \hline
        \input{figures/results/main/Stream_finalavg_rew} &
        \input{figures/results/main/Stream_finalavg_sat} &
        \input{figures/results/main/StreamV2_finalavg_rew} &
        \input{figures/results/main/StreamV2_finalavg_sat} \\
        \hline
    \end{tabular}
    \caption{Avg. \texttt{rew} and \texttt{sat} as a function of the number of samples for each state-action pair. The dashed line shows the specification threshold \(1-p\). Shaded regions indicate one standard deviation.}
    \label{fig:tot-4-size-main}
\end{figure}

\section{Conclusion}
We presented a novel shielding approach for robust MDPs (RMDPs).
Our shielding approach is sound and (approximately) optimal and can be used to shield unknown MDPs learned as an RMDP.
By using existing sampling methods for learning transition probabilities with PAC guarantees, we construct shields that, with high confidence, ensure safety while remaining minimally restrictive.
Our experiments confirmed that, while conservative for small sample sizes, our robust shields guarantee probabilistic safety, whereas shields constructed for MDPs with maximum likelihood estimates do not.
In the future, we plan to extend our approach to robust shielding of stochastic games and investigate more sophisticated RMDP learning techniques to reduce the required sample sizes.

\begin{ack}
This research is supported by the EPSRC grant EP/Y028872/1, Mathematical Foundations of Intelligence: An ``Erlangen Programme'' for AI.
Edwin Hamel-De le Court is funded by the ARIA opportunity seed grant titled ``Hardware-Level AI Safety Verification.''
Thom Badings was supported by the RWTH ``Port to Europe'' Postdoc Programme, as part of the Excellence Initiative of the German Research Foundation DFG.
\end{ack}

\bibliographystyle{unsrt}
\bibliography{references}

\newpage

\appendix

\renewcommand{\thesection}{\Alph{section}}

\counterwithin{theorem}{section}
\counterwithin{lemma}{section}
\counterwithin{definition}{section}
\counterwithin{assumption}{section}
\counterwithin{corollary}{section}
\counterwithin{remark}{section}
\counterwithin{example}{section}
\counterwithin{problem}{section}

\setcounter{theorem}{0}
\renewcommand{\thetheorem}{\thesection.\arabic{theorem}}

\setcounter{lemma}{0}
\renewcommand{\thelemma}{\thesection.\arabic{lemma}}

\setcounter{definition}{0}
\renewcommand{\thedefinition}{\thesection.\arabic{definition}}

\setcounter{assumption}{0}
\renewcommand{\theassumption}{\thesection.\arabic{assumption}}

\setcounter{corollary}{0}
\renewcommand{\thecorollary}{\thesection.\arabic{corollary}}

\setcounter{remark}{0}
\renewcommand{\theremark}{\thesection.\arabic{remark}}

\setcounter{example}{0}
\renewcommand{\theexample}{\thesection.\arabic{example}}

\setcounter{problem}{0}
\renewcommand{\theproblem}{\thesection.\arabic{problem}}

\include{appendix_v2}

\end{document}

%% file: include/macros.tex
\newcommand*{\spec}{\Phi}
\newcommand*{\formula}{\varphi}

\newcommand*{\nature}{\theta}

\newcommand*{\conf}{\ensuremath{\tau}}
\newcommand*{\numEdges}{\ensuremath{\psi}}

\newcommand*{\Nat}{\mathbb{N}}

\newcommand{\cP}{\mathcal{P}}

\newcommand*{\States}{\ensuremath{S}}

\newcommand*{\Actions}{\ensuremath{A}}
\newcommand*{\sinit}{\ensuremath{s_\text{init}}}
\newcommand*{\xiinit}{\ensuremath{\xi_\text{init}}}
\newcommand*{\qinit}{\ensuremath{q_\text{init}}}
\newcommand*{\minit}{\ensuremath{m_\text{init}}}

\newcommand*{\TrL}{\ensuremath{\check{P}}}
\newcommand*{\TrU}{\ensuremath{\hat{P}}}

\newcommand*{\labelfunc}{\ensuremath{L}}
\newcommand*{\rewardfunc}{\ensuremath{R}}
\newcommand*{\labels}{{\ensuremath{AP}}}

\newcommand*{\mdp}{\ensuremath{\mathcal{M}}}

\newcommand*{\rmdp}{\ensuremath{\mathcal{M}_R}}
\newcommand*{\RMDP}{\ensuremath{\tuple{\States,\Actions,\cP,\sinit,\labels,\labelfunc}}}

\newcommand*{\tuple}[1]{\langle #1 \rangle}

\newcommand*{\Dirac}[1]{\mathrm{Dirac}\left(#1\right)}

\newcommand*{\Prob}{\mathbb{P}}
\newcommand*{\prob}{\textup{prob}}

\newcommand*{\safe}{\text{safe}}

%% file: include/colors.tex
\definecolor{red}{rgb}{0.745,0.192,0.102}
\definecolor{darkgreen}{RGB}{34,161,55}
\definecolor{lightblue}{RGB}{220, 238, 255}
\definecolor{ruhuisstijlrood}{rgb}{0.745,0.192,0.102}
\definecolor{ruhuisstijlzwart}{rgb}{0,0,0}
\definecolor{ruhuisstijlwit}{rgb}{0.98,0.98,0.98}

\definecolor{darkred}{rgb}{0.560784314,0.125490196,0.0666666667}

\definecolor{color1}{RGB}{55,126,184} %
\definecolor{color2}{RGB}{228,26,28} %
\definecolor{color3}{RGB}{77,175,74} %
\definecolor{color4}{RGB}{152,78,163} %
\definecolor{color5}{RGB}{255,127,0} %
\definecolor{color6}{rgb}{0.5, 1.0, 0.83} %
\definecolor{color7}{rgb}{1.0, 0.0, 1.0} %
\definecolor{color8}{rgb}{0.66, 0.66, 0.66} %

\colorlet{plot1}{color1}
\colorlet{plot2}{color2}
\colorlet{plot3}{color3}
\colorlet{plot4}{color4}
\colorlet{plot5}{color5}

%% file: include/crefnames.tex
\crefname{equation}{Eq.}{Eqs.}
\crefname{pluralequation}{Eqs.}{Eqs.}

\crefname{algorithm}{Algorithm}{Algorithm}

\crefname{figure}{Fig.}{Figs.}
\crefname{pluralfigure}{Figs.}{Figs.}

\crefname{section}{Sect.}{Sects.}
\crefname{pluralsection}{Sects.}{Sects.}

\crefname{table}{Table}{Table}
\crefname{pluraltable}{Tables}{Tables}

\crefname{definition}{Def.}{Def.}
\crefname{pluraldefinition}{Defs.}{Defs.}

\crefname{theorem}{Theorem}{Theorems}
\crefname{pluraltheorem}{Theorems}{Theorems}

\crefname{corollary}{Corollary}{Corollaries}
\crefname{pluralcorollary}{Corollaries}{Corollaries}

\crefname{lemma}{Lemma}{Lemmas}
\crefname{plurallemma}{Lemmas}{Lemmas}

\crefname{example}{Example}{Example}
\crefname{pluralexample}{Examples}{Examples}

\crefname{problem}{Problem}{Problem}
\crefname{pluralproblem}{Problems}{Problems}

\crefname{assumption}{Assumption}{Assumption}
\crefname{pluralassumption}{Assumptions}{Assumptions}

\crefname{remark}{Remark}{Remark}
\crefname{pluralremark}{Remarks}{Remarks}

\crefname{appendix}{Appendix}{Appendices}
\crefname{pluralappendix}{Appendices}{Appendices}

\crefname{line}{Line}{Line}
\crefname{pluralline}{Lines}{Lines}

%% file: appendix_v2.tex
\section{Extended Preliminaries}
\label{app:preliminaries}

\begin{remark}
    Throughout the appendices, we provide formalizations of all definitions and results from the paper, including their measure-theoretic constructions and, for example, (R)MDPs with subsets of enabled actions.
    For completeness, we thus repeat all definitions and results from the paper, often extending them with the appropriate notations.
\end{remark}

For a standard Borel space \(X\), we write \(\distr{X}\) for the set of
probability measures on \(X\), equipped with the evaluation
\(\sigma\)-algebra. When \(X\) is endowed with a Polish topology
generating its Borel \(\sigma\)-algebra, we equip \(\distr{X}\) with the
corresponding weak topology; its Borel \(\sigma\)-algebra coincides with
the evaluation \(\sigma\)-algebra. For any set-valued map $f\colon E\to 2^F$, we let $\operatorname{Gr}(f)$ denote the \emph{graph} of f, \ie the set $\{(x,y)\mid x\in E, y\in f(x)\}$.

\subsection{Markov Decision Processes}
A \textit{Markov Decision Process} (MDP) is a tuple $\mathcal
M=\tuple{S,A,P,\sinit,AP,L}$, where $S$ is a standard Borel space of \textit{states}; $A$ is a mapping \(A\colon S\to 2^{\mathsf A}\) where $\mathsf A$ is a standard Borel space of \emph{actions}, that defines for each $s\in S$ the actions $A(s)$ \emph{enabled} in $s$, and such that $\operatorname{Gr}(A)$ is Borel; the \emph{transition function} $P \colon \operatorname{Gr}(A) \to \distr{\States}$ is a measurable map; $\sinit\in S$ is the \textit{initial state}; $AP$ is a finite set of \emph{atomic propositions} equipped with the discrete $\sigma$-algebra; and $L\colon S\to2^{AP}$ is a measurable
\emph{labeling function}. We say that $\mdp$ is an MDP \emph{with rewards} if it is additionally equipped with a measurable \emph{reward function} $\rewardfunc\colon \operatorname{Gr}(A)\to \mathbb R$. 
For simplicity, we may write $P(s,a,s')$ instead of $P(s,a)(\{s'\})$ when $\{s'\}$ is measurable. A finite \emph{path} or \emph{history} in $\mdp$ is a finite word $h=s_0 a_0 \cdots s_{n-1}a_{n-1}s_n$ in $S(\mathsf A S)^n$ for some $n\in\mathbb N$ such that $a_i\in A(s_i)$ for $i\in \{0,\ldots,n-1\}$. The \emph{length} of a history $h=s_0 a_0 \cdots s_{n-1}a_{n-1}s_n$ is $n$, and we denote its last state $s_n$ by $\last{h}$. An \emph{infinite path} or \emph{infinite history} $h=s_0 a_0\cdots s_n a_n\cdots$ is defined analogously.

For \(n\in\mathbb N\), let $
H_n$
be the space of histories of length \(n\), equipped with the trace $\sigma$-algebra induced by the product \(\sigma\)-algebra  of $S\times (\mathsf A \times S)^n$ on $H_n$. A \emph{policy} of \(\mdp\) is a sequence $
\pi=(\pi_n)_{n\ge 0}$,
where each \(\pi_n\) is a measurable map from \(H_n\) to \(\distr{\mathsf A}\) such that $\pi_n(h)(A(\last h))=1$ for any $h\in H_n$. We denote $\pi_n(h)(E)$ by $\pi(E\mid h)$. We write \(\mathrm{HR}\) for the general class of history-dependent randomized
policies. A policy \(\pi\in\mathrm{HR}\) is \emph{memoryless deterministic} if there
exists a measurable map $
d\colon S\to\mathsf A$
such that \(d(s)\in A(s)\) for every \(s\in S\), and for every finite
history \(h\), $
\pi(\cdot\mid h)=\Dirac{d(\last h)}$.
We write \(\mathrm{MD}\) for the class of memoryless deterministic
policies. In the rest of the paper, the policies considered are in $\mathrm{HR}$ unless stated otherwise.

Fixing a policy $\pi$ and an initial state $s \in S$ induces a probability measure $\text{prob}^{s}_{\mdp,\pi}$ over paths of the MDP $\tuple{S,A,P,s,AP,L}$. We write $\text{prob}_{\mdp,\pi}$ for $\text{prob}^{\sinit}_{\mdp,\pi}$.
For details~on policies and induced probability measures, see~\cite{baier2008principles,BSSstochastic}.

\subsection{Safe Linear Temporal Logic}
We consider objectives for MDPs expressed in \emph{linear temporal logic} (LTL).
Following~\cite{baier2008principles}, an LTL formula $\formula$ over the atomic propositions $AP$ is generated by the \emph{grammar}
\[
\formula ::= \top \mid p \mid \neg \formula \mid \formula \land \formula
\mid \mathbf{X}\formula \mid \formula \,\mathbf{U}\, \formula,
\qquad p \in AP.
\]
As usual, we define
$\formula_1 \lor \formula_2 \coloneqq \neg(\neg \formula_1 \land \neg \formula_2)$,
$\mathbf{F}\formula \coloneqq \top \,\mathbf{U}\, \formula$,
and $\mathbf{G}\formula \coloneqq \neg \mathbf{F}\neg \formula$.
A \emph{finite (resp. infinite) trace} is a sequence $\tau = b_0 b_1\cdots \in \Sigma^*$ (resp. $\Sigma^\omega$), where the \emph{alphabet}~$\Sigma = 2^{AP}$ specifies which propositions hold at one instant.
Given an (in)finite path $h=s_0 a_0 s_1 a_1 \cdots$, we denote by $L(h)=L(s_0)L(s_1)L(s_2)\cdots$ %
the (in)finite trace induced by the labeling function. 
The satisfaction relation $\tau,i \models \formula$ for a formula $\formula$ is defined inductively over infinite traces $\tau = b_0 b_1\cdots$ of $\mdp$ as
\begin{alignat*}{2}
\tau,i &\models p &&\iff p \in b_i,\\
\tau,i &\models \neg \formula &&\iff \tau,i \not\models \formula,\\
\tau,i &\models \formula_1 \land \formula_2
&&\iff \tau,i \models \formula_1 \text{ and } \tau,i \models \formula_2,\\
\tau,i &\models \mathbf{X}\formula &&\iff \tau,i+1 \models \formula,\\
\tau,i &\models \formula_1 \,\mathbf{U}\, \formula_2
&&\iff \exists j \geq i \text{ such that } \tau,j \models \formula_2
\text{ and } \tau,k \models \formula_1 \text{ for all } i \leq k < j.
\end{alignat*}
We write $\tau \models \formula$ as shorthand for $\tau,0 \models \formula$. 
An LTL formula $\formula$ is a \emph{safety formula}~\citep{DBLP:journals/fmsd/KupfermanV01} if every violating trace has a finite witness of violation, \ie for every $\tau \in \Sigma^\omega$ such that $\tau \not\models \formula$, there exists a finite prefix $u \prec \tau$ (called a \emph{bad prefix}) such that for every $\rho \in \Sigma^\omega$, $u\rho \not\models \formula$.
For every safety formula $\formula$, %
there exists a DFA that accepts exactly all bad prefixes of $\formula$:

\begin{definition}[DFA, restatement of~\cref{def:DFA}]
    A \emph{deterministic finite automaton} (DFA) over the alphabet $\Sigma$ is a tuple $
    \mathcal{A}=(Q,\Sigma,\qinit,\delta,F)$,
    where $Q$ is a finite set of states, $\qinit \in Q$ is the initial state,
    $\delta\colon Q\times\Sigma\to Q$ is the transition function, and
    $F \subseteq Q$ is the set of accepting states. 
\end{definition}

The transition function extends to finite words in the standard way:
    $\delta^*(q,ua)=\delta(\delta^*(q,u),a)$ for $u\in\Sigma^*$ and $a\in\Sigma$.    
    A finite word $u \in \Sigma^*$ is accepted by the DFA $\mathcal{A}$ iff $\delta^*(\qinit,u)\in F$.

A \emph{specification} $\spec=\mathbb P_{\geq p}(\formula)$ combines an LTL formula $\formula$ and a threshold $p \in [0,1]$.
For a state $s\in S$ and policy $\pi$, we write $(\mdp,\pi,s)\models \mathbb P_{\geq p}(\formula)$
iff $
\text{prob}^{s}_{\mdp,\pi}
\bigl(
\{h \mid L(h)\models \formula\}
\bigr)
\geq p$,
\ie the probability for generating a trace satisfying $\formula$
is at least $p$. 
When $s=\sinit$, we simply write~$(\mdp,\pi)\models \mathbb P_{\geq p}(\formula)$.

\subsection{Robust MDPs and Safety Abstractions}
\emph{Robust MDPs} (RMDPs) extend MDPs with \emph{sets of transition probabilities}~\citep{DBLP:journals/mor/WiesemannKR13}.

\begin{definition}[RMDP, restatement of~\cref{def:RMDP}]
\label{def:RMDP_appendix}
A \emph{robust MDP} (RMDP) is a tuple $\rmdp = \RMDP$, where $\States$, $\Actions$, $\sinit$, $\labels$, and $\labelfunc$ are defined as in an MDP, and $\cP \colon \operatorname{Gr}(A) \to 2^{\distr{\States}}$ is a mapping with measurable graph called an \emph{uncertain transition function}.
\end{definition}

We say that $\rmdp$ is an RMDP \emph{with rewards} if, additionally, it is equipped with a measurable reward function. An RMDP defines a game between the agent that chooses actions via the policy $\pi$, and an \emph{adversary}~$\nature$ that chooses probability distributions in $\cP$.
Formally, an adversary for an RMDP \(\rmdp\) is a sequence $
\nature=(\nature_n)_{n\in\mathbb N}$,
where each \(\nature_n\) assigns to every history \(h\in H_n\) and every
action \(a\in A(\last h)\) a measure
$\nature_n(h,a)\in\mathcal P(\last h,a)$,
and is measurable as a map \((h,a)\mapsto\nature_n(h,a)\). Fixing an adversary \(\nature\) induces an MDP on the history space $
H=\bigsqcup_{n\in\mathbb N}H_n$, whose enabled actions at \(h\in H\) are \(A(\last h)\), and whose transition kernel is given by
\[
P_\nature(h,a)(E)
=
\nature_n(h,a)
\bigl(\{s'\in S: h a s'\in E\}\bigr),
\qquad h\in H_n.
\]
We denote this history MDP by \(\rmdp[\nature]\).
For an RMDP \(\rmdp\) and a policy $\pi$, we write $
\rmdp,\pi\models\spec$
if, for every adversary \(\nature\) of \(\mdp\), $
\rmdp[\nature],\pi\models\spec$.

\begin{remark}
\Cref{def:RMDP,def:RMDP_appendix} describes an \emph{$(s,a)$-rectangular RMDP}, meaning the sets of probabilities between states and actions are independent, and optimal policies can be computed efficiently by robust dynamic programming~\citep{DBLP:journals/mor/WiesemannKR13}.
More generally, RMDPs can be defined with dependent probabilities in different actions in the same state ($s$-rectangularity) or even in different states (non-rectangularity).
However, computing optimal robust policies for non-rectangular RMDPs is NP-hard in general~\citep{DBLP:journals/mor/WiesemannKR13}.
\end{remark}

We could directly define a shield on the RMDP $\rmdp$, but often, parts of the state space are unimportant for the satisfaction of the safety formula $\formula$.
Thus, we follow the common approach of defining a shield on a safety-relevant abstraction of the model~\citep{ABENTShielding}, defined as follows.

\begin{definition}[RMDP abstraction]
    \label{def:abstraction}
    Let $\rmdp = \RMDP$ be an RMDP, and let $\Xi$ be a standard Borel space.
    A surjective measurable map $\alpha \colon S\to \Xi$ is an \emph{abstraction} of $\rmdp$ if the following conditions hold for all $s,s'\in S$:
    \begin{enumerate}
        \item if $\alpha(s)=\alpha(s')$, then $\labelfunc(s)=\labelfunc(s')$ and $A(s)=A(s')$;
        \item for every action $a\in A(s)$, we have
    \[
    \left\{\mu\circ \alpha^{-1}\mid \mu\in\mathcal P(s,a)\right\}
    =
    \left\{\mu\circ \alpha^{-1}\mid \mu\in\mathcal P(s',a)\right\};
    \]
        \item the induced correspondence \(\mathcal P_\alpha\) defined by $
    \mathcal P_\alpha(\xi,a)
    =
    \left\{\mu\circ\alpha^{-1}:\mu\in\mathcal P(s,a)\right\}$ for any
     $s\in\alpha^{-1}(\xi)$ has measurable graph.
    \end{enumerate}
\end{definition}

Intuitively, condition~1 requires that all states $s,s' \in S$ mapping to the same abstract state have the same label and enabled actions, and condition~2 that these states induce the same sets of transition probabilities under the abstraction map.
Since $\alpha$ is surjective and refines the labeling, the induced labeling map $
\labelfunc_\alpha\colon \Xi\to 2^{AP}$
is well-defined by $
\labelfunc_\alpha(\xi)=\labelfunc(s)$ for any $s\in\alpha^{-1}(\{\xi\})$.

\begin{definition}[Quotient RMDP]
    \label{def:quotient_RMDP}
    Let $\rmdp$ be an RMDP and let $\alpha\colon S\to\Xi$ be an RMDP abstraction.
    The \emph{quotient RMDP} of $\rmdp$ by $\alpha$ is $
    \rmdp/\alpha
    =
    \tuple{\Xi,A_\alpha,\mathcal P_\alpha,\xiinit,AP,\labelfunc_\alpha}$,
    where $\xiinit=\alpha(\sinit)$,  $A_\alpha(\xi)=A(s)$ for any $s\in\alpha^{-1}(\xi)$, $
        \mathcal P_\alpha(\xi,a)
        =
        \left\{\mu\circ \alpha^{-1}\mid \mu\in\mathcal P(s,a)\right\}$
        for any $s\in\alpha^{-1}(\xi)$, $\labelfunc_\alpha$ is given by the abstraction property above.
\end{definition}

\section{Probabilistic Shields for Robust MDPs}

In this appendix, we present all extended definitions, results, and proofs for our RMDP shielding approach from~\cref{sec:shields_RMDPs}.

\subsection{Shields for RMDPs}
In this section, we formally define the notion of shield for an RMDP and how it constrains the action choices when deployed on an RMDP.
We first define the notion of \emph{controllable monitor}.

\begin{definition}[Controllable monitor, restatement of~\cref{def:monitor}]
    A controllable monitor over a measurable observation space \(\Xi\)
    is a tuple $\mathcal D=\tuple{M, U,\minit,\zeta}$,
    where \(M\) is a measurable space of \emph{monitor states},
    \(U\) is a measurable space of \emph{auxiliary actions},
    \(\minit\in M\), and $\zeta\colon M\times \Xi\times U\to M$ is a measurable \emph{transition function}.
\end{definition}

For simplicity, we identify DFAs and their associated controllable monitor.
Recall that, for any measurable map $T\colon X\to Y$ and any measure $\mu$ on $X$,
the pushforward of $\mu$ by $T$ is the measure $\mu\circ T^{-1}$ on $Y$
defined by $(\mu\circ T^{-1})(E)=\mu(T^{-1}(E))$
for every measurable set $E\subseteq Y$.

An \emph{observation map} for a state space $S$ is a measurable map $\alpha\colon S\to\Xi$
into the observation space $\Xi$ of a controllable monitor.

\begin{lemma}[Measurability of the product uncertainty graph]
Let \(S,\mathsf A,M,\Xi,U\) be standard Borel spaces. Let
\(A\colon S\to 2^\mathsf A\) have measurable graph, and let $
\mathcal P\colon \operatorname{Gr}(A)\to 2^{\distr{S}}$
have Borel measurable graph. Let \(\alpha\colon S\to\Xi\) be measurable and let $
\zeta\colon M\times\Xi\times U\to M$
be measurable. For \(m\in M\) and \(u\in U\), define
\[
T_{m,u}\colon S\to S\times M,
\qquad
T_{m,u}(s')
=
\bigl(s',\zeta(m,\alpha(s'),u)\bigr).
\]
Define the product uncertainty correspondence by
\[
\overline{\mathcal P}((s,m),(a,u))
=
\left\{
\mu\circ T_{m,u}^{-1}
:
\mu\in\mathcal P(s,a)
\right\}.
\]
Then \(\operatorname{Gr}(\overline{\mathcal P})\) is Borel measurable.
\end{lemma}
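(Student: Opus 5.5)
The plan is to express $\operatorname{Gr}(\overline{\mathcal P})$ as the image of a Borel set under a Borel map. Since the spaces are standard Borel, such an image is analytic, and I will then argue that it is in fact Borel. Concretely, let
\[
G=\{((s,a),\mu)\mid (s,a)\in\operatorname{Gr}(A),\ \mu\in\mathcal P(s,a)\}=\operatorname{Gr}(\mathcal P).
\]
By assumption $G$ is a Borel subset of $S\times\mathsf A\times\distr{S}$. I then consider the Borel set $G\times M\times U$ together with the map
\[
\Psi\colon ((s,a),\mu,m,u)\mapsto \bigl(((s,m),(a,u)),\ \mu\circ T_{m,u}^{-1}\bigr).
\]
By construction, $\operatorname{Gr}(\overline{\mathcal P})=\Psi(G\times M\times U)$.

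The first step is to check that $\Psi$ is Borel. The only nontrivial component is $(\mu,m,u)\mapsto\mu\circ T_{m,u}^{-1}$. The map $(s',m,u)\mapsto T_{m,u}(s')$ is jointly measurable, being a composition of $\alpha$, $\zeta$ and projections. Since $\distr{S\times M}$ carries the evaluation $\sigma$-algebra, it suffices to show that for each Borel $E\subseteq S\times M$ the map $(\mu,m,u)\mapsto\mu(\{s'\mid T_{m,u}(s')\in E\})$ is measurable. This follows from the standard fact that $(\mu,x)\mapsto\int f(s',x)\,\mu(ds')$ is measurable for bounded jointly measurable $f$. That fact is proved by a monotone class argument starting from rectangles $f=\mathbf 1_{B\times C}$, for which the map is $\mu(B)\mathbf 1_C(x)$.

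The main obstacle is upgrading from analytic to Borel, since a general Borel image need not be Borel. To do this, I will show that $\Psi$ is injective on $G\times M\times U$, after which the Lusin--Souslin theorem (injective Borel images of Borel sets in standard Borel spaces are Borel) gives the claim. The coordinates $(s,m,a,u)$ are read off directly from the output. For $\mu$, the first marginal of $\mu\circ T_{m,u}^{-1}$ is $\mu$ itself, because $\pi_S\circ T_{m,u}=\mathrm{id}_S$. Hence $\mu$ is recovered and $\Psi$ is injective.

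This argument also yields an explicit Borel description as a fallback:
\[
\operatorname{Gr}(\overline{\mathcal P})=\{((s,m),(a,u),\nu)\mid ((s,a),\nu\circ\pi_S^{-1})\in G,\ \nu=(\nu\circ\pi_S^{-1})\circ T_{m,u}^{-1}\}.
\]
Here the first condition is a measurable preimage of $G$. The second is the equality set of two measurable maps into the standard Borel space $\distr{S\times M}$, and such an equality set is Borel because the diagonal of a standard Borel space is Borel. I would present this direct version as the final proof, since it avoids invoking Lusin--Souslin.
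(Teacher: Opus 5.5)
Your primary argument is the paper's proof. The paper also pushes the Borel set $D=\{(s,m,a,u,\mu):(s,a,\mu)\in\operatorname{Gr}(\mathcal P)\}$ forward under $H(s,m,a,u,\mu)=((s,m),(a,u),\mu\circ T_{m,u}^{-1})$. It obtains injectivity by recovering $\mu$ as the $S$-marginal and then invokes Lusin--Souslin. Your monotone-class justification of the joint measurability of $(\mu,m,u)\mapsto\mu\circ T_{m,u}^{-1}$ spells out a step the paper only asserts.

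The explicit description you prefer to present as the final proof is a genuinely different route, and it is correct. It uses the fact that $H$ has a globally defined measurable left inverse $g((s,m),(a,u),\nu)=(s,m,a,u,\nu\circ\pi_S^{-1})$. Hence $H(D)=\{y: g(y)\in D,\ H(g(y))=y\}$, and this set is measurable as soon as the diagonal of $\distr{S\times M}$ is. That holds whenever the $\sigma$-algebra on $S\times M$ is countably generated, because probability measures agreeing on a countable generating $\pi$-system coincide.

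What each approach buys:
\begin{itemize}
\item Your version is elementary and self-contained, and it would go through essentially unchanged if ``standard Borel'' were weakened to ``countably generated''.
\item The paper's version is a one-line appeal to a deep theorem of descriptive set theory. It needs every space involved, including $\distr{S\times M}$, to be standard Borel. That holds here, so both proofs are valid.
\end{itemize}
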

\begin{proof}
First, since \(\alpha\) and \(\zeta\) are measurable, the map
\(
K\colon M\times U\times S\to S\times M
\)
defined by
\[
K(m,u,s')
=
\bigl(s',\zeta(m,\alpha(s'),u)\bigr)
\]
is measurable.
Hence, the induced pushforward map $
F\colon M\times U\times\distr{S}\to\distr{S\times M}$ such that $F(m,u,\mu)=\mu\circ T_{m,u}^{-1}$,
is measurable, since for every measurable \(E\subseteq S\times M\), we have $$
F(m,u,\mu)(E)
=
\mu\bigl(\{s'\in S:K(m,u,s')\in E\}\bigr).$$
Now define
\(
D
=
\{(s,m,a,u,\mu):
(s,a,\mu)\in\operatorname{Gr}(\mathcal P)\}.
\)
Since \(\operatorname{Gr}(\mathcal P)\) is Borel, \(D\) is Borel.
Let
\(
H\colon D\to
(S\times M)\times(\mathsf A\times U)\times\distr{S\times M}
\)
be the map such that
\[
H(s,m,a,u,\mu)
=
\bigl((s,m),(a,u),F(m,u,\mu)\bigr).
\]
The map \(H\) is Borel and injective. Indeed, from
\(
\nu=F(m,u,\mu)=\mu\circ T_{m,u}^{-1},
\)
we recover
\(
\mu=\nu \circ \mathrm{pr}_S^{-1},
\)
with the projection \(\mathrm{pr}_S\colon S\times M\to S\). Thus the value
of \(\mu\) is uniquely determined by \(\nu\), \(m\), and~\(u\).

By the Lusin--Souslin theorem, the image of a Borel set under an injective Borel map between standard Borel spaces is Borel. Hence \(H(D)\) is Borel.

Finally,
\(
H(D)
=
\operatorname{Gr}(\overline{\mathcal P}).
\)
Therefore \(\operatorname{Gr}(\overline{\mathcal P})\) is Borel measurable.
\end{proof}
The above lemma allows us to define the product of an RMDP and a controllable monitor.
\begin{definition}[Product RMDP, extension of~\cref{def:product-rmdp-monitor}]
    \label{def:product-rmdp-monitor_appendix}
    Let $\rmdp=\tuple{S,A,\mathcal P,\sinit,AP,\labelfunc}$
    be an RMDP, let $\alpha\colon S\to\Xi$ be an observation map, and let $\mathcal D=\tuple{M, U,\minit,\zeta}$
    be a controllable monitor over~$\Xi$.
    The \emph{product RMDP} of $\rmdp$ and $\mathcal D$ (relative to
    $\alpha$) is the RMDP $
    \rmdp\otimes_{\alpha} \mathcal D=\tuple{\overline S,\overline A,\overline{\mathcal P},\overline{\sinit},
    AP,\overline{\labelfunc}}$, such that $\overline S=S\times M$, $\overline{\sinit}=(\sinit,\minit)$, and for any $(s,m)\in \overline S$, 
    \begin{itemize}
        \item $\overline{\labelfunc}(s,m)=\labelfunc(s)$,
        \item $\overline A$ is a mapping from $S$ to $2^{\overline{\mathsf A}}$, where $\overline{\mathsf A}=\mathsf A\times U$, such that $\overline A(s,m)=A(s)\times U$;
    
        \item for every $(a,u)\in\overline A((s,m))$,
        letting $
        T(s')
        =
        \bigl(s',\zeta(m,\alpha(s'),u)\bigr)
        $, the uncertainty set
        $\overline{\mathcal P}((s,m),(a,u))$
        consists of all pushforward measures
        $\mu\circ T^{-1}$ with $\mu\in\mathcal P(s,a)$.
    \end{itemize}
\end{definition}

We now state our general definition of a shield, which extends \cref{def:shield} from the main paper with the necessary measurability requirements.

\begin{definition}[Shield, extension of~\cref{def:shield}]
    Let $\Xi$ be a standard Borel space, and let $\mathsf A_{\Xi}$ be a standard Borel space of actions, let $A_\Xi$ assign to each
    $\xi\in\Xi$ a subset $A_\Xi(\xi)\subseteq \mathsf A_{\Xi}$ so that the graph $\{(\xi,a)\mid \xi\in \Xi,a\in A_\Xi(\xi)\}$ is measurable. A \emph{shield} over $(\Xi,A_\Xi)$ is a pair $
    \mathfrak S=(\mathcal D,\Gamma)$, where $\mathcal D=\tuple{M,U,\minit,\zeta}$
    is a controllable monitor over $\Xi$,
    and where $\Gamma$ is a mapping from $\Xi\times M$ to $2^{\distr{\mathsf A_\Xi\times U}}$ such that
    \begin{itemize}
        \item for every
    \((\xi,m)\in\Xi\times M\), \(\Gamma(\xi,m)\) is a closed subset of
    \(\distr{\mathsf A_\Xi\times U}\);
        \item for every open set
    \(O\subseteq\distr{\mathsf A_\Xi\times U}\), the set
    \(
    \{(\xi,m)\in\Xi\times M:\Gamma(\xi,m)\cap O\neq\emptyset\}
    \)
    is measurable;
        \item for every \((\xi,m)\in\Xi\times M\) and every
    \(\rho\in\Gamma(\xi,m)\), we have $
    \rho(A_\Xi(\xi)\times U)=1$.
    \end{itemize}
\end{definition}

\subsection{A Shield for Finite RMDPs}
The robust Bellman operator
\(\mathcal B_{\rmdp}^{U}\) acts on functions
\(f\in [0,1]^S\) as
\[
\left(\mathcal B_{\rmdp}^{U}(f)\right)(s)
=
\begin{cases}
1, & \text{ if } s\in U,\\[1mm]
\displaystyle \inf_{a\in A(s)}\ \sup_{\mu\in\mathcal P(s,a)} \mathbb E_{s'\sim \mu}
\left[f(s')\right],
& \text{otherwise.}
\end{cases}
\]

\begin{definition}[Restatement of~\cref{def:robust_Bellman}]
    An $(\rmdp,U)$-\emph{inductive value function} is a function $\beta\colon S\to [0;1]$ such that, for all $s\in S$, $\left(\mathcal B_{\rmdp}^{U}(\beta)\right)(s)-\beta(s)\leq 0$.
\end{definition}

In the following, we let $\rmdp=\tuple{S,A,\mathcal P,\sinit,AP,\labelfunc,\rewardfunc}$ with $A\colon S\to 2^{\mathsf A}$ be a \textbf{finite} RMDP, we let $p\in [0;1]$ be a safety threshold, and we let $\spec=\mathbb P_{\geq 1-p}(\formula)$ be a probabilistic safety LTL formula. We let $\mathcal A=\tuple{Q,2^{AP},\qinit,\delta,F}$ be a DFA recognizing the bad prefixes of $\formula$. We additionally let $\beta$ be an $(\rmdp\otimes_{\labelfunc}\mathcal A,S\times F)$-inductive value function such that $\beta(\sinit,\qinit) \leq p$.
Finally, we let \(\mathcal V_\beta=\prod_{(s,q)\in S\times Q}[\beta(s,q),1]\).

\begin{definition}[Shield for finite RMDP, extension of~\cref{def:shield_finite_rmdp}]
    \label{def-abstract-shield}
    Let $\rmdp$, $\mathcal A$, and $\beta$ be defined as above.
    The shield $\mathfrak S(\rmdp,\mathcal A,\beta) = (\mathcal D,\Gamma)$ is defined as an instantiation of \cref{def:shield} as follows:
    \begin{enumerate}
        \item $\mathcal D = \tuple{M, \mathcal V_\beta,\minit,\zeta}$ is a controllable monitor over $S$ with 
        states $M \coloneqq Q \times [0,1]$,
        auxiliary actions $\mathcal V_\beta$,
        initial state $\minit = (\qinit,p)$, and
        transition function $\zeta \colon M \times S \times \mathcal V_\beta \to M$ defined for all $(q,y) \in M$, $s \in S$, and $v \in \mathcal V_\beta$ as
        \[
        \zeta((q,y),s,v)
        = \bigl(\delta(q,\labelfunc(s)),
        v(s,\delta(q,\labelfunc(s)))\bigr);
        \]
        \item for all states $(s,q,y)$ of $\rmdp\otimes\mathcal D$, the set $\Gamma(s,q,y)$ contains exactly those distributions $\overline\rho \in \distr{\mathsf A\times\mathcal V_\beta}$ such that $\overline\rho(A(s)\times\mathcal V_\beta)=1$, and such that
        \[
        \mathbb E_{(a,v)\sim\overline\rho}
        \Big[
        \sup_{\mu\in\mathcal P(s,a)}
        \mathbb E_{s'\sim\mu}
        \left[
        v\bigl(s',\delta(q,\labelfunc(s'))\bigr)
        \right]
        \Big]
        \le y.
        \]
    \end{enumerate}

\end{definition}

\begin{lemma}
The pair \(\mathfrak S(\rmdp,\mathcal A,\beta)=(\mathcal D,\Gamma)\) is a shield over \((S,A)\).
\end{lemma}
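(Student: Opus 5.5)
Checking that $\mathfrak S(\rmdp,\mathcal A,\beta)$ is a shield over $(S,A)$ amounts to verifying the requirements of the extended shield definition. First, $\mathcal D$ must be a controllable monitor, with measurable spaces and a measurable $\zeta$. Second, $\Gamma$ must take closed values, be weakly measurable in the sense of the open-set condition, and put full mass on enabled actions. We equip $M=Q\times[0,1]$ and $\mathcal V_\beta=\prod_{(s,q)}[\beta(s,q),1]$ with their product topologies; both are compact metrizable, hence standard Borel. Since $S$ and $Q$ are finite, $\zeta((q,y),s,v)=(\delta(q,L(s)),v(s,\delta(q,L(s))))$ is, for each fixed $(q,s)$, a coordinate projection in $v$ composed with a constant in the first component. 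It is therefore continuous on each of finitely many pieces, and hence measurable. Mass on enabled actions holds by definition, since $\Gamma(s,q,y)$ explicitly requires $\overline\rho(A(s)\times\mathcal V_\beta)=1$.

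For closedness, fix $(s,q,y)$ and define
\[
g_{s,q}(a,v)=\sup_{\mu\in\mathcal P(s,a)}\mathbb E_{s'\sim\mu}\bigl[v(s',\delta(q,L(s')))\bigr].
\]
Because $S$ is finite, $g_{s,q}(a,\cdot)$ is a supremum of affine functions of finitely many coordinates of $v$, with coefficients in the simplex. It is therefore convex and $1$-Lipschitz in the sup-norm, and in particular continuous. Since $\mathsf A$ is finite (discrete), $g_{s,q}$ is a bounded continuous function on $\mathsf A\times\mathcal V_\beta$. The map $\overline\rho\mapsto\mathbb E_{\overline\rho}[g_{s,q}]$ is then weakly continuous. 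The set $\{\overline\rho:\overline\rho(A(s)\times\mathcal V_\beta)=1\}$ is closed, because $A(s)\times\mathcal V_\beta$ is clopen when $\mathsf A$ is finite. So $\Gamma(s,q,y)$ is the intersection of a closed set with a sublevel set of a continuous function, and is closed.

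For the measurability condition, note that the dependence on $(s,q)$ ranges over a finite set. It therefore suffices to show, for each fixed $(s,q)$ and open $O$, that $Y_O=\{y\in[0,1]:\Gamma(s,q,y)\cap O\neq\emptyset\}$ is Borel. The sets $\Gamma(s,q,y)$ increase in $y$, so $Y_O$ is an upward-closed subset of $[0,1]$, i.e.\ an interval of the form $[c,1]$, $(c,1]$, or empty, and hence Borel. The finite union over $(s,q)$ of $\{(s,q)\}\times\{q'\}\times Y_O$, placed in the appropriate coordinates, is then measurable in $S\times M$.

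The main obstacle I expect is the continuity of $g_{s,q}$ if $\mathcal P(s,a)$ is an arbitrary, possibly non-closed, set of distributions. The supremum of a family of uniformly $1$-Lipschitz functions is still $1$-Lipschitz (and finite), so this works without compactness of $\mathcal P(s,a)$. This should be stated explicitly, together with the observation that finite $S$ and $\mathsf A$ let the weak topology be handled by bounded continuous test functions.
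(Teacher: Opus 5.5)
Your proof is correct. For the measurability of $\zeta$ and for closedness of each $\Gamma(s,q,y)$ it is the paper's argument: the robust one-step expectation is $1$-Lipschitz in $v$ for the sup-norm, hence continuous on the finite discrete $\mathsf A$ times $\mathcal V_\beta$. So $\bar\rho\mapsto\int g_{s,q}\,d\bar\rho$ is weakly continuous, and the support constraint defines a closed set.

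The two routes differ only on the open-set measurability condition. The paper dispatches it in one sentence: $S,Q$ are finite and the constraints are Borel in $(y,\bar\rho)$. Read literally, that does not suffice, because $\{y:\Gamma(s,q,y)\cap O\neq\emptyset\}$ is the projection of a Borel set, and such projections are in general only analytic. Your observation that $\Gamma(s,q,\cdot)$ is increasing in $y$ settles this elementarily. It makes this set an up-set of $[0,1]$, hence an interval, and it does so for every set $O$, not only open ones.

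A route closer to the paper's wording would argue as follows. The graph $\{(y,\bar\rho):\bar\rho\in\Gamma(s,q,y)\}$ is closed, and $\distr{\mathsf A\times\mathcal V_\beta}$ is compact by Prokhorov. So lower inverses of closed sets are projections of compact sets, hence closed, and open sets are $F_\sigma$. That route does not depend on the budget constraint being monotone; yours needs no compactness.

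Three small details to tidy:
\begin{itemize}
\item $\mathcal P(s,a)$ is only defined on $\operatorname{Gr}(A)$, so extend $g_{s,q}$ to $a\notin A(s)$ by a constant; the paper uses $0$.
\item Finiteness of the supremum uses $\mathcal P(s,a)\neq\emptyset$; the paper assumes this implicitly as well.
\item The last set should read $\bigcup_{(s,q)}\{s\}\times\{q\}\times Y_O^{s,q}$, since $Y_O$ depends on $(s,q)$.
\end{itemize}
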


\begin{proof}
Since \(S\), \(Q\), and \(\mathsf A\) are finite, we equip them with the
discrete topology. The space
\[
\mathcal V_\beta
=
\prod_{(s,q)\in S\times Q}[\beta(s,q),1]
\]
is a compact metric space, as a finite product of compact intervals.
Thus, $
M=Q\times[0,1]$
and $
\mathsf A\times\mathcal V_\beta$
are compact metric spaces. We equip
\(\distr{\mathsf A\times\mathcal V_\beta}\) with the weak topology.

The monitor transition
\(
\zeta((q,y),s,v)
=
\bigl(\delta(q,L(s)),v(s,\delta(q,L(s)))\bigr)
\)
is measurable, since \(S\) and \(Q\) are finite and
\(v\mapsto v(s,q')\) is a coordinate projection on
\(\mathcal V_\beta\).

It remains to show that \(\Gamma\) is a closed-valued measurable
correspondence. For fixed \(s\in S\) and \(q\in Q\), define
\[
G_{s,q}(a,v)
\coloneqq
\begin{cases}
\displaystyle
\sup_{\mu\in\mathcal P(s,a)}
\mathbb E_{s'\sim\mu}
\left[
v\bigl(s',\delta(q,L(s'))\bigr)
\right],
& a\in A(s),\\[3mm]
0, & a\notin A(s).
\end{cases}
\]
For \(a\in A(s)\), the map \(v\mapsto G_{s,q}(a,v)\) is continuous.
Indeed, for any \(v,w\in\mathcal V_\beta\),
\[
|G_{s,q}(a,v)-G_{s,q}(a,w)|
\le
\|v-w\|_\infty.
\]
Since \(\mathsf A\) is finite and discrete, \(G_{s,q}\) is a bounded
continuous function on \(\mathsf A\times\mathcal V_\beta\).
Now, for \((s,q,y)\in S\times Q\times[0,1]\),
\[
\Gamma(s,q,y)
=
\left\{
\overline\mu\in\distr{\mathsf A\times\mathcal V_\beta}:
\overline\mu(A(s)\times\mathcal V_\beta)=1
\text{ and }
\int G_{s,q}(a,v)\,\overline\mu(d(a,v))\le y
\right\}.
\]
The set \(A(s)\times\mathcal V_\beta\) is closed in
\(\mathsf A\times\mathcal V_\beta\). Therefore the constraint
\(
\overline\mu(A(s)\times\mathcal V_\beta)=1
\)
defines a closed subset of
\(\distr{\mathsf A\times\mathcal V_\beta}\) by the Portmanteau theorem.
Moreover, because \(G_{s,q}\) is bounded and continuous, the map
\(
\overline\mu\mapsto
\int G_{s,q}(a,v)\,\overline\mu(d(a,v))
\)
is continuous under weak convergence. Hence each value
\(\Gamma(s,q,y)\) is closed.

Finally, since \(S\) and \(Q\) are finite and the defining constraints are
Borel in \(y\) and \(\overline\mu\), the correspondence \(\Gamma\) is
measurable. Therefore \(\mathfrak S(\mdp,\mathcal A,\beta)\) is a shield
over \((S,A)\).
\end{proof}

\subsection{Realizability, Soundness and Optimality}

For any RMDP $\rmdp=\tuple{S,A,\mathcal P,\sinit,AP,\labelfunc}$ with $A\colon S\to 2^{\mathsf A}$ and monitor observation map $\alpha$, we say that a shield
$\mathfrak S=(\mathcal D,\Gamma)$ over $(\Xi,A_\Xi)$ with $A_{\Xi}\colon  \Xi \to 2^{\mathsf A_\Xi}$ \emph{acts on} $(\mdp,\alpha)$ if $\mathsf A=\mathsf A_{\Xi}$, and if
$A(s)=A_\Xi(\alpha(s))$ for all $s\in S$. 

If \(\nature\) is an adversary of the RMDP \(\rmdp\), we denote by
\(\nature^\otimes\) the canonical adversary of the product RMDP
\(\rmdp\otimes_\alpha\mathcal D\) induced by \(\nature\), which is defined as
follows. 
For a product history \(\overline h_t\) ending in \((s,m)\), and
for a product action \((a,u)\in A(s)\times U\), let
\(T_{m,u}\colon S\to S\times M\) be defined by
\[
T_{m,u}(s')
=
\bigl(s',\zeta(m,\alpha(s'),u)\bigr).
\]
Then
\[
\nature^\otimes_t(\overline h_t,(a,u))
=
\nature_t(\overline h_t^\downarrow,a)\circ T_{m,u}^{-1}.
\]
A distribution \(\rho\in\distr{\mathsf A\times U}\) is
\emph{compliant with \(\mathfrak S\) at a product history
\(\overline h_t\)} ending in \((s,m)\) if
\[
\rho\in\Gamma(\alpha(s),m).
\]

Let \(\overline\pi\) be an HR policy on
\(\rmdp\otimes_\alpha\mathcal D\). We say that \(\overline\pi\) is
\emph{compliant with \(\mathfrak S\) up to time \(t\) under the 
adversary \(\nature\)} of $\rmdp$ if
\[
\prob_{(\rmdp\otimes_\alpha\mathcal D)[\nature^\otimes],\overline\pi}
\left[
\forall k<t,\;
\overline\pi(\cdot\mid\overline H_k)
\in
\Gamma(\alpha(S_k),M_k)
\right]
=
1,
\]
where \(\overline H_k\) is the product history up to time \(k\), and
\((S_k,M_k)\) is its last product state.

We say that \(\overline\pi\) is \emph{compliant with \(\mathfrak S\) up
to time \(t\)} if it is compliant up to time \(t\) under every adversary
\(\nature\) of the base RMDP \(\rmdp\). Finally, \(\overline\pi\) is
\emph{compliant with \(\mathfrak S\)} if it is compliant up to time \(t\)
for every \(t\in\Nat\).

\begin{definition}[Non-blocking Shield]
    Let $(\mathcal D,\Gamma)$ be a shield acting on $(\rmdp,\alpha)$, and let let $D_\Gamma$ denote the set $\{(\xi,m)\in\Xi\times M:\Gamma(\xi,m)\neq\emptyset\}$. The shield is $(\mathcal D,\Gamma)$ \emph{non-blocking} over $(\rmdp,\alpha)$ if $(\xiinit,\minit)\in D_\Gamma$ and if, for every \((\xi,m)\in D_\Gamma\) and every
\(\rho\in\Gamma(\xi,m)\), we have
\[
\mathbb E_{(a,u)\sim\rho}
\left[
\sup_{\mu\in\mathcal P_\alpha(\xi,a)}
\mu\!\left(
\left\{
\xi'\in\Xi:
(\xi',\zeta(m,\xi',u))\notin D_\Gamma
\right\}
\right)
\right]
=0.
\]
\end{definition}

We use the following concatenation notation for history-dependent
randomized policies. Let \(t\in\Nat\), and let \(\overline\pi^-\) and
\(\overline\pi^+\) be policies on
\(\rmdp\otimes_\alpha\mathcal D\). We define $
\overline\pi^-\star_t\overline\pi^+$
as the policy that follows \(\overline\pi^-\) for the first \(t\)
decision times and then follows \(\overline\pi^+\). Formally, for every
augmented history \(\overline h_k\) of length \(k\),
\[
(\overline\pi^-\star_t\overline\pi^+)(\cdot\mid \overline h_k)
=
\begin{cases}
\overline\pi^-(\cdot\mid \overline h_k), & k<t,\\[1mm]
\overline\pi^+(\cdot\mid \overline h_k), & k\ge t.
\end{cases}
\]

Similarly, if \(\overline\nature^-\) and \(\overline\nature^+\) are
adversaries of \(\rmdp \otimes_\alpha\mathcal D\), we write $
\overline\nature^-\star_t\overline\nature^+$
for the adversary defined by
\[
(\overline\nature^-\star_t\overline\nature^+)_k
(\overline h_k,\overline a)
=
\begin{cases}
\overline\nature^-_k(\overline h_k,\overline a), & k<t,\\
\overline\nature^+_k(\overline h_k,\overline a), & k\ge t.
\end{cases}
\]
\begin{definition}[Realizable Shield, extension of~\cref{def:shield_reliability}]
A shield
\(\mathfrak S=(\mathcal D,\Gamma)\) acting on \((\rmdp ,\alpha)\) is
\emph{realizable} over \((\rmdp ,\alpha)\) if, for
every \(t\in\Nat\), every adversary \(\overline\nature^-\) of
\(\rmdp \otimes_\alpha\mathcal D\), and every policy
\(\overline\pi^-\) compliant with \(\mathfrak S\) up to time
\(t\) under \(\overline\nature^-\), there exists a continuation policy
\(\overline\pi^+\) such that, for every adversary
\(\overline\nature^+\), the concatenated policy $
\overline\pi^-\star_t \overline\pi^+$
is compliant with \(\mathfrak S\) under the concatenated adversary $\overline\nature^-\star_t \overline\nature^+$.
\end{definition}

\begin{lemma}[Non-blocking shields are realizable]
Every nonblocking shield is realizable.
\end{lemma}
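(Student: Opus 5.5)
The plan is to build the continuation policy \(\overline\pi^+\) explicitly, by measurable selection from \(\Gamma\) on the domain set \(D_\Gamma\), and then show by induction on time that under any concatenated adversary the product process stays in \(D_\Gamma\) almost surely from time \(t\) on. Compliance then holds at every step.

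\textbf{Step 1 (measurable selector).} By the shield definition, \(\Gamma\) has closed values in the Polish space \(\distr{\mathsf A\times U}\) and is weakly measurable: for every open \(O\), the set \(\{(\xi,m):\Gamma(\xi,m)\cap O\neq\emptyset\}\) is measurable. The Kuratowski--Ryll-Nardzewski selection theorem therefore gives a measurable \(g\colon D_\Gamma\to\distr{\mathsf A\times U}\) with \(g(\xi,m)\in\Gamma(\xi,m)\). Measurability of \(D_\Gamma\) itself follows by taking \(O\) to be the whole space. We extend \(g\) arbitrarily (but measurably, e.g.\ by a constant distribution on enabled actions) outside \(D_\Gamma\).

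\textbf{Step 2 (definition of \(\overline\pi^+\)).} We set \(\overline\pi^+(\cdot\mid\overline h_k)=g(\alpha(s_k),m_k)\), where \((s_k,m_k)=\last{\overline h_k}\). This is a Markov policy on the product, so it is measurable, and it only uses enabled actions by the third condition in the shield definition.

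\textbf{Step 3 (induction).} Fix any \(\overline\nature^+\), and let \(\overline\pi=\overline\pi^-\star_t\overline\pi^+\) and \(\overline\nature=\overline\nature^-\star_t\overline\nature^+\). The laws up to time \(t\) coincide with those under \((\overline\pi^-,\overline\nature^-)\). Compliance up to time \(t\) gives \(\overline\pi(\cdot\mid\overline H_k)\in\Gamma(\alpha(S_k),M_k)\) almost surely for \(k<t\). In particular \(\Gamma(\alpha(S_k),M_k)\neq\emptyset\), so \((\alpha(S_k),M_k)\in D_\Gamma\) almost surely; for \(k=0\) this also follows from \((\xiinit,\minit)\in D_\Gamma\). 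The key claim is that for every \(k\), if \((\alpha(S_k),M_k)\in D_\Gamma\) a.s.\ and the policy at time \(k\) is a.s.\ in \(\Gamma\), then \((\alpha(S_{k+1}),M_{k+1})\in D_\Gamma\) a.s. To prove it, condition on \(\overline H_k\) and the action \((a,u)\). The next concrete state is drawn from some \(\mu=\nature_k(\cdot)\in\mathcal P(s_k,a)\), and \(M_{k+1}=\zeta(m_k,\alpha(S_{k+1}),u)\). The pushforward \(\mu\circ\alpha^{-1}\) lies in \(\mathcal P_\alpha(\alpha(s_k),a)\) by the abstraction conditions. Hence the conditional probability of leaving \(D_\Gamma\) is bounded by
\[
\sup_{\mu'\in\mathcal P_\alpha(\xi,a)}\mu'\bigl(\{\xi':(\xi',\zeta(m,\xi',u))\notin D_\Gamma\}\bigr).
\]
Integrating against the chosen \(\rho\in\Gamma(\xi,m)\) gives zero by the non-blocking condition, and integrating over histories gives the claim. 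For \(k\ge t\) the policy equals \(g\), which lies in \(\Gamma\) on \(D_\Gamma\), so the hypotheses propagate. For \(k<t\) they hold by compliance of \(\overline\pi^-\). Compliance up to every time \(T\) follows.

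\textbf{Main obstacle.} The difficult step is the measure-theoretic bookkeeping in Step 3, not the combinatorial idea. Two points need care. First, the set \(\{\xi':(\xi',\zeta(m,\xi',u))\notin D_\Gamma\}\) must be measurable, which follows from measurability of \(D_\Gamma\) and \(\zeta\). Second, conditioning on \(\overline H_k\) must be justified: we would use the Ionescu-Tulcea disintegration of the path measure, where the one-step kernel is exactly \(\overline\pi\) composed with \(\overline\nature^\otimes\).

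A subtlety is that the adversary here ranges over adversaries of the product. These choose from \(\overline{\mathcal P}\), which consists of pushforwards of \(\mathcal P(s,a)\) under \(T_{m,u}\), so the same pushforward argument applies. If the supremum in the non-blocking condition is not measurable in \((a,u)\), we would instead use the pointwise inequality with the realized \(\mu\), which is measurable, and note that it is dominated by a quantity whose outer integral is zero.
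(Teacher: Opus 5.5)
Your proposal is correct and follows essentially the same route as the paper: a Kuratowski--Ryll-Nardzewski selector $g$ on $D_\Gamma$, the Markov continuation $\overline\pi^+(\cdot\mid\overline h_k)=g(\alpha(s_k),m_k)$, and an induction using the non-blocking condition to keep $(\alpha(S_k),M_k)$ in $D_\Gamma$ almost surely under any continuation of the adversary. Your measurability remarks (obtaining $D_\Gamma$ by taking $O$ to be the whole space, and handling a possibly non-measurable supremum) are slightly more careful than the paper's. One small fix: outside $D_\Gamma$ you should use a state-dependent valid distribution supported on $A(s)\times U$, like the paper's $\rho_0(s,m)$, because a single constant distribution need not be supported on the enabled actions of every state; this does not affect the argument, since that region is reached with probability zero.
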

\begin{proof}
Let \(\mathfrak S=(\mathcal D,\Gamma)\) be nonblocking, and let
\[
D_\Gamma=\{(\xi,m):\Gamma(\xi,m)\neq\emptyset\}.
\]
Since \(\Gamma\) is closed-valued and measurable, \(D_\Gamma\) is
measurable. By the Kuratowski--Ryll-Nardzewski selection theorem, there is
a measurable selector
\(
g\colon D_\Gamma\to\distr{\mathsf A\times U}
\)
such that \(g(\xi,m)\in\Gamma(\xi,m)\) for every
\((\xi,m)\in D_\Gamma\).
Fix \(t\in\Nat\), an adversary \(\overline\nature^-\), and a policy
\(\overline\pi^-\) compliant up to time \(t\) under
\(\overline\nature^-\). We first show that $(\alpha(s_t),m_t)\in D_\Gamma$ almost surely.
This follows by induction on \(k\le t\). At \(k=0\), since the shield is non-blocking,
\((\alpha(\sinit),\minit)\in D_\Gamma\). For the induction step, if
\((\alpha(s_k),m_k)\in D_\Gamma\) almost surely and \(k<t\), compliance
gives that the action distribution used at time \(k\) belongs to
\(\Gamma(\alpha(s_k),m_k)\) almost surely. Since the shield is non-blocking, any such
distribution keeps the next abstract-monitor state in \(D_\Gamma\) with
conditional probability one, for every admissible adversarial transition.
Thus \((\alpha(s_{k+1}),m_{k+1})\in D_\Gamma\) almost surely.

Now define a memoryless randomized continuation policy
\(\overline\pi^+\) by
\[
\overline\pi^+(s,m)
=
\begin{cases}
g(\alpha(s),m), & \text{if }(\alpha(s),m)\in D_\Gamma,\\
\rho_0(s,m), & \text{otherwise,}
\end{cases}
\]
where \(\rho_0(s,m)\) is any measurable valid distribution supported on
\(A(s)\times U\). The value outside \(D_\Gamma\) is irrelevant after a
compliant prefix.

Let \(\overline\nature^+\) be any adversary continuation. Under
\(\overline\nature^-\star_t\overline\nature^+\), the process is in
\(D_\Gamma\) at time \(t\) almost surely. From then on, whenever the
process is in \(D_\Gamma\), the continuation chooses
\(g(\alpha(s),m)\in\Gamma(\alpha(s),m)\). Since the shield is non-blocking, the next
abstract-monitor state remains in \(D_\Gamma\) almost surely, independently
of the future adversary. Inducting over all future times, the concatenated
policy $
\overline\pi^-\star_t\overline\pi^+$
is compliant under $
\overline\nature^-\star_t\overline\nature^+$.
Since \(\overline\nature^+\) was arbitrary, the shield is
realizable.
\end{proof}

Given a history $
\overline h_t=((s_0,m_0),(a_0,u_0),\dots,(a_{t-1},u_{t-1}),(s_t,m_t))$ of \(\rmdp \otimes_{\alpha} \mathcal D\), its \emph{projection} onto \(\rmdp\) is the history $
\overline h_t^{\downarrow}=(s_0,a_0,s_1,a_1,\dots,s_{t-1},a_{t-1},s_t)$.

\begin{definition}[Projected policy]
    \label{def:projected_policy}
    Let $\rmdp$ be an RMDP, let $\alpha\colon S\to\Xi$ be a monitor observation map, let
    $\mathcal D$ be a controllable monitor over $\Xi$, and let
    $\overline\pi$ be a policy on $\rmdp\otimes_\alpha\mathcal D$.
    For every adversary $\nature$ of the transition uncertainty of $\rmdp$, we
    denote by $\overline\pi^{\downarrow,\nature}$
    the \emph{projection of $\overline\pi$ onto $\rmdp$ under $\nature$}, defined as follows:
    
    For every projected history $h$, let $
    \nu_t^\theta(\cdot\mid h)$ be the conditional distribution of the augmented histories
    $\overline H$ given $\overline H^\downarrow=h$, as induced by
    the policy $\overline\pi$ in the
    $(\rmdp\otimes_\alpha \mathcal D)[\nature^\otimes]$.
    The \emph{projection of \(\overline\pi\) onto \(\rmdp\) under \(\nature\)} is the
    policy \(\overline\pi^{\downarrow,\nature}\) defined by
    \[
    \overline\pi^{\downarrow,\nature}(E\mid h)
    =
    \int
    \overline\pi(E\times U\mid \overline h)\,
    \nu_t^\nature(d\overline h\mid h)
    \]
    for every measurable \(E\subseteq\mathsf A\).
\end{definition}

For two policies \(\pi\) and \(\pi'\) on an RMDP \(\rmdp\), and for an
adversary \(\nature\) of \(\rmdp\), we write $\pi\equiv_{\nature}\pi'$ if they induce the same probability measure on infinite state-action paths
under \(\nature\), \ie $
\prob_{\rmdp[\nature],\pi}
=
\prob_{\rmdp[\nature],\pi'}$.

\begin{lemma}[Adversary-independent projection]
Let \(\rmdp\) be an RMDP, let \(\alpha\colon S\to\Xi\) be a monitor observation
map, let \(\mathcal D\) be a controllable monitor over \(\Xi\), and let
\(\overline\pi\) be an HR policy on \(\rmdp\otimes_\alpha\mathcal D\).
Then there exists an HR policy \(\overline\pi^\downarrow\) on \(\rmdp\)
such that, for every adversary \(\nature\) of \(\rmdp\), for every
\(t\in\Nat\),
\[
\overline\pi^{\downarrow,\nature}\equiv_\nature\overline\pi^\downarrow\]
\end{lemma}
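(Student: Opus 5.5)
The plan is to construct \(\overline\pi^\downarrow\) \emph{without reference to any adversary}. The posterior over monitor components given a projected history will be computed by a recursion that uses only \(\overline\pi\), \(\zeta\) and \(\alpha\). I will then show that this recursion agrees with the adversary-dependent conditional law \(\nu^\nature_t\) of \cref{def:projected_policy}, almost surely under every \(\nature\). The key observation is that in \((\rmdp\otimes_\alpha\mathcal D)[\nature^\otimes]\) the next concrete state \(s'\) is drawn from \(\nature_t(\overline h_t^\downarrow,a)\). This law depends only on the projected history and the concrete action, not on the monitor states or the auxiliary action \(u\). The monitor update \(m'=\zeta(m,\alpha(s'),u)\) is then deterministic. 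So the likelihood of the observed \(s'\) is the same for every augmented history consistent with \(h\), and it cancels when one conditions on \(s'\).

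\emph{Step 1: adversary-free posterior kernels.} Define kernels \(\nu_t(\cdot\mid h)\) from \(H_t\) to augmented histories of length \(t\), together with \(\overline\pi^\downarrow\), by induction on \(t\).
\begin{itemize}
\item Set \(\nu_0(\cdot\mid s_0)=\Dirac{(s_0,\minit)}\).
\item Given \(\nu_t\), form the joint law \(\lambda_t(\cdot\mid h)\) of \((\overline h,(a,u))\) as \(\nu_t(d\overline h\mid h)\,\overline\pi(d(a,u)\mid\overline h)\).
\item Define \(\overline\pi^\downarrow(\cdot\mid h)\) as the \(\mathsf A\)-marginal of \(\lambda_t(\cdot\mid h)\). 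This coincides with the formula of \cref{def:projected_policy} with \(\nu_t\) in place of \(\nu^\nature_t\).
\item Disintegrate \(\lambda_t(\cdot\mid h)\) with respect to \(a\) to obtain a kernel \(\kappa_t(d(\overline h,u)\mid h,a)\). Since all spaces are standard Borel, this kernel exists and can be chosen jointly measurable in \((h,a)\).
\item Set \(\nu_{t+1}(\cdot\mid h a s')\) to be the pushforward of \(\kappa_t(\cdot\mid h,a)\) under \((\overline h,u)\mapsto \overline h\,(a,u)\,(s',\zeta(\last{\overline h}_M,\alpha(s'),u))\), where \(\last{\overline h}_M\) denotes the monitor component of the last state of \(\overline h\).
\end{itemize}
Measurability of \(\nu_{t+1}\) and of \(\overline\pi^\downarrow\) follows from the measurability of \(\zeta\), \(\alpha\) and the kernels.

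\emph{Step 2: the joint law factorizes.} Fix an adversary \(\nature\). I will prove by induction on \(t\) that the law of \(\overline H_t\) under \((\rmdp\otimes_\alpha\mathcal D)[\nature^\otimes]\) with \(\overline\pi\) equals
\[
\int \nu_t(\cdot\mid h)\;\prob_{\rmdp[\nature],\overline\pi^\downarrow}\bigl(H_t\in dh\bigr).
\]
The inductive step uses three facts. First, the law of \((\overline H_t,(a_t,u_t))\) is \(\int\lambda_t(\cdot\mid h)\,\prob(dh)\). Second, the disintegration gives \(\lambda_t(d(\overline h,a,u)\mid h)=\overline\pi^\downarrow(da\mid h)\,\kappa_t(d(\overline h,u)\mid h,a)\). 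Third, \(s_{t+1}\sim\nature_t(h,a)\), independently of \((\overline h,u)\) given \((h,a)\), by the definition of \(\nature^\otimes\). Rearranging the integrals, the kernel \(\nature_t(h,a)(ds')\) combines with \(\prob(dh)\,\overline\pi^\downarrow(da\mid h)\) to give exactly the law of \(H_{t+1}\) under \(\overline\pi^\downarrow\) and \(\nature\). What remains is \(\nu_{t+1}(\cdot\mid h a s')\), as required.

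\emph{Step 3: conclusion.} Step 2 implies two things. First, \(\nu^\nature_t(\cdot\mid h)=\nu_t(\cdot\mid h)\) for \(\prob_{\rmdp[\nature],\overline\pi^{\downarrow}}\)-almost every \(h\), by almost sure uniqueness of regular conditional distributions. Second, the projected path law under \(\overline\pi\) equals \(\prob_{\rmdp[\nature],\overline\pi^\downarrow}\) on every finite horizon. By Ionescu-Tulcea the two measures then agree on infinite paths. Since \(\overline\pi^{\downarrow,\nature}\) is defined through the projected path law and agrees with \(\overline\pi^\downarrow\) on the histories that law charges, we obtain \(\overline\pi^{\downarrow,\nature}\equiv_\nature\overline\pi^\downarrow\).

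The main obstacle I expect is the disintegration in Step 1. The conditional law \(\kappa_t\) is only defined up to \(\overline\pi^\downarrow(\cdot\mid h)\)-null sets of actions, and a version must be chosen measurably in \(h\) and independently of \(\nature\). My first attempt will be to apply the parametrized disintegration theorem for kernels on standard Borel spaces. This theorem yields a jointly measurable \(\kappa_t(\cdot\mid h,a)\), and I will then argue that null-set ambiguities are never charged under any adversary, because the actions are drawn from \(\overline\pi^\downarrow(\cdot\mid h)\) itself.
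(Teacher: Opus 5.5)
Your proposal is correct and follows essentially the same route as the paper. Both arguments use an adversary-free recursion starting from $\nu_0=\Dirac{(s_0,\minit)}$, disintegrate $(\overline h_t,u)$ given the observed base action, and push forward through the deterministic monitor update, justified by the fact that under $\nature^\otimes$ the next concrete state depends only on $(h,a)$. Your Step~2 joint-law induction and your care about a jointly measurable disintegration in $(h,a)$ make explicit two points the paper leaves implicit: that the projected product law coincides with the path law of $\overline\pi^\downarrow$, and that the regular conditional distribution can be chosen measurably.
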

\begin{proof}
We construct a common version of the projection. For each \(t\), define a
kernel
\[
\nu_t(\cdot\mid h)\colon H_t\to\distr{\overline H_t}
\]
which will serve as a conditional law of augmented histories given their
projection, independently of the adversary.
This kernel is defined as follows:

\begin{itemize}
    \item
    At time \(0\), set
    \(
    \nu_0(\cdot\mid s_0)=\Dirac{(s_0,\minit)}.
    \)
    Assume that \(\nu_t\) has been constructed. Fix a projected history
    \(h_t=s_0a_0\cdots a_{t-1}s_t\). Draw
    \(\overline h_t\sim\nu_t(\cdot\mid h_t)\), and then draw a product action
    \((a,u)\sim\overline\pi(\cdot\mid\overline h_t)\). Since the spaces are
    standard Borel, there exists a regular conditional distribution $
    R_t(d\overline h_t,du\mid h_t,a)$
    of \((\overline h_t,u)\) given the observed base action \(a\).
    
    \item
    For \(h_{t+1}=h_ta_ts_{t+1}\), define
    \[
    \Phi_{a_t,s_{t+1}}(\overline h_t,u)
    =
    \overline h_t\,(a_t,u)\,
    \bigl(s_{t+1},\zeta(m_t,\alpha(s_{t+1}),u)\bigr),
    \]
    where \(m_t\) is the monitor component of the last state of
    \(\overline h_t\). Then set
    \[
    \nu_{t+1}(\cdot\mid h_ta_ts_{t+1})
    =
    R_t(\cdot\mid h_t,a_t)\circ \Phi_{a_t,s_{t+1}}^{-1}.
    \]
    On histories where the conditional distribution is not uniquely
    determined, choose an arbitrary version.
\end{itemize}

We claim that, for every adversary \(\nature\) of \(\rmdp\), the kernel
\(\nu_t\) is a version of the conditional law of \(\overline H_t\) given
\(\overline H_t^\downarrow\) under
\((\rmdp\otimes_\alpha\mathcal D)[\nature^\otimes]\) and
\(\overline\pi\). We prove this by induction on \(t\). The case \(t=0\) is
immediate. For the induction step, under the canonical product adversary,
after an augmented history \(\overline h_t\) with projection \(h_t\), and
after a product action \((a_t,u_t)\), the next concrete state is sampled
from $
\nature_t(h_t,a_t)$.
This distribution depends only on the projected history \(h_t\) and the
base action \(a_t\), not on the hidden augmented history
\(\overline h_t\) nor on the auxiliary action \(u_t\). Therefore, after
conditioning on the projected next history \(h_ta_ts_{t+1}\), the hidden
part is obtained by conditioning \((\overline h_t,u_t)\) on the observed
base action \(a_t\), and then applying the deterministic monitor update.
This is exactly the recursive definition of \(\nu_{t+1}\).

Now define the HR policy \(\overline\pi^\downarrow\) on \(\rmdp\) by
\[
\overline\pi^\downarrow(E\mid h)
=
\int
\overline\pi(E\times U\mid\overline h)\,
\nu_t(d\overline h\mid h),
\qquad h\in H_t.
\]
For every adversary \(\nature\), the kernel \(\nu_t\) is a version of the
conditional law used in the definition of
\(\overline\pi^{\downarrow,\nature}\). Hence, for every \(t\),
\[
\overline\pi^{\downarrow,\nature}(\cdot\mid H_t)
=
\overline\pi^\downarrow(\cdot\mid H_t)
\qquad
\prob_{\rmdp[\nature],\overline\pi^\downarrow}\text{-a.s.}
\]
Therefore, the two policies induce the same state-action path measure under
\(\nature\), \ie
\[
\overline\pi^{\downarrow,\nature}\equiv_\nature\overline\pi^\downarrow,
\]
which concludes the proof. 
\end{proof}

\begin{definition}[Shield soundness, extension of~\cref{def:shield_soundness}]
    Let \(\spec\) be a specification, let \(\rmdp\) be an RMDP equipped with
    a monitor observation map \(\alpha\). A shield
    \(\mathfrak S=(\mathcal D,\Gamma)\) acting on \((\rmdp,\alpha)\) is
    \emph{sound} over \((\rmdp,\alpha)\) for \(\spec\) if, for
    every policy $
    \overline\pi$
    on \(\rmdp\otimes_\alpha\mathcal D\) compliant with \(\mathfrak S\), we have $
    \rmdp,\,\overline\pi^{\downarrow}\models \spec$.
\end{definition}

For two policies \(\pi\) and \(\pi'\) on an RMDP \(\rmdp\), we write $
\pi\equiv\pi'$
if, for every adversary \(\nature\) of \(\rmdp\), $
\pi\equiv_{\nature}\pi'$.

\begin{definition}[Shield completeness]
    Let \(\spec\) be a specification and let \(\rmdp\) be an RMDP equipped with
    a monitor observation map \(\alpha\). Let
    \(\mathcal C\subseteq \mathrm{HR}\) be a class of policies. A shield
    \(\mathfrak S=(\mathcal D,\Gamma)\) acting on \((\rmdp ,\alpha)\) is
    \(\mathcal C\)-\emph{complete} over \((\rmdp ,\alpha)\) for \(\spec\) if,
    for every policy \(\pi\in\mathcal C\) of \(\rmdp \) such that $
    \rmdp ,\pi\models\spec$, there exists an HR policy $
    \overline\pi$
    of \(\rmdp \otimes_\alpha\mathcal D\) compliant with \(\mathfrak S\) such
    that $
    \pi\equiv\overline\pi^{\downarrow}$.
\end{definition}

For any two measures $\mu,\nu$ over a measurable space $(X,\mathcal F)$, we denote by $\mathrm{TV}(\mu,\nu)$ their total variation distance, \ie $\mathrm{TV}(\mu,\nu)
=
\sup_{A\in\mathcal F} |\mu(A)-\nu(A)|$.

For any standard Borel spaces of states $S$ and actions $A$, for any $\gamma\in(0,1)$, we let $\Omega=(SA)^{\omega}$ equipped
with the $\sigma$-algebra $\mathcal F_\Omega$ generated by cylinder sets. The \emph{discounted Hamming
distance} on $\Omega$ is the distance $d_{\gamma}$ such that, for any two words $h=s_0a_0s_1a_1\cdots$ and $h'=s'_0a'_0s'_1a'_1\cdots$, we have
\[
d_\gamma(h,h')
=
(1-\gamma)\sum_{t=0}^{\infty}\gamma^t
\mathbf 1\!\left\{(s_t,a_t)\neq (s_t',a_t')\right\}.
\]

The discounted Hamming
distance is a metric on $\Omega$, and it satisfies $0\le d_\gamma(h,h')\le 1$. Given two probability measures $\mu,\nu$ on $\Omega$, a \emph{coupling} of
$\mu$ and $\nu$ is a probability measure $\xi$ on $\Omega\times\Omega$
whose first marginal is $\mu$ and second marginal is $\nu$. We denote by
$\Gamma(\mu,\nu)$ the set of all such couplings. We define the
\emph{$1$-Wasserstein distance} associated to $d_{\gamma}$ as the distance $W_{1,\gamma}$ on $\Delta(\Omega,\mathcal F_\Omega)$ such that
\[
W_{1,\gamma}(\mu,\nu)
=
\inf_{\xi\in\Gamma(\mu,\nu)}
\int_{\Omega\times\Omega}
d_\gamma(h,h')\,\xi(dh,dh').
\]

\begin{lemma}\label{lemma:dominating distance}
For all probability measures $\mu,\nu$ on $(\Omega,\mathcal F_{\Omega})$, it holds that
\(
W_{1,\gamma}(\mu,\nu)\le \mathrm{TV}(\mu,\nu).
\)
\end{lemma}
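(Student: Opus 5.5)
The plan is to use the standard characterization of total variation as the infimum of mismatch probabilities over couplings, and to compare the two costs pointwise. The key observation is that the discounted Hamming distance is bounded by the indicator of disagreement. For every pair $h,h'\in\Omega$,
\[
d_\gamma(h,h')\le (1-\gamma)\sum_{t=0}^{\infty}\gamma^t\,\mathbf 1\{h\neq h'\}=\mathbf 1\{h\neq h'\}.
\]
This holds because every summand vanishes when $h=h'$, and the normalized weights sum to one otherwise.

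Hence, for any coupling $\xi\in\Gamma(\mu,\nu)$, we have
\[
W_{1,\gamma}(\mu,\nu)\le\int d_\gamma\,d\xi\le\xi(\{(h,h')\colon h\neq h'\}).
\]
It therefore suffices to exhibit a coupling $\xi^\star$ with $\xi^\star(h\neq h')=\mathrm{TV}(\mu,\nu)$, namely the maximal coupling.

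To construct $\xi^\star$, let $\lambda=\mu+\nu$ and write $f=d\mu/d\lambda$ and $g=d\nu/d\lambda$. Put $m=\min(f,g)\,\lambda$ and $c=m(\Omega)=1-\mathrm{TV}(\mu,\nu)$, using the identity $\mathrm{TV}(\mu,\nu)=\tfrac12\int|f-g|\,d\lambda=1-\int\min(f,g)\,d\lambda$. Define
\[
\xi^\star=m\circ\Delta^{-1}+\frac{1}{1-c}\,(\mu-m)\otimes(\nu-m),
\]
where $\Delta(h)=(h,h)$ is the diagonal map. When $c=1$ we have $\mu=\nu$, the second term is dropped, and $\xi^\star=m\circ\Delta^{-1}$. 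A direct check shows that both marginals are correct. Moreover,
\[
\xi^\star(h\neq h')\le 1-c=\mathrm{TV}(\mu,\nu).
\]

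The main technical point is measurability of the diagonal $\{h=h'\}$ in $\Omega\times\Omega$, which is needed for the indicator bound to make sense as an integral. Here $\Omega$ is a countable product of standard Borel spaces, so it is standard Borel and its diagonal is Borel. Alternatively, one can avoid the issue entirely: $d_\gamma$ is a measurable series of indicators of coordinate disagreements, each of which is measurable. Combining the two displays gives $W_{1,\gamma}(\mu,\nu)\le\mathrm{TV}(\mu,\nu)$.
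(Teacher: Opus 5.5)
Your proposal is correct and follows the same route as the paper: bound $d_\gamma(h,h')$ by the disagreement indicator $\mathbf 1\{h\neq h'\}$, then integrate against a maximal coupling, whose mismatch probability equals $\mathrm{TV}(\mu,\nu)$. The only difference is that you construct the maximal coupling explicitly and address measurability of the diagonal, whereas the paper simply invokes the coupling characterization of total variation and takes the existence of a maximal coupling for granted.
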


\begin{proof}
Recall that the total variation distance admits the coupling
characterization
\[
\mathrm{TV}(\mu,\nu)
=
\inf_{\xi\in\Gamma(\mu,\nu)} \xi(\omega\neq\omega').
\]
A coupling $\xi^\star\in\Gamma(\mu,\nu)$ is called \emph{maximal} if it
attains this infimum, \ie\ if
\(
\xi^\star(\omega\neq\omega')=\mathrm{TV}(\mu,\nu).
\)
Let $\xi\in\Gamma(\mu,\nu)$ be a maximal coupling, so that
\(
\xi\bigl(\{(\omega,\omega')\in\Omega\times\Omega:\omega\neq\omega'\}\bigr)
=
\mathrm{TV}(\mu,\nu).
\)
Then, since $d_\gamma$ is bounded by $1$, we have
\[
\int d_\gamma(\omega,\omega')\,\xi(d\omega,d\omega')
\le
\xi(\omega\neq\omega')
=
\mathrm{TV}(\mu,\nu).
\]
Taking the infimum over all couplings yields
\[
W_{1,\gamma}(\mu,\nu)\le \mathrm{TV}(\mu,\nu).
\]
\end{proof}

For any RMDP \(\rmdp\), and any two policies \(\pi,\pi'\) over \(\rmdp\), we
define
\[
\mathrm{TV}_{\rmdp}(\pi,\pi')
=
\sup_{\nature}
\mathrm{TV}\!\left(
\prob_{\rmdp[\nature],\pi},
\prob_{\rmdp[\nature],\pi'}
\right),
\]
and, for \(\gamma\in(0,1)\),
\[
W_{1,\gamma}^{\rmdp}(\pi,\pi')
=
\sup_{\nature}
W_{1,\gamma}\!\left(
\prob_{\rmdp[\nature],\pi},
\prob_{\rmdp[\nature],\pi'}
\right),
\]
where the suprema range over all adversaries \(\nature\) of \(\rmdp\).

\begin{definition}[Shield approximate completeness]
Let $\spec$ be a specification, let $\rmdp$ be an RMDP equipped with a
monitor observation map $\alpha$, and let $\epsilon>0$. Let
\(\mathcal C\subseteq\mathrm{HR}\) be a class of policies.
A shield $\mathfrak S=(\mathcal D,\Gamma)$ acting on $(\rmdp,\alpha)$ is
$\mathcal C$-$\epsilon$-$\mathrm{TV}$-\emph{complete}
(resp. $\mathcal C$-$\epsilon$-$W_{1,\gamma}$-\emph{complete})
over $(\rmdp,\alpha)$ for $\spec$ if, for every policy $\pi\in\mathcal C$ of $\rmdp$ such
that $\rmdp,\pi\models\spec$, there exists an HR policy $\overline\pi$ of
$\rmdp\otimes_\alpha\mathcal D$ compliant with $\mathfrak S$ such that
\[
\mathrm{TV}_{\rmdp}\!\left(\pi,\overline\pi^{\downarrow}
\right)
\le \epsilon \text{ (resp. }
W^{\rmdp}_{1,\gamma}\!\left(\pi,\overline\pi^{\downarrow}
\right)
\le \epsilon).
\]
\end{definition}

The notion of \(\mathcal C\)-completeness is equivalent to
\(\mathcal C\)-\(0\)-\(\mathrm{TV}\)-completeness, and also to
\(\mathcal C\)-\(0\)-\(W_{1,\gamma}\)-completeness.

\begin{definition}[Shield approximate optimality, extension of~\cref{def:shield_optimality}]
Let $\spec$ be a specification, let $\rmdp$ be an RMDP equipped with a
monitor observation map $\alpha$, let $\epsilon>0$, and let $\gamma\in(0,1]$. Let
\(\mathcal C\subseteq \mathrm{HR}\) be a class of policies.
A shield $\mathfrak S=(\mathcal D,\Gamma)$ acting on $(\rmdp,\alpha)$ is
$\mathcal C$-$(\epsilon,\gamma)$-\emph{optimal} for $\spec$ if, for every
policy $\pi\in\mathcal C$ of $\rmdp$ such that $\rmdp,\pi\models\spec$, there
exists a policy $\overline\pi$ of $\rmdp\otimes_\alpha\mathcal D$
compliant with $\mathfrak S$ such that, for any adversary $\nature$ of $\mathcal M$,
\[
J^\gamma_{\rmdp[\nature]}(\overline\pi^{\downarrow})
\ge
J^\gamma_{\rmdp[\nature]}(\pi)-\epsilon.
\]
\end{definition}

\begin{lemma}
Let $S$ be a set of states, $A$ be a set of actions, $\gamma\in(0,1)$, and let
$\Omega=(SA)^{\omega}$ be equipped with the $\sigma$-algebra
$\mathcal F_\Omega$ generated by cylinder sets. Suppose that
\[
\sup_{(s,a)\in S\times A} |\rewardfunc(s,a)|\le Z.
\]
Then, for all probability measures $\mu,\nu$ on $\Omega$,
\[
\left|
\mathbb E_{s_0a_0\cdots \sim \mu}\left[\sum_{t=0}^{\infty}\gamma^t \rewardfunc(s_t,a_t)\right]
-
\mathbb E_{s_0a_0\cdots \sim \nu}\left[\sum_{t=0}^{\infty}\gamma^t \rewardfunc(s_t,a_t)\right]
\right|
\le
\frac{2Z}{1-\gamma}\,
W_{1,\gamma}(\mu,\nu).
\]

Moreover, suppose that the absolute value of the total undiscounted return is
uniformly bounded by $B$ on $\Omega$, \ie
\[
\left|\sum_{t=0}^{\infty}\rewardfunc(s_t,a_t)\right|\le B
\qquad\text{for all } s_0a_0s_1a_1\cdots\in\Omega.
\]
Then, for all probability measures $\mu,\nu$ on $\Omega$,
\[
\left|
\mathbb E_{s_0a_0\cdots \sim \mu}\left[\sum_{t=0}^{\infty}\rewardfunc(s_t,a_t)\right]
-
\mathbb E_{s_0a_0\cdots \sim \nu}\left[\sum_{t=0}^{\infty}\rewardfunc(s_t,a_t)\right]
\right|
\le
2B\,\mathrm{TV}(\mu,\nu).
\]
\end{lemma}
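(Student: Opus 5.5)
The plan is to handle the discounted and undiscounted statements separately, each by coupling the two path measures and integrating the pointwise difference of returns. Throughout, write $G_\gamma(\omega)=\sum_{t\ge0}\gamma^t\rewardfunc(s_t,a_t)$ for $\omega=s_0a_0s_1a_1\cdots\in\Omega$. In the discounted case, boundedness of $\rewardfunc$ by $Z$ gives $|G_\gamma|\le Z/(1-\gamma)$, so $G_\gamma$ is bounded and integrable under both $\mu$ and $\nu$.

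\textbf{Discounted case.} The key step is a Lipschitz estimate for $G_\gamma$ with respect to $d_\gamma$. For any $\omega,\omega'$, only indices $t$ with $(s_t,a_t)\neq(s'_t,a'_t)$ contribute to $G_\gamma(\omega)-G_\gamma(\omega')$, and each such term is at most $2Z\gamma^t$ in absolute value. Hence
\[
|G_\gamma(\omega)-G_\gamma(\omega')|\le 2Z\sum_{t}\gamma^t\mathbf 1\{(s_t,a_t)\neq(s'_t,a'_t)\}=\frac{2Z}{1-\gamma}\,d_\gamma(\omega,\omega').
\]
Now take any coupling $\xi\in\Gamma(\mu,\nu)$. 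Since the marginals of $\xi$ are $\mu$ and $\nu$, we have $\mathbb E_\mu[G_\gamma]-\mathbb E_\nu[G_\gamma]=\int (G_\gamma(\omega)-G_\gamma(\omega'))\,\xi(d\omega,d\omega')$. Bounding the integrand by the Lipschitz estimate and taking the infimum over couplings yields the first claim.

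\textbf{Undiscounted case.} Let $G=\sum_t\rewardfunc(s_t,a_t)$, which is assumed defined with $|G|\le B$ everywhere. Then $|G(\omega)-G(\omega')|\le 2B\,\mathbf 1\{\omega\neq\omega'\}$. Integrating against a maximal coupling gives $|\mathbb E_\mu G-\mathbb E_\nu G|\le 2B\,\xi(\omega\neq\omega')=2B\,\mathrm{TV}(\mu,\nu)$. This uses the maximal coupling already invoked in the proof of \cref{lemma:dominating distance}. Alternatively, one can avoid couplings entirely and use the standard bound $|\int f\,d(\mu-\nu)|\le 2\|f\|_\infty\mathrm{TV}(\mu,\nu)$, which follows from the Hahn decomposition of $\mu-\nu$.

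\textbf{Main obstacle.} The difficulty is measure-theoretic rather than computational, and there are two points to secure. First, $G_\gamma$ and $G$ must be measurable with respect to $\mathcal F_\Omega$. This holds as pointwise limits of partial sums of measurable cylinder functions, provided $\rewardfunc$ is measurable. Second, the maximal coupling must exist on $\Omega$ with a measurable diagonal, which is true for standard Borel spaces. If existence of the maximal coupling is a concern, I would use the Hahn-decomposition route instead, since it needs only boundedness and measurability of $G$.
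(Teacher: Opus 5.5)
Your proposal is correct and follows essentially the same route as the paper. The discounted bound comes from the pointwise estimate $|G_\gamma(h)-G_\gamma(h')|\le \frac{2Z}{1-\gamma}d_\gamma(h,h')$, integrated against an arbitrary coupling and then minimized over couplings. The undiscounted bound comes from the standard inequality between expectation gaps and total variation: the paper writes it as $\left|\int f\,d\mu-\int f\,d\nu\right|\le\operatorname{span}(f)\,\mathrm{TV}(\mu,\nu)$ with $\operatorname{span}(G_1)\le 2B$, while you use $2\|G\|_\infty$ (or a maximal coupling), which gives the same constant here.
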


\begin{proof}
For any $h=s_0a_0s_1a_1\cdots$, let
\[
G_\gamma(h)\coloneqq\sum_{t=0}^{\infty}\gamma^t \rewardfunc(s_t,a_t),
\]
and let
\[
\operatorname{span}(\rewardfunc)\coloneqq
\sup_{(s,a),(s',a')\in S\times A}
|\rewardfunc(s,a)-\rewardfunc(s',a')|.
\]
Since $|\rewardfunc(s,a)|\le Z$ for all $(s,a)$, we have
\[
\operatorname{span}(\rewardfunc)\le 2Z.
\]

Let $\xi\in\Gamma(\mu,\nu)$ be any coupling. Then
\begin{align*}
\left|
\int G_\gamma\,d\mu - \int G_\gamma\,d\nu
\right|
&=
\left|
\int_{\Omega\times\Omega}
\bigl(G_\gamma(h)-G_\gamma(h')\bigr)\,\xi(dh,dh')
\right| \\
&\le
\int_{\Omega\times\Omega}
|G_\gamma(h)-G_\gamma(h')|\,\xi(dh,dh').
\end{align*}
Now, for $h=s_0a_0\cdots$ and $h'=s'_0a'_0\cdots$,
\begin{align*}
|G_\gamma(h)-G_\gamma(h')|
&\le
\sum_{t=0}^{\infty}
\gamma^t |\rewardfunc(s_t,a_t)-\rewardfunc(s_t',a_t')| \\
&\le
\operatorname{span}(\rewardfunc)
\sum_{t=0}^{\infty}
\gamma^t
\mathbf 1\!\left\{(s_t,a_t)\neq (s_t',a_t')\right\} \\
&=
\frac{\operatorname{span}(\rewardfunc)}{1-\gamma}\,
d_\gamma(h,h').
\end{align*}
Hence
\[
\left|
\int G_\gamma\,d\mu - \int G_\gamma\,d\nu
\right|
\le
\frac{\operatorname{span}(\rewardfunc)}{1-\gamma}
\int d_\gamma(h,h')\,\xi(dh,dh').
\]
Since this holds for every coupling $\xi\in\Gamma(\mu,\nu)$, taking the
infimum over $\xi$ yields
\[
\left|
\int G_\gamma\,d\mu - \int G_\gamma\,d\nu
\right|
\le
\frac{\operatorname{span}(\rewardfunc)}{1-\gamma}\,
W_{1,\gamma}(\mu,\nu)
\le
\frac{2Z}{1-\gamma}\,
W_{1,\gamma}(\mu,\nu).
\]

For the undiscounted case, define
\[
G_1(h)\coloneqq\sum_{t=0}^{\infty}\rewardfunc(s_t,a_t).
\]
By assumption, $|G_1(h)|\le B$ for all $h\in\Omega$, hence
\[
\operatorname{span}(G_1)\coloneqq\sup_{h,h'\in\Omega}|G_1(h)-G_1(h')|\le 2B.
\]
Using the standard inequality
\[
\left|\int f\,d\mu-\int f\,d\nu\right|
\le
\operatorname{span}(f)\,\mathrm{TV}(\mu,\nu)
\]
for bounded measurable functions $f$, applied to $f=G_1$, we obtain
\[
\left|
\int G_1\,d\mu-\int G_1\,d\nu
\right|
\le
2B\,\mathrm{TV}(\mu,\nu).
\qedhere
\]
\end{proof}

\begin{lemma}[Approximate completeness implies approximate optimality]\label{lemma:completeness-to-optimality}
Let \(\spec\) be a specification, let \(\rmdp\) be an RMDP equipped with a
monitor observation map \(\alpha\), let
\(\mathfrak S\) be a shield acting on \((\rmdp,\alpha)\), and let
\(\mathcal C\subseteq\mathrm{HR}\).

\begin{enumerate}
    \item Suppose that $
    \sup_{(s,a)\in S\times\mathsf A}|R(s,a)|\le Z$, that $\gamma<1$,
    and that \(\mathfrak S\) is
    \(\mathcal C\)-\(\epsilon\)-\(W_{1,\gamma}\)-complete
    over \((\rmdp,\alpha)\) for \(\spec\). Then \(\mathfrak S\) is
    \[
    \mathcal C\text{-}
    \left(\frac{2Z\epsilon}{1-\gamma},\gamma\right)\text{-optimal}
    \]
    over \((\rmdp,\alpha)\) for \(\spec\).

    \item Suppose that the absolute value of the total undiscounted return
    is uniformly bounded by \(B\), and that \(\mathfrak S\) is
    \(\mathcal C\)-\(\epsilon\)-\(\mathrm{TV}\)-complete
    over \((\rmdp,\alpha)\) for \(\spec\). Then \(\mathfrak S\) is
    \[
    \mathcal C\text{-}(2B\epsilon,1)\text{-optimal}
    \]
    over \((\rmdp,\alpha)\) for \(\spec\).
\end{enumerate}
\end{lemma}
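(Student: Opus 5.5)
The lemma follows directly from the definitions by combining approximate completeness with the previous lemma, which bounds return differences by $W_{1,\gamma}$ or $\mathrm{TV}$ distances. We treat both items in parallel.

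Fix a policy $\pi\in\mathcal C$ of $\rmdp$ with $\rmdp,\pi\models\spec$. By the completeness assumption, there is an HR policy $\overline\pi$ of $\rmdp\otimes_\alpha\mathcal D$, compliant with $\mathfrak S$, such that $W^{\rmdp}_{1,\gamma}(\pi,\overline\pi^{\downarrow})\le\epsilon$ in item~1, or $\mathrm{TV}_{\rmdp}(\pi,\overline\pi^{\downarrow})\le\epsilon$ in item~2. We show this same $\overline\pi$ witnesses approximate optimality.

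Fix an arbitrary adversary $\nature$ of $\rmdp$. Let $\mu=\prob_{\rmdp[\nature],\pi}$ and $\nu=\prob_{\rmdp[\nature],\overline\pi^{\downarrow}}$. Both are measures on the state--action path space $\Omega=(S\mathsf A)^\omega$; histories of $\rmdp[\nature]$ are identified with concrete state--action paths, so $\Omega$ is the right space. By definition of the suprema,
\[
W_{1,\gamma}(\mu,\nu)\le W^{\rmdp}_{1,\gamma}(\pi,\overline\pi^{\downarrow})\le\epsilon \quad\text{(item 1)},
\qquad
\mathrm{TV}(\mu,\nu)\le\mathrm{TV}_{\rmdp}(\pi,\overline\pi^{\downarrow})\le\epsilon \quad\text{(item 2)}.
\]
Now $J^\gamma_{\rmdp[\nature]}(\pi)$ and $J^\gamma_{\rmdp[\nature]}(\overline\pi^{\downarrow})$ are the expectations of the (discounted or total) return under $\mu$ and $\nu$, respectively. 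The previous lemma then gives
\[
\bigl|J^\gamma_{\rmdp[\nature]}(\pi)-J^\gamma_{\rmdp[\nature]}(\overline\pi^{\downarrow})\bigr|\le\frac{2Z}{1-\gamma}\,\epsilon
\]
in item~1, and the bound $2B\epsilon$ in item~2. Dropping the absolute value yields $J^\gamma_{\rmdp[\nature]}(\overline\pi^{\downarrow})\ge J^\gamma_{\rmdp[\nature]}(\pi)-\tfrac{2Z\epsilon}{1-\gamma}$, resp.\ $J^1_{\rmdp[\nature]}(\overline\pi^{\downarrow})\ge J^1_{\rmdp[\nature]}(\pi)-2B\epsilon$. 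Since $\nature$ was arbitrary and $\overline\pi$ does not depend on $\nature$, this is exactly $\mathcal C$-$(\tfrac{2Z\epsilon}{1-\gamma},\gamma)$-optimality, resp.\ $\mathcal C$-$(2B\epsilon,1)$-optimality.

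The argument is essentially bookkeeping. The only point needing care is that the reward convention matches: the previous lemma uses $\rewardfunc(s_t,a_t)$ on state--action paths, while the main text writes $R(s_t)$. This is harmless, since a state reward is a special case of a state--action reward, and the boundedness hypotheses carry over unchanged.
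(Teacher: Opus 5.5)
Your proof is correct and follows the paper's argument essentially verbatim: you take the completeness witness $\overline\pi$, note that the supremum over adversaries bounds the distance for each fixed $\nature$, apply the preceding return-difference lemma, and drop the absolute value. You also make explicit two points the paper leaves implicit: that $\overline\pi$ is chosen before quantifying over $\nature$, and that state rewards $R(s_t)$ are a special case of the state--action rewards used in the preceding lemma.
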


\begin{proof}
We prove the first item; the second is analogous. Let \(\pi\in\mathcal C\)
be such that \(\rmdp,\pi\models\spec\). By
\(\mathcal C\)-\(\epsilon\)-\(W_{1,\gamma}\)-completeness,
there exists a compliant policy \(\overline\pi\) such that,
\[
W_{1,\gamma}\!\left(
\prob_{\rmdp[\nature],\pi},
\prob_{\rmdp[\nature],\overline\pi^{\downarrow}}
\right)
\le \epsilon.
\]
Fix an adversary \(\nature\). Applying the previous lemma to the two path
measures above yields
\[
\left|
J^\gamma_{\rmdp[\nature]}(\pi)
-
J^\gamma_{\rmdp[\nature]}(\overline\pi^{\downarrow})
\right|
\le
\frac{2Z}{1-\gamma}\epsilon.
\]
Hence
\[
J^\gamma_{\rmdp[\nature]}(\overline\pi^{\downarrow})
\ge
J^\gamma_{\rmdp[\nature]}(\pi)
-
\frac{2Z}{1-\gamma}\epsilon.
\]
Since \(\nature\) was arbitrary, \(\mathfrak S\) is
\(\mathcal C\)-\(\left(\frac{2Z}{1-\gamma}\epsilon,\gamma\right)\)-optimal.
\end{proof}

In the rest of the section, we let $\rmdp$ be an RMDP equipped with a \textbf{finite} safety abstraction $\alpha\colon S\to\Xi$, we let $
\tuple{\Xi,\mathsf A,A_\alpha,\mathcal P_\alpha,\xiinit,AP,\labelfunc_\alpha}=\rmdp/\alpha$, we let \(\spec=\mathbb P_{\ge 1-p}(\formula)\), we let
\(\mathcal A=\tuple{Q,2^{AP},\qinit,\delta,F}\) recognize the bad
prefixes of \(\formula\), and $\beta\colon \Xi\times Q\to[0,1]$ be an $((\rmdp/\alpha)\otimes_{\labelfunc_\alpha}\mathcal A,\Xi\times F)$-
inductive value function
such that $
\beta(\xiinit,\qinit)\le p$.

\begin{theorem}[Realizability and soundness, restatement of~\cref{thm:RMDP_soundness}]
    \label{th:soundness-RMDP}
    The shield \(\mathfrak S(\rmdp/\alpha,\mathcal A,\beta)\) acts on
    \((\rmdp,\alpha)\), is realizable over \((\rmdp,\alpha)\), and is
    sound over \((\rmdp,\alpha)\) for \(\spec\).
\end{theorem}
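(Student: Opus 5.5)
The proof splits into three claims, handled in the order stated.

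\emph{Acting on $(\rmdp,\alpha)$.} This is immediate from the definitions. The shield $\mathfrak S(\rmdp/\alpha,\mathcal A,\beta)$ is a shield over $(\Xi,A_\alpha)$ by the lemma following \cref{def-abstract-shield}, applied to the finite RMDP $\rmdp/\alpha$. Moreover, $A_\alpha(\alpha(s))=A(s)$ by condition~1 of \cref{def:abstraction}, and the action spaces $\mathsf A$ coincide.

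\emph{Realizability.} By the lemma that non-blocking shields are realizable, it suffices to show that the shield is non-blocking. First I show that $(\xi,q,y)\in D_\Gamma$ whenever $y\ge\beta(\xi,q)$. Pick $a$ attaining the infimum in $\mathcal B(\beta)(\xi,q)$; this is possible since $A$ is finite. Let $v=\beta$, the minimal element of $\mathcal V_\beta$. Then $\Dirac{(a,\beta)}$ satisfies the constraint of $\Gamma$, by inductiveness of $\beta$ for $q\notin F$. For $q\in F$, $\beta=1$ and the case must be handled separately: there I would argue that $\mathcal B(\beta)\le\beta$ still provides some action with value at most $1$, which is trivial. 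Hence $(\xiinit,\minit)\in D_\Gamma$, since $p\ge\beta(\xiinit,\qinit)$.

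For the closure condition, let $\rho\in\Gamma(\xi,m)$. The next monitor state is $(q',v(\xi',q'))$ with $v\in\mathcal V_\beta$, so $v(\xi',q')\ge\beta(\xi',q')$. By the claim above, the next state lies in $D_\Gamma$ surely, and the expectation in the non-blocking definition is $0$.

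\emph{Soundness.} This is the main step. Fix a compliant $\overline\pi$ and an adversary $\nature$ of $\rmdp$. I work in the product MDP $(\rmdp\otimes_\alpha\mathcal D)[\nature^\otimes]$, whose path measure projects onto that of $\rmdp[\nature]$ under $\overline\pi^\downarrow$, by the adversary-independent projection lemma. Along the product run, let $Y_t$ be the budget component of $M_t$ and $q_t$ the DFA component. By construction, $q_t=\delta^*(\qinit,L(s_0\cdots s_t))$ with the convention that $\delta$ reads the first label at the first step; I would check this indexing against $\zeta$.

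The key claim is that $(Y_t)$ is a supermartingale. Conditioned on $\overline H_t$, with $(a,v)\sim\overline\pi(\cdot\mid\overline H_t)$, the next concrete state is drawn from $\nature_t(h_t,a)\in\mathcal P(s_t,a)$. Its pushforward under $\alpha$ lies in $\mathcal P_\alpha(\alpha(s_t),a)$ by condition~2 of \cref{def:abstraction}, and $Y_{t+1}=v(\alpha(s_{t+1}),q_{t+1})$. Therefore
\[
\mathbb E[Y_{t+1}\mid\overline H_t]\le \mathbb E_{(a,v)}\Bigl[\sup_{\mu\in\mathcal P_\alpha(\alpha(s_t),a)}\mathbb E_{\xi'\sim\mu}\bigl[v(\xi',\delta(q_t,L_\alpha(\xi')))\bigr]\Bigr]\le Y_t
\]
almost surely, using compliance for the last inequality.

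Next, on the event $\{q_t\in F\}$ we have $Y_t\ge\beta(\cdot,q_t)=1$. Hence
\[
\prob(\exists k\le t:\ q_k\in F)\le\prob(Y_t=1)\le\mathbb E[Y_t]\le Y_0=p,
\]
provided $F$ is absorbing, which holds for bad-prefix DFAs (any extension of a bad prefix is bad). Otherwise I would stop the process at the first hitting time of $F$ and use optional stopping. Letting $t\to\infty$ bounds the probability that a bad prefix occurs by $p$. Since $\formula$ is a safety formula, violation is equivalent to having a bad prefix, so the satisfaction probability is at least $1-p$ for every $\nature$.

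The main obstacle I expect is making the supermartingale step rigorous. This involves measurability of the supremum over the uncertainty set, the alignment of the initial monitor state $\minit=(\qinit,p)$ with the first label, and transferring the bound from the product measure to $\overline\pi^\downarrow$ on $\rmdp$ via the projection lemma. For the last point, the violation event depends only on the projected path, so its probability is the same under both measures.
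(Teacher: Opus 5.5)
Your proposal is correct and follows the paper's proof essentially step for step. For realizability you show non-blocking via $\{(\xi,q,y): y\ge\beta(\xi,q)\}\subseteq D_\Gamma$ with witness $\Dirac{a}\otimes\Dirac{\beta}$, and for soundness you use the budget supermartingale $(Y_t)$, which equals $1$ whenever the DFA component lies in $F$. The only cosmetic difference is that you bound $\Pr(\exists k\le t: q_k\in F)$ using closure of reachable accepting states of a bad-prefix DFA together with $\mathbb E[Y_t]\le p$, whereas the paper applies optional stopping at $\tau_F$; both arguments are valid.
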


\begin{proof}
Let \(\mathfrak S(\rmdp/\alpha,\mathcal A,\beta)=(\mathcal D,\Gamma)\).
Since \(\alpha\) is an abstraction of \(\rmdp\), we have
\(A(s)=A_\alpha(\alpha(s))\) for all \(s\in S\). Hence, the shield acts on
\((\rmdp,\alpha)\).

We first prove that the shield is nonblocking. Let
\[
D_\Gamma=\{(\xi,q,y):\Gamma(\xi,q,y)\neq\emptyset\}.
\]
We claim that
\[
B_\beta\coloneqq\{(\xi,q,y):\beta(\xi,q)\le y\}\subseteq D_\Gamma.
\]
Indeed, fix \((\xi,q,y)\in B_\beta\). Since \(\Xi\) and \(A_\alpha\) are
finite, the infimum in the Bellman operator is attained. Hence, by
inductiveness of \(\beta\), there exists \(a\in A_\alpha(\xi)\) such that
\[
\sup_{\mu\in\mathcal P_\alpha(\xi,a)}
\mathbb E_{\xi'\sim\mu}
\left[
\beta\bigl(\xi',\delta(q,\labelfunc_\alpha(\xi'))\bigr)
\right]
\le
\beta(\xi,q).
\]
If \(q\in F\), then \(\beta(\xi,q)=1\), and any enabled action satisfies
the required bound since \(\beta\le 1\). Thus
\[
\Dirac{a}\otimes\Dirac{\beta}\in\Gamma(\xi,q,y),
\]
because \(y\ge\beta(\xi,q)\). Therefore \((\xi,q,y)\in D_\Gamma\).

Since \(\beta(\xiinit,\qinit)\le p\), we have
\((\xiinit,\qinit,p)\in D_\Gamma\). Now let
\((\xi,q,y)\in D_\Gamma\) and \(\rho\in\Gamma(\xi,q,y)\). By definition of
the shield, \(\rho\) is supported on
\(A_\alpha(\xi)\times\mathcal V_\beta\). Hence, for \(\rho\)-almost every
\((a,v)\), every \(\mu\in\mathcal P_\alpha(\xi,a)\), and every next
abstract state \(\xi'\), if
\[
q'=\delta(q,\labelfunc_\alpha(\xi')),
\qquad
y'=v(\xi',q'),
\]
then \(y'\ge\beta(\xi',q')\), since \(v\in\mathcal V_\beta\). Thus
\((\xi',q',y')\in B_\beta\subseteq D_\Gamma\). Therefore, every compliant
one-step action distribution keeps the next abstract-monitor state in
\(D_\Gamma\) with probability one under every admissible transition
measure. Hence, the shield is nonblocking. By the previous lemma, it is
realizable.

It remains to prove soundness. Let \(\overline\pi\) be an HR policy on
\(\rmdp\otimes_\alpha\mathcal D\) compliant with the shield, and let
\(\nature\) be an adversary of \(\rmdp\). Consider the product process
under \(\overline\pi\) and the canonical product adversary
\(\nature^\otimes\). Write
\(
\overline S_t=(S_t,Q_t,Y_t),
\)
and let \(\overline{\mathcal F}_t\) be the \(\sigma\)-algebra generated by
the augmented history \(\overline H_t\).

By compliance, almost surely,
\[
\overline\pi(\cdot\mid \overline H_t)
\in
\Gamma(\alpha(S_t),Q_t,Y_t).
\]
For a realized product action \((a,v)\), the adversary chooses a measure
in \(\mathcal P(S_t,a)\), whose pushforward by \(\alpha\) belongs to
\(\mathcal P_\alpha(\alpha(S_t),a)\). Hence, the defining inequality of
\(\Gamma\) gives
\[
\mathbb E[Y_{t+1}\mid\overline{\mathcal F}_t]\le Y_t.
\]
Thus \((Y_t)_{t\in\Nat}\) is a bounded supermartingale.
Let
\(
\tau_F=\inf\{t\ge 0:Q_t\in F\}.
\)
Since \(\beta=1\) on \(\Xi\times F\), and every
\(v\in\mathcal V_\beta\) satisfies \(v(\xi,q)\ge\beta(\xi,q)\), every
reachable accepting monitor state satisfies \(Q_t\in F\Rightarrow Y_t=1\).
Therefore, for every \(n\in\Nat\),
\[
\mathbf 1_{\{\tau_F\le n\}}\le Y_{\tau_F\wedge n}.
\]
By optional stopping for bounded supermartingales,
\[
\Pr(\tau_F\le n)
\le
\mathbb E[Y_{\tau_F\wedge n}]
\le
Y_0
=
p.
\]
Letting \(n\to\infty\), we obtain
\[
\Pr(\tau_F<\infty)\le p.
\]

The DFA \(\mathcal A\) recognizes the bad prefixes of \(\formula\), so
\(\tau_F<\infty\) is exactly the event that \(\formula\) is violated.
Moreover, by construction of the projected policy, the law of paths in
\(\rmdp[\nature]\) under \(\overline\pi^\downarrow\) is the projection of
the product path law. Since the product preserves labels, the satisfaction
probability is the same under projection. Hence
\[
\rmdp[\nature],\overline\pi^\downarrow
\models
\mathbb P_{\ge 1-p}(\formula)
=
\spec.
\]
Since \(\nature\) was arbitrary, we have
\[
\rmdp,\overline\pi^\downarrow\models\spec.
\]
Thus, the shield is sound.
\end{proof}

We proceed with proving the optimality results.
In what follows, we let $\beta^{\infty}$ be the lowest fixed point of
$\mathcal B_{(\rmdp/\alpha)\otimes_{\labelfunc_{\alpha}}\mathcal A}^{\Xi\times F}$.
Furthermore, we say that an RMDP $\rmdp$ is \emph{conditionally deterministic} w.r.t an abstraction $\alpha$ if every reachable abstract
history determines a unique concrete history. More precisely, for every abstract history $
    \widehat h_t=\xi_0a_0\xi_1a_1\cdots a_{t-1}\xi_t$
with \(\xi_0=\alpha(\sinit)\), there is at most one concrete history $h_t=s_0a_0s_1a_1\cdots a_{t-1}s_t$
such that \(s_0=\sinit\), \(\alpha(s_i)=\xi_i\) for all \(i\le t\), and
\(h_t\) has positive probability under some adversary of \(\rmdp\).
When such a concrete history exists, we denote its last state by $\chi(\widehat h_t)=s_t$.

\begin{theorem}
Suppose that $\beta(\xiinit,\qinit)<p$, and that $\rmdp$ is conditionally deterministic w.r.t. $\alpha$. Then the shield
$\mathfrak S(\rmdp/\alpha,\mathcal A,\beta)$ acts on
$(\rmdp,\alpha)$, and is $\mathrm{HR}$-$\epsilon$-TV-complete over $(\rmdp,\alpha)$
for $\spec$, where
\[
\epsilon=
\frac{\infnorm{\beta-\beta^{\infty}}}
{\infnorm{\beta-\beta^{\infty}}+p-\beta(\xiinit,\qinit)}.
\]
In particular, the shield
$\mathfrak S(\rmdp/\alpha,\mathcal A,\beta^{\infty})$
acts on $(\rmdp,\alpha)$ and is $\mathrm{HR}$-complete over $(\rmdp,\alpha)$ for~$\spec$.
\end{theorem}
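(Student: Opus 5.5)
The plan is to show that every robustly safe policy $\pi$ can be tracked by a compliant product policy, except on a small branch of probability $\epsilon$ where we switch to a $\beta$-greedy safe policy. Write $d=\infnorm{\beta-\beta^\infty}$ and $\beta_0=\beta(\xiinit,\qinit)<p$. The choice of $\epsilon=d/(d+p-\beta_0)$ is exactly the one making
\[
(1-\epsilon)(p+d)+\epsilon\beta_0=p.
\]
The claim that $\mathfrak S$ acts on $(\rmdp,\alpha)$ is already contained in \cref{th:soundness-RMDP}, so only completeness needs work. The ``in particular'' part follows by taking $\beta=\beta^\infty$, so that $d=0$ and $\epsilon=0$.

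\textbf{Step 1: a worst-case value process for $\pi$.} Fix $\pi\in\mathrm{HR}$ with $\rmdp,\pi\models\spec$. For each reachable concrete history $h_t$, with DFA state $q_t=\delta^*(\qinit,L(h_t))$, define
\[
V(h_t)=\sup_{\nature}\prob_{\rmdp[\nature],\pi}\bigl(\text{reach } F \mid h_t\bigr),
\]
where the supremum is over adversaries that continue from $h_t$. I will prove two properties:
\begin{itemize}
\item[(i)] $V(\sinit)\le p$, and $V(h_t)=1$ whenever $q_t\in F$.
\item[(ii)] The rectangular dynamic-programming identity
\[
V(h_t)=\mathbb E_{a\sim\pi(\cdot\mid h_t)}\Bigl[\sup_{\mu\in\mathcal P(s_t,a)}\mathbb E_{s'\sim\mu}\bigl[V(h_tas')\bigr]\Bigr].
\]
\end{itemize}
The identity (ii) should follow from $(s,a)$-rectangularity: adversary choices after different actions and histories can be pasted together independently, with $\varepsilon$-optimal measurable selections. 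Next, $V$ is a robust value of a particular policy, so it dominates the optimal robust value. The optimal robust value equals $\beta^\infty$ on the quotient, since $\alpha$ is an abstraction. Hence
\[
V(h_t)\ge\beta^\infty(\alpha(s_t),q_t)\ge \beta(\alpha(s_t),q_t)-d .
\]
Conditional determinism is used here. The shield and the product policy only see abstract information, but each reachable abstract history $\widehat h_t$ determines a unique concrete history, namely the one ending in $\chi(\widehat h_t)$. So $V$ can be read as a function of the abstract history, and the budgets chosen below are legitimate elements of $\mathcal V_\beta$ indexed by abstract next states $\xi'$.

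\textbf{Step 2: the compliant policy $\overline\pi$.} At time $0$, $\overline\pi$ randomizes between two branches, and the branch is recorded in the auxiliary action $u_0$, which is part of the product history. Any tie with coinciding budgets is resolved by a tag carried in that history.
\begin{itemize}
\item With probability $1-\epsilon$, branch A plays $a\sim\pi(\cdot\mid h_t)$. It assigns the budget vector
\[
v(\xi',q')=\min\bigl(1,\,V(h_ta\,\chi(\cdot))+d\bigr)
\]
on the reachable next histories, and $v=\beta$ elsewhere. Then $v\in\mathcal V_\beta$ by Step 1.
\item With probability $\epsilon$, branch B plays the $\beta$-greedy memoryless policy on the abstract product with budget vector $\beta$. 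Its compliance is exactly the nonblocking argument of \cref{th:soundness-RMDP}.
\end{itemize}
In branch A the budget satisfies $Y_t=\min(1,V+d)$. The $\Gamma$-condition at later times follows from (ii), because $x\mapsto\min(1,x+d)$ is monotone and concave, so the inequality passes through it.

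At time $0$ the constraint in $\Gamma$ is linear in $\rho$, so the mixture satisfies
\[
\mathbb E_\rho[\cdots]\le(1-\epsilon)\min(1,V(\sinit)+d)+\epsilon\beta_0\le(1-\epsilon)(p+d)+\epsilon\beta_0=p .
\]
Thus $\overline\pi$ is compliant under every adversary.

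\textbf{Step 3: the TV bound.} Under any adversary $\nature$, the projected law $\prob_{\rmdp[\nature],\overline\pi^\downarrow}$ is the mixture
\[
(1-\epsilon)\prob_{\rmdp[\nature],\pi}+\epsilon\,\prob_{\rmdp[\nature],\pi_B},
\]
where $\pi_B$ is the branch-B policy. This uses the adversary-independent projection lemma. A mixture of this form is at total-variation distance at most $\epsilon$ from $\prob_{\rmdp[\nature],\pi}$. This gives $\mathrm{HR}$-$\epsilon$-TV-completeness.

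\textbf{Main obstacle.} I expect the hard step to be Step 1. It requires showing that the worst-case conditional value $V$ is well defined and measurable, and that it satisfies the exact decomposition (ii) rather than just an inequality. The decomposition must hold with the supremum taken inside the expectation over actions. My first approach is to establish ``$\le$'' by conditioning an arbitrary adversary on the first step. For ``$\ge$'', I will paste together $\eta$-optimal continuation adversaries selected measurably per $(h_t,a,s')$, using rectangularity, and then let $\eta\to0$. If equality proves too delicate, the inequality ``$\ge$'' together with a supermartingale of the form $Y_t\le\cdots$ is all that compliance actually needs.
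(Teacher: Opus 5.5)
Your proposal is correct and follows essentially the same route as the paper's proof. The shared ingredients are the budgets $\min\{1,V_\pi+d\}$ read off the unique concrete successor via conditional determinism, the one-step inequality $V_\pi(\widetilde h)\ge\mathbb E_{a\sim\pi}\sup_{\mu}\mathbb E_{s'\sim\mu}[V_\pi(\widetilde h a s')]$ as the only dynamic-programming fact actually needed (as you note), and an initial $(1-\epsilon)/\epsilon$ mixture with the $\beta$-greedy policy calibrated by $(1-\epsilon)(p+d)+\epsilon\beta(\xiinit,\qinit)=p$. The only cosmetic difference is that the paper replaces your branch ``tag'' by the rule ``track $\pi$ iff $y\ge\min\{1,V_\pi(\widetilde h)+d\}$, otherwise play the safe policy''; this rule is automatically a function of the product history and handles coinciding budgets for free, since at such histories the realized budget already satisfies both branches' constraints.
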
\label{th:RMDP-TV-completeness}
\begin{proof}
Since \(\alpha\) is an abstraction of \(\rmdp\), we have
\(A(s)=A_\alpha(\alpha(s))\) for all \(s\in S\), so the shield acts on
\((\rmdp,\alpha)\).

Fix an HR policy \(\pi\) of \(\rmdp\) such that
\(\rmdp,\pi\models\spec\). For a history \(\widetilde h\) of
\(\rmdp\otimes_{\labelfunc}\mathcal A\), let $
V_\pi(\widetilde h)$
denote the supremal probability, over all adversaries, of eventually
reaching \(F\) when continuing from \(\widetilde h\) under \(\pi\). Since
\(\pi\) satisfies \(\spec=\mathbb P_{\ge 1-p}(\formula)\), we have
\(V_\pi((\sinit,\qinit))\le p\).

Define $
\delta=\|\beta-\beta^\infty\|_\infty$ and $
V_\pi^\delta(\widetilde h)
=
\min\{1,V_\pi(\widetilde h)+\delta\}$. 
For \(a\in A(s)\) and
\(\xi'\in\Xi\), we write $
q_{\xi'}=\delta(q,\labelfunc_\alpha(\xi'))$.
We define \(z_{\widetilde h,a}\in[0,1]^{\Xi\times Q}\) as follows. If, for any history $\widetilde h=(s_0,q_0) a_0\cdots (s_n,q_n)$, denoting $\widehat h=\alpha(s_0)a_0\cdots \alpha(s_n)$, the
history \(\widehat{h}a\xi'\) is reachable in \(\rmdp/\alpha\), then we set
\[
z_{\widetilde h,a}(\xi',q_{\xi'})
=
V_\pi^\delta\!\left(
    \widetilde h a\left(\chi\left({\widehat h}a\xi'\right),q_{\xi'}\right)
\right).
\]
On entries \((\xi',q')\) with \(q'\neq q_{\xi'}\), and on unreachable
abstract successors, we set
\[
z_{\widetilde h,a}(\xi',q')
=
\min\{1,\beta^\infty(\xi',q')+\delta\}.
\]

This vector belongs to \(\mathcal V_\beta\). Indeed, for unreachable
entries and for entries not corresponding to the automaton successor, this
follows immediately from
\[
\beta(\xi',q')\le \beta^\infty(\xi',q')+\delta
\le \min\{1,\beta^\infty(\xi',q')+\delta\}.
\]
For the other entries, we have
\[
\beta^\infty(\xi',q_{\xi'})
\le
V_\pi\!\left(
    \widetilde h a\left(\chi\left({\widehat h} a\xi'\right),q_{\xi'}\right)
\right),
\]
and therefore
\[
\beta(\xi',q_{\xi'})
\le
\min\{1,\beta^\infty(\xi',q_{\xi'})+\delta\}
\le
z_{\widetilde h,a}(\xi',q_{\xi'}).
\]
Hence \(z_{\widetilde h,a}\in\mathcal V_\beta\).

Define an HR policy \(\overline\pi^0\) on
\(\rmdp\otimes_\alpha\mathcal D\) as follows. For an augmented history
\(\overline h\), let \(\widetilde h\) be its associated history in
\(\rmdp\otimes\mathcal A\). Then \(\overline\pi^0(\cdot\mid\overline h)\)
is the pushforward of \(\pi(\cdot\mid\overline h^\downarrow)\) by $
a\mapsto (a,z_{\widetilde h,a})$.
Thus, if \(\overline\pi^0\) is followed at all histories, then
\[
(\overline\pi^0)^\downarrow\equiv\pi.
\]

We also define a safe HR policy \(\overline\pi_{\safe}\). For every
\((\xi,q)\), choose a distribution \(\mu_{\xi,q}\) over
\(A_\alpha(\xi)\) witnessing the inductiveness of \(\beta\), namely
\[
\mathbb E_{a\sim\mu_{\xi,q}}
\left[
\sup_{\lambda\in\mathcal P_\alpha(\xi,a)}
\mathbb E_{\xi'\sim\lambda}
\left[
\beta\bigl(\xi',\delta(q,\labelfunc_\alpha(\xi'))\bigr)
\right]
\right]
\le
\beta(\xi,q).
\]
For an augmented history ending in \((s,q,y)\), set
\[
\overline\pi_{\safe}(\cdot\mid\overline h)
=
\mu_{\alpha(s),q}\otimes\Dirac{\beta}.
\]
Whenever \(y\ge\beta(\alpha(s),q)\), this distribution belongs to
\(\Gamma(\alpha(s),q,y)\).
Let
\(
\epsilon=
\frac{\delta}{\delta+p-\beta(\xiinit,\qinit)}.
\)
and define the shielded HR policy \(\overline\pi^\epsilon\) as follows:
\begin{itemize}
    \item At the initial
    history, set
    \(
    \overline\pi^\epsilon
    =
    (1-\epsilon)\overline\pi^0+\epsilon\overline\pi_{\safe}.
    \)
    \item At any non-initial augmented history \(\overline h\) ending in
    \((s,q,y)\), with projection \(\widetilde h\), set
    \[
    \overline\pi^\epsilon(\cdot\mid\overline h)
    =
    \begin{cases}
    \overline\pi^0(\cdot\mid\overline h),
    &\text{if } y\ge V_\pi^\delta(\widetilde h),\\[1mm]
    \overline\pi_{\safe}(\cdot\mid\overline h),
    &\text{otherwise.}
    \end{cases}
    \]
\end{itemize}
We show that \(\overline\pi^\epsilon\) is compliant. At the initial state,
the expected next \(y\)-value under \(\overline\pi^0\) is at most
\(p+\delta\), while under \(\overline\pi_{\safe}\) it is at most
\(\beta(\xiinit,\qinit)\). Hence
\(
(1-\epsilon)(p+\delta)
+\epsilon\beta(\xiinit,\qinit)
=
p,
\)
so the initial action distribution is compliant.

Moreover, along every reachable history we have
\(y\ge\beta(\alpha(s),q)\), because both auxiliary vectors
\(z_{\widetilde h,a}\) and \(\beta\) belong to \(\mathcal V_\beta\). Thus,
if the policy chooses \(\overline\pi_{\safe}\), it is compliant. 
If the policy chooses \(\overline\pi^0\), then by definition
\(y\ge V_\pi^\delta(\widetilde h)\). We show that the shield constraint is
satisfied. Let \(\widehat h\) be the abstract history of $\rmdp/\alpha$ associated to
\(\widetilde h\), and let \(\xi=\alpha(s)\). For a chosen base action
\(a\), let \(\Xi_{t+1}\) denote the next abstract state. Under the
augmented action \((a,z_{\widetilde h,a})\), the next budget coordinate is
\[
Y_{t+1}
=
z_{\widetilde h,a}
\bigl(
\Xi_{t+1},
\delta(q,\labelfunc_\alpha(\Xi_{t+1}))
\bigr).
\]
Fix \(a\in A(s)\) and
\(\lambda\in\mathcal P_\alpha(\xi,a)\). By the definition of the quotient
RMDP, there exists \(\mu\in\mathcal P(s,a)\) such that
\(\lambda=\mu\circ\alpha^{-1}\). Moreover, since \(\rmdp\) is
conditionally deterministic with respect to \(\alpha\), for every abstract
successor \(\xi'\) in the support of \(\lambda\), the corresponding
concrete successor is uniquely given by
\(\chi(\widehat h a\xi')\). Therefore,
\[
\mathbb E_{\xi'\sim\lambda}
\left[
z_{\widetilde h,a}\bigl(\xi',
\delta(q,\labelfunc_\alpha(\xi'))\bigr)
\right]
\le
\delta
+
\mathbb E_{s'\sim\mu}
\left[
V_\pi\bigl(\widetilde h a(s',
\delta(q,\labelfunc(s')))\bigr)
\right].
\]
Taking the supremum over
\(\lambda\in\mathcal P_\alpha(\xi,a)\), we obtain
\[
\sup_{\lambda\in\mathcal P_\alpha(\xi,a)}
\mathbb E_{\xi'\sim\lambda}
\left[
z_{\widetilde h,a}\bigl(\xi',
\delta(q,\labelfunc_\alpha(\xi'))\bigr)
\right]
\le
\delta
+
\sup_{\mu\in\mathcal P(s,a)}
\mathbb E_{s'\sim\mu}
\left[
V_\pi\bigl(\widetilde h a(s',
\delta(q,\labelfunc(s')))\bigr)
\right].
\]
Averaging over
\(a\sim\pi(\cdot\mid\widetilde h^\downarrow)\) and using the dynamic
programming inequality for \(V_\pi\), we get
\[
\mathbb E[Y_{t+1}\mid\overline h,\overline\pi^0]
\le
V_\pi(\widetilde h)+\delta.
\]
Since \(Y_{t+1}\in[0,1]\), this also implies
\[
\mathbb E[Y_{t+1}\mid\overline h,\overline\pi^0]
\le
\min\{1,V_\pi(\widetilde h)+\delta\}
=
V_\pi^\delta(\widetilde h)
\le y.
\]
Hence the shield constraint is satisfied.
Hence \(\overline\pi^\epsilon(\cdot\mid\overline h)\in
\Gamma(\alpha(s),q,y)\). Therefore \(\overline\pi^\epsilon\) is compliant.

It remains to bound the TV distance. Fix an adversary \(\nature\) of
\(\rmdp\). Couple the run of \(\pi\) and the projected run of
\(\overline\pi^\epsilon\) as follows. At the initial decision, with
probability \(1-\epsilon\), use the \(\overline\pi^0\)-component and
couple the base action with the action sampled by \(\pi\); with probability
\(\epsilon\), use the safe component. 
On the first event, the projected history agrees with the run of $\pi$,
the invariant $
y\ge V_\pi^\delta(\widetilde h)$
ensures that \(\overline\pi^\epsilon\) keeps choosing
\(\overline\pi^0\) forever. Indeed, whenever \(\overline\pi^0\) is used
and the next abstract successor is \(\xi'\), the next budget is
\[
y'
=
z_{\widetilde h,a}\bigl(\xi',
\delta(q,\labelfunc_\alpha(\xi'))\bigr)
=
V_\pi^\delta\!\left(
\widetilde h a(\chi(\widehat h a\xi'),
\delta(q,\labelfunc_\alpha(\xi')))
\right),
\]
so the invariant is preserved along the coupled run.
Thus
the two projected paths agree on an event of probability at least
\(1-\epsilon\). Consequently,
\[
\mathrm{TV}\!\left(
\prob_{\rmdp[\nature],\pi},
\prob_{\rmdp[\nature],(\overline\pi^\epsilon)^\downarrow}
\right)
\le \epsilon .
\]
Since \(\nature\) was arbitrary, we obtain
\[
\mathrm{TV}_{\rmdp}\!\left(
\pi,(\overline\pi^\epsilon)^\downarrow
\right)
\le \epsilon.
\]
Therefore, the shield is
\(\mathrm{HR}\)-\(\epsilon\)-TV-complete.

If \(\beta=\beta^\infty\), then \(\delta=0\), hence \(\epsilon=0\). The
same construction gives a compliant policy \(\overline\pi^0\) whose
projection is equivalent to \(\pi\). Therefore, the shield is
\(\mathrm{HR}\)-complete.
\end{proof}

Notice that by Lemma \ref{lemma:dominating distance}, the above theorem implies that the shield $\mathfrak S(\rmdp/\alpha,\mathcal A,\beta)$ is \(\mathrm{HR}\)-$\epsilon$-$W_{1,\gamma}$-complete over $(\rmdp,\alpha)$ for $\spec$ as well.

By Lemma \ref{lemma:completeness-to-optimality}, we have the following.
\begin{theorem}[Approximate optimality of the shield, extension of~\cref{thm:RMDP_optimality}]
    Suppose that \(\beta(\xiinit,\qinit)<p\), that $\rmdp$ is conditionally deterministic w.r.t. $\alpha$, and let
    \[
    \epsilon=
    \frac{\|\beta-\beta^{\infty}\|_\infty}
    {\|\beta-\beta^{\infty}\|_\infty+p-\beta(\xiinit,\qinit)}.
    \]
    Then the two following properties hold.
    \begin{enumerate}
        \item If the absolute value of the total undiscounted return is
        uniformly bounded by \(B\), then
        \(\mathfrak S(\rmdp/\alpha,\mathcal A,\beta)\) is
        \[
        \mathrm{HR}\text{-}(2B\epsilon,1)\text{-optimal}
        \]
        over \((\rmdp,\alpha)\) for \(\spec\).
    
        \item If \(\gamma\in(0,1)\) and $
        \sup_{(s,a)\in S\times\mathsf A}|R(s,a)|\le Z$,
        then \(\mathfrak S(\rmdp/\alpha,\mathcal A,\beta)\) is
        \[
        \mathrm{HR}\text{-}
        \left(\frac{2Z\epsilon}{1-\gamma},\gamma\right)\text{-optimal}
        \]
        over \((\rmdp,\alpha)\) for \(\spec\).
    \end{enumerate}
    In particular, the shield
    \(\mathfrak S(\rmdp/\alpha,\mathcal A,\beta^\infty)\) is
    \(\mathrm{HR}\)-\((0,\gamma)\)-optimal for every \(\gamma\in(0,1]\)
    for which the corresponding return is well-defined.
\end{theorem}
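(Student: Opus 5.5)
This theorem is a corollary of the preceding HR-$\epsilon$-TV-completeness result (\cref{th:RMDP-TV-completeness}) combined with \cref{lemma:dominating distance} and \cref{lemma:completeness-to-optimality}. The plan is to feed completeness into the optimality lemma with $\mathcal C=\mathrm{HR}$ and the same $\epsilon$.

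\textbf{Step 1: TV-completeness.} Under the hypotheses $\beta(\xiinit,\qinit)<p$ and conditional determinism, the completeness theorem gives that $\mathfrak S(\rmdp/\alpha,\mathcal A,\beta)$ acts on $(\rmdp,\alpha)$. It is also $\mathrm{HR}$-$\epsilon$-TV-complete with
\[
\epsilon=\frac{\|\beta-\beta^{\infty}\|_\infty}{\|\beta-\beta^{\infty}\|_\infty+p-\beta(\xiinit,\qinit)}.
\]
The denominator is positive because $p-\beta(\xiinit,\qinit)>0$.

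\textbf{Step 2: undiscounted case (item 1).} Apply item 2 of \cref{lemma:completeness-to-optimality} directly. Fix a safe HR policy $\pi$ and take the compliant $\overline\pi$ from Step 1. For each adversary $\nature$, the uniform bound $B$ on the total return gives
\[
|J^1_{\rmdp[\nature]}(\pi)-J^1_{\rmdp[\nature]}(\overline\pi^\downarrow)|\le 2B\,\mathrm{TV}\bigl(\prob_{\rmdp[\nature],\pi},\prob_{\rmdp[\nature],\overline\pi^\downarrow}\bigr)\le 2B\epsilon.
\]
This is exactly $\mathrm{HR}$-$(2B\epsilon,1)$-optimality.

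\textbf{Step 3: discounted case (item 2).} \cref{lemma:dominating distance} gives $W_{1,\gamma}\le\mathrm{TV}$ pointwise for each pair of path measures. Taking the supremum over adversaries yields $W^{\rmdp}_{1,\gamma}(\pi,\overline\pi^\downarrow)\le\mathrm{TV}_{\rmdp}(\pi,\overline\pi^\downarrow)\le\epsilon$, so the shield is $\mathrm{HR}$-$\epsilon$-$W_{1,\gamma}$-complete. Item 1 of \cref{lemma:completeness-to-optimality}, with reward bound $Z$ and $\gamma<1$, then gives $\mathrm{HR}$-$(2Z\epsilon/(1-\gamma),\gamma)$-optimality.

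\textbf{Step 4: the exact case.} For $\beta=\beta^\infty$ we have $\|\beta-\beta^\infty\|_\infty=0$. The completeness theorem's final clause gives exact $\mathrm{HR}$-completeness, meaning $\pi\equiv\overline\pi^\downarrow$. Equal path measures under every adversary give equal returns whenever the return is well-defined, so the shield is $(0,\gamma)$-optimal.

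\textbf{Main obstacle.} The delicate point is the exact case when $\beta^\infty(\xiinit,\qinit)=p$, where the strict inequality hypothesis fails. There I would invoke the completeness theorem's $\delta=0$ construction directly. That construction uses $\overline\pi^0$ without any mixing, so no strict slack is needed. The remaining work is bookkeeping: the reward in the lemma is $R(s,a)$ while the main text uses $R(s)$, and the two are compatible.
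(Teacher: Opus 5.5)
Your proposal is correct and follows the paper's own argument. The paper also obtains this theorem as a corollary of the HR-$\epsilon$-TV-completeness theorem: it uses the lemma $W_{1,\gamma}\le \mathrm{TV}$ to get $W_{1,\gamma}$-completeness, and then the completeness-to-optimality lemma for the undiscounted and discounted cases. Your extra remark on the exact case is valid but not needed under the stated hypothesis: the $\delta=0$ construction needs no strict slack, so it survives even when $\beta^\infty(\xiinit,\qinit)=p$.
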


\section{Probabilistic Shielding for Unknown MDPs}

In this appendix, we present all extended definitions, results, and proofs for our shielding approach for unknown MDPs, learned as RMDP from~\cref{sec:unknown_MDPs}.

Throughout, we suppose that $\mdp=\tuple{S,A,P,\sinit,AP,\labelfunc,\rewardfunc}$ is an MDP equipped with a \textbf{finite} abstraction $\alpha\colon S\to\Xi$, we let $\mdp/\alpha = \tuple{\Xi,A_\alpha,P_\alpha,\xiinit,AP,\labelfunc_\alpha}$ by its quotient, we suppose that $\spec=\mathbb P_{\geq 1-p}(\formula)$ is a probabilistic safety LTL formula, that $\mathcal A=\tuple{Q,2^{AP},\qinit,\delta,F}$ is a DFA recognizing the bad prefixes of $\formula$, and that $\widehat{\mdp}=(\Xi,A_\alpha,\widehat{\mathcal P},\xiinit,AP,\labelfunc_\alpha,\rewardfunc_\alpha)$ is an RMDP with $P_\alpha\in \widehat{\mathcal P}$. Furthermore, we suppose that $\widehat \beta$ is an $(\widehat{\mdp}\otimes_{\labelfunc_\alpha}\mathcal A,\Xi\times F)$-inductive value function such that $\widehat\beta(\xiinit,\qinit)\leq p$.

\subsection{Soundness of the Shield for Unknown MDPs}
We first prove the soundness result from \cref{thm:unknown_MDP_soundness}.

\begin{definition}[Conditional concrete lift of the learned RMDP]
For every \(\xi\in\Xi\), let \(C_\xi=\alpha^{-1}(\xi)\). Since \(\Xi\)
is finite and \(\alpha\) is surjective, fix one representative
\(r_\xi\in C_\xi\).

For every \((s,a)\in\operatorname{Gr}(A)\) and every \(\xi'\in\Xi\), define
a probability measure \(K_{s,a}^{\xi'}\in\distr{S}\) by
\[
K_{s,a}^{\xi'}(B)
=
\begin{cases}
\displaystyle
\frac{P(s,a)(B\cap C_{\xi'})}
{P_\alpha(\alpha(s),a)(\xi')},
&
\text{if }P_\alpha(\alpha(s),a)(\xi')>0,\\[3mm]
\Dirac{r_{\xi'}}(B),
&
\text{if }P_\alpha(\alpha(s),a)(\xi')=0,
\end{cases}
\]
for every measurable \(B\subseteq S\). Thus \(K_{s,a}^{\xi'}\) is the
conditional distribution of the next concrete state inside the abstraction
cell \(C_{\xi'}\), with an arbitrary choice when that cell has zero
probability under the true transition.

For \(\widehat\mu\in\distr{\Xi}\), define
\[
\Lambda_{s,a}(\widehat\mu)
=
\sum_{\xi'\in\Xi}
\widehat\mu(\xi')K_{s,a}^{\xi'}.
\]
The \emph{conditional concrete lift} of \(\widehat{\mdp}\) is the RMDP
\(
\widetilde{\mdp}
=
\tuple{S,A,\widetilde{\mathcal P},\sinit,AP,\labelfunc,\rewardfunc},
\)
where
\[
\widetilde{\mathcal P}(s,a)
=
\left\{
\Lambda_{s,a}(\widehat\mu):
\widehat\mu\in\widehat{\mathcal P}(\alpha(s),a)
\right\}.
\]
\end{definition}

\begin{lemma}\label{lemma:lifted-RMDP}
    The map \(\alpha:S\to\Xi\) is an abstraction of
    \(\widetilde{\mdp}\), and
    \(
    \widetilde{\mdp}/\alpha=\widehat{\mdp}.
    \)
    Moreover, there exists an adversary \(\nature^P\) of
    \(\widetilde{\mdp}\) such that
    \(
    \mdp=\widetilde{\mdp}[\nature^P].
    \)
\end{lemma}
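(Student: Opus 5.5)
The proof reduces to three checks: the conditions of \cref{def:abstraction} for $\widetilde{\mdp}$, the identification of the quotient with $\widehat{\mdp}$, and the construction of a memoryless adversary $\nature^P$ that reproduces the true transition function $P$. The central tool is a pushforward identity for the lift. For every $(s,a)\in\operatorname{Gr}(A)$ and $\widehat\mu\in\distr{\Xi}$,
\[
\Lambda_{s,a}(\widehat\mu)\circ\alpha^{-1}=\widehat\mu .
\]
To see this, note that each $K^{\xi'}_{s,a}$ is a probability measure concentrated on the cell $C_{\xi'}$. In the positive case this holds because $P(s,a)(C_{\xi'})=P_\alpha(\alpha(s),a)(\xi')$ by definition of the quotient, so the normalisation gives total mass one. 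In the zero case it is the Dirac measure at $r_{\xi'}\in C_{\xi'}$. Consequently $K^{\xi'}_{s,a}\circ\alpha^{-1}=\Dirac{\xi'}$, and summing against $\widehat\mu$ gives the identity. I would also record that $\Lambda_{s,a}$ is affine and measurable in $\widehat\mu$; this should be routine since $\Xi$ is finite.

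With this identity the abstraction conditions follow directly. Condition~1 is inherited from $\mdp$, because $\widetilde{\mdp}$ has the same states, enabled actions and labels, and $\alpha$ is already an abstraction of $\mdp$. For condition~2, for any $s$ we have
\[
\{\nu\circ\alpha^{-1}:\nu\in\widetilde{\mathcal P}(s,a)\}
=\{\Lambda_{s,a}(\widehat\mu)\circ\alpha^{-1}:\widehat\mu\in\widehat{\mathcal P}(\alpha(s),a)\}
=\widehat{\mathcal P}(\alpha(s),a).
\]
The right-hand side depends only on $\alpha(s)$, so condition~2 holds. The same computation shows that the induced correspondence is exactly $\widehat{\mathcal P}$. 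Its graph is measurable because $\Xi\times\mathsf A$-indexed sets over a finite $\Xi$ pose no difficulty, or simply because $\widehat{\mdp}$ is an RMDP. Since the initial state, labels and enabled actions match as well, we get $\widetilde{\mdp}/\alpha=\widehat{\mdp}$.

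For the adversary, set $\nature^P_n(h,a)=P(\last h,a)$, which is memoryless. Two facts are needed.

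\emph{Admissibility.} Let $s=\last h$ and $\widehat\mu=P(s,a)\circ\alpha^{-1}=P_\alpha(\alpha(s),a)$, which lies in $\widehat{\mathcal P}(\alpha(s),a)$ by the hypothesis $P_\alpha\in\widehat{\mathcal P}$. Then $\Lambda_{s,a}(\widehat\mu)=P(s,a)$: in the sum, terms with $\widehat\mu(\xi')>0$ contribute $P(s,a)(\cdot\cap C_{\xi'})$, and terms with $\widehat\mu(\xi')=0$ vanish. Since the cells $C_{\xi'}$ partition $S$, the sum recovers $P(s,a)$, so $P(s,a)\in\widetilde{\mathcal P}(s,a)$.

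\emph{Measurability.} The map $(h,a)\mapsto P(\last h,a)$ is the composition of the measurable map $h\mapsto\last h$ with the measurable kernel $P$.

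It then remains to identify $\widetilde{\mdp}[\nature^P]$ with $\mdp$. Formally $\widetilde{\mdp}[\nature^P]$ is a history MDP, so the claimed equality is meant up to the canonical identification: a memoryless adversary yields the kernel $P(\last h,a)$ on histories, which is the history unfolding of $\mdp$. Both therefore induce the same path measures under every policy.

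I expect the main obstacle to be only the bookkeeping in the zero-probability branch of $K$: checking that the Dirac choices both preserve the pushforward identity and drop out when recovering $P(s,a)$. Both are handled by the weights $\widehat\mu(\xi')$ above.
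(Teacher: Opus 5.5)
Your proposal is correct and follows the same route as the paper: the pushforward identity $\Lambda_{s,a}(\widehat\mu)\circ\alpha^{-1}=\widehat\mu$ yields the abstraction conditions and $\widetilde{\mdp}/\alpha=\widehat{\mdp}$, and $\Lambda_{s,a}\bigl(P_\alpha(\alpha(s),a)\bigr)=P(s,a)$ shows that the memoryless adversary $\nature^P$ choosing $P(\last{h},a)$ is admissible. Your explicit normalisation check $P(s,a)(C_{\xi'})=P_\alpha(\alpha(s),a)(\xi')$ is a detail the paper leaves implicit; conversely, the paper explicitly proves that $\operatorname{Gr}(\widetilde{\mathcal P})$ is measurable (so $\widetilde{\mdp}$ is a genuine RMDP), which you only gesture at, and it follows from your identity because $(s,a,\widehat\mu)\mapsto(s,a,\Lambda_{s,a}(\widehat\mu))$ is then an injective Borel map on the measurable set of triples with $\widehat\mu\in\widehat{\mathcal P}(\alpha(s),a)$, whose image is Borel by Lusin--Souslin.
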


\begin{proof}
The action and label conditions follow from the fact that \(\alpha\) is an
abstraction of the true MDP.

We first identify the quotient transition correspondence. Fix
\((s,a)\in\operatorname{Gr}(A)\), and write \(\xi=\alpha(s)\). For every
\(\widehat\mu\in\widehat{\mathcal P}(\xi,a)\), we have
\[
\Lambda_{s,a}(\widehat\mu)\circ\alpha^{-1}
=
\widehat\mu.
\]
Indeed, each \(K_{s,a}^{\xi'}\) is supported on \(C_{\xi'}\), and therefore
\[
K_{s,a}^{\xi'}(\alpha^{-1}(\xi''))
=
\mathbf 1_{\{\xi'=\xi''\}},
\]
which implies that
\[
\{\mu\circ\alpha^{-1}:\mu\in\widetilde{\mathcal P}(s,a)\}
=
\widehat{\mathcal P}(\alpha(s),a).
\]
This set depends only on \(\alpha(s)\) and \(a\). Therefore \(\alpha\) is
an abstraction of \(\widetilde{\mdp}\), and its quotient transition
correspondence is exactly \(\widehat{\mathcal P}\). Thus
\[
\widetilde{\mdp}/\alpha=\widehat{\mdp}.
\]

It remains to show that the true MDP is induced in
\(\widetilde{\mdp}\) by an adversary. For every \((s,a)\), we have
\[
P(s,a)
=
\Lambda_{s,a}\bigl(P_\alpha(\alpha(s),a)\bigr).
\]
Indeed, for every measurable \(B\subseteq S\),
\[
\Lambda_{s,a}\bigl(P_\alpha(\alpha(s),a)\bigr)(B)
=
\sum_{\xi'\in\Xi}
P_\alpha(\alpha(s),a)(\xi')K_{s,a}^{\xi'}(B).
\]
The terms with \(P_\alpha(\alpha(s),a)(\xi')=0\) vanish, and the remaining
terms give
\[
\sum_{\xi'\in\Xi}P(s,a)(B\cap C_{\xi'})
=
P(s,a)(B).
\]
Since
\(
P_\alpha(\alpha(s),a)\in\widehat{\mathcal P}(\alpha(s),a),
\)
we obtain
\(
P(s,a)\in\widetilde{\mathcal P}(s,a).
\)
Thus \(\mdp\) is induced by the adversary \(\nature^P\) that always selects
the transition \(P(s,a)\).

Finally, the measurability of the lifted uncertainty correspondence follows from
the measurability of \(\widehat{\mathcal P}\) and of the map
\((s,a,\widehat\mu)\mapsto\Lambda_{s,a}(\widehat\mu)\). Since
\(\Lambda_{s,a}(\widehat\mu)\circ\alpha^{-1}=\widehat\mu\), the relevant
graph map is injective, so the image of the measurable graph of
\(\widehat{\mathcal P}\) is measurable.
\end{proof}

\begin{theorem}[Restatement of~\cref{thm:unknown_MDP_soundness}]
The shield \(\mathfrak S(\widehat{\mdp},\mathcal A,\widehat\beta)\)
acts on \((\mdp,\alpha)\), is nonblocking over \((\mdp,\alpha)\), and is
sound over \((\mdp,\alpha)\) for \(\spec\).
\end{theorem}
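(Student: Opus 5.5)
The plan is to reduce everything to \cref{th:soundness-RMDP} applied to the conditional concrete lift $\widetilde{\mdp}$, and then specialize to the adversary $\nature^P$ provided by \cref{lemma:lifted-RMDP}.

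First, the shield acts on $(\mdp,\alpha)$. Since $\alpha$ is an abstraction of $\mdp$, we have $A(s)=A_\alpha(\alpha(s))$ for every $s$. The action sets of $\widehat{\mdp}$ are exactly $A_\alpha$. The monitor of $\mathfrak S(\widehat{\mdp},\mathcal A,\widehat\beta)$ reads observations in $\Xi$, which is also the codomain of $\alpha$.

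Second, by \cref{lemma:lifted-RMDP}, $\alpha$ is an abstraction of $\widetilde{\mdp}$ and $\widetilde{\mdp}/\alpha=\widehat{\mdp}$. Hence $\widehat\beta$ is an $((\widetilde{\mdp}/\alpha)\otimes_{\labelfunc_\alpha}\mathcal A,\Xi\times F)$-inductive value function with $\widehat\beta(\xiinit,\qinit)\le p$. The shield $\mathfrak S(\widehat{\mdp},\mathcal A,\widehat\beta)$ is then literally $\mathfrak S(\widetilde{\mdp}/\alpha,\mathcal A,\widehat\beta)$. So \cref{th:soundness-RMDP} applies to $(\widetilde{\mdp},\alpha)$: the shield is realizable, and in fact nonblocking, since its proof establishes nonblocking first. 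It is also sound for $\spec$ over $(\widetilde{\mdp},\alpha)$.

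Nonblocking over $(\mdp,\alpha)$ needs no new work. The nonblocking condition only involves $D_\Gamma$, $\zeta$, $(\xiinit,\minit)$, and the quotient sets $\widehat{\mathcal P}=\mathcal P_\alpha$ of the lift. These objects are the same whether we view the shield as acting on $\widetilde{\mdp}$ or on $\mdp$; the condition concerns the learned uncertainty sets, which contain $P_\alpha$. I would simply restate the $B_\beta\subseteq D_\Gamma$ argument verbatim, with the remark that it only uses the abstract data.

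Third, soundness on $\mdp$. Let $\overline\pi$ be compliant over $(\mdp,\alpha)$. This step carries the main obstacle, namely matching the notions of compliance and projection between $\mdp$ and $\widetilde{\mdp}$. Every adversary of $\widetilde{\mdp}$ induces an MDP on which compliance must be checked. Compliance over $\mdp$ only guarantees compliance under the single adversary $\nature^P$, so we cannot directly invoke soundness over all of $\widetilde{\mdp}$.

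I would avoid this issue by reusing the supermartingale argument from the proof of \cref{th:soundness-RMDP} under the fixed adversary $\nature^P$ alone. That argument only needs two facts. The first is compliance almost surely under the adversary at hand, which we have. The second is that the pushforward $P(S_t,a)\circ\alpha^{-1}=P_\alpha(\alpha(S_t),a)$ lies in $\widehat{\mathcal P}(\alpha(S_t),a)$, which holds by assumption. These give $\mathbb E[Y_{t+1}\mid\overline{\mathcal F}_t]\le Y_t$. Optional stopping then yields $\Pr(\tau_F<\infty)\le p$ in $(\mdp\otimes_\alpha\mathcal D)$.

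Finally, I would use the adversary-independent projection lemma, specialized to the trivial (singleton) adversary of $\mdp$. The path law of $\overline\pi^\downarrow$ on $\mdp$ is the projection of the product path law. Labels are preserved, so $(\mdp,\overline\pi^\downarrow)\models\mathbb P_{\ge 1-p}(\formula)$.
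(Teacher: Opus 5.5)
Your proof is correct, but for soundness you take a different route from the paper. The paper never reopens the martingale argument. It bridges exactly the gap you identify (compliance over $(\mdp,\alpha)$ only gives compliance under $\nature^P$) by patching $\overline\pi$ on histories that are null under the true MDP, replacing it there by a measurable compliant selector on $D_\Gamma$. Nonblocking makes the patched policy compliant under every adversary of $\widetilde{\mdp}$, and its path law under $\mdp=\widetilde{\mdp}[\nature^P]$ is unchanged. \cref{th:soundness-RMDP} for the lift is then used as a black box.

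You instead rerun the supermartingale and optional-stopping argument directly under the fixed true dynamics. This needs only almost-sure compliance under $P$ and the inclusion $P(S_t,a)\circ\alpha^{-1}=P_\alpha(\alpha(S_t),a)\in\widehat{\mathcal P}(\alpha(S_t),a)$.

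The paper's route reuses the RMDP theorem and keeps the lift $\widetilde{\mdp}$ as a common vehicle, which is genuinely needed later for the completeness results. Its patching step is delicate when $S$ is uncountable, however: individual histories are then null, and one must check almost-sure compliance under all adversaries of the lift. Your route is more elementary and shows the lift is actually dispensable for this theorem. Together with the pointwise $B_\beta\subseteq D_\Gamma$ argument for nonblocking, the only property of $\widehat{\mathcal P}$ you use is $P_\alpha(\xi,a)\in\widehat{\mathcal P}(\xi,a)$.

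One small imprecision: the quotient uncertainty set in the nonblocking condition over $(\mdp,\alpha)$ is the singleton $\{P_\alpha(\xi,a)\}$, not $\widehat{\mathcal P}(\xi,a)$, so the objects are not literally the same. What you need is that the supremum over this singleton is dominated by the supremum over $\widehat{\mathcal P}(\xi,a)$. Alternatively, as your fallback notes, the $B_\beta\subseteq D_\Gamma$ argument holds for every successor $\xi'$ irrespective of the transition measure.
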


\begin{proof}
Let \(\widetilde{\mdp}\) be the concrete lift defined above. By the
Lemma \ref{lemma:lifted-RMDP}, $\widetilde{\mdp}/\alpha=\widehat{\mdp}$.
Therefore
\[
\mathfrak S(\widehat{\mdp},\mathcal A,\widehat\beta)
=
\mathfrak S(\widetilde{\mdp}/\alpha,\mathcal A,\widehat\beta).
\]
Applying Theorem~\ref{th:soundness-RMDP} to
\(\widetilde{\mdp}\), the shield acts on
\((\widetilde{\mdp},\alpha)\), is nonblocking, and is sound over
\((\widetilde{\mdp},\alpha)\).

Since the true MDP \(\mdp\) is induced by an adversary in \(\widetilde{\mdp}\), the
non-blocking condition for all transitions in
\(\widetilde{\mathcal P}\) implies the non-blocking condition for the true
transition \(P\). Hence the shield is non-blocking over \((\mdp,\alpha)\).

For soundness, let \(\overline\pi\) be a compliant policy on
\(\mdp\otimes_\alpha\mathcal D\). We redefine \(\overline\pi\) arbitrarily on histories that are null
under the true MDP, choosing a measurable compliant selector on
\(D_\Gamma\). This modification does not change the induced path law under
the true MDP, and it is compliant for the lifted RMDP. Applying soundness
over \(\widetilde{\mdp}\) to this modified policy yields the desired
soundness for \(\mdp\).
\end{proof}

\subsection{$\mathrm{TV}$-Completeness of the Shield for Unknown and Finite MDPs}
In the following, we use policies with randomized memory. Formally, a
\emph{memory-based randomized policy} (MBR policy) on an RMDP \(\rmdp\)
is specified by a standard Borel memory space \(\mathsf Z\), an initial
memory distribution \(\lambda\in\distr{\mathsf Z}\), a measurable
action-selection kernel
\[
\sigma\colon S\times\mathsf Z\to\distr{\mathsf A},
\]
and a measurable memory-update kernel
\[
\upsilon\colon \mathsf Z\times S\times\mathsf A\times S\to\distr{\mathsf Z},
\]
such that \(\sigma(s,z)(A(s))=1\) for all \((s,z)\in S\times\mathsf Z\).
Such a policy is executed as follows. First, an initial memory state
\(z_0\sim\lambda\) is sampled. At time \(t\), after observing the current
state \(s_t\) and holding memory \(z_t\), the controller samples
\[
a_t\sim\sigma(s_t,z_t).
\]
After the next state \(s_{t+1}\) is generated by the measure selected by
the adversary, the memory is updated according to
\[
z_{t+1}\sim \upsilon(\cdot\mid z_t,s_t,a_t,s_{t+1}).
\]

Although MBR policies are useful as compact representations, in our
setting, they do not induce more state-action path laws than HR policies.
Indeed, every MBR policy admits an adversary-uniform HR realization.

\begin{lemma}[HR realization of MBR policies]\label{lemma:MBR-to-HR}
Let \(\rmdp\) be an RMDP whose adversaries observe only the public
state-action history. For every MBR policy \(\Pi\) on \(\rmdp\), there
exists an HR policy \(\Pi^\flat\) such that, for every adversary
\(\nature\) of \(\rmdp\),
\[
\prob_{\rmdp[\nature],\Pi}
=
\prob_{\rmdp[\nature],\Pi^\flat}.
\]
\end{lemma}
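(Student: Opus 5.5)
The plan is to build \(\Pi^\flat\) by integrating out the hidden memory. Since the memory depends only on the public history, its conditional law given that history does not depend on the adversary, and this is the same mechanism used for the adversary-independent projection lemma above. Concretely, I construct kernels \(\kappa_t(\cdot\mid h)\in\distr{\mathsf Z}\), for \(h\in H_t\), by forward recursion. Set \(\kappa_0(\cdot\mid s_0)=\lambda\). Given \(\kappa_t\), form the joint law of \((z_t,a_t)\) given \(h_t\) as \(\kappa_t(dz\mid h_t)\,\sigma(s_t,z)(da)\). By standard Borelness, this joint law admits a regular conditional distribution \(R_t(dz\mid h_t,a)\) of \(z_t\) given \(a_t=a\). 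I then define
\[
\kappa_{t+1}(\cdot\mid h_t a s')=\int \upsilon(\cdot\mid z,s_t,a,s')\,R_t(dz\mid h_t,a).
\]
The HR policy is \(\Pi^\flat(E\mid h_t)=\int \sigma(s_t,z)(E)\,\kappa_t(dz\mid h_t)\). Measurability in \(h\) follows from measurability of regular conditional kernels and from composing measurable kernels. Because \(\sigma(s,z)(A(s))=1\) for all \(z\), \(\Pi^\flat\) is supported on enabled actions.

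The key claim is proved by induction on \(t\). For every adversary \(\nature\), the joint law of \((H_t,z_t)\) under the MBR execution in \(\rmdp[\nature]\) equals \(\prob_{\rmdp[\nature],\Pi^\flat}|_{H_t}(dh)\,\kappa_t(dz\mid h)\). In particular, the marginal on \(H_t\) coincides with the HR law. The base case is \(H_0=\sinit\) with \(z_0\sim\lambda\). For the step from \(t\) to \(t+1\), the MBR execution draws \(a_t\sim\sigma(s_t,z_t)\), then \(s_{t+1}\sim\nature_t(H_t,a_t)\), then \(z_{t+1}\sim\upsilon\).

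The crucial point is that \(\nature_t(H_t,a_t)\) depends only on \((H_t,a_t)\), because adversaries observe only the public history. Hence \(s_{t+1}\) is conditionally independent of \(z_t\) given \((H_t,a_t)\). Disintegrating, the joint law of \((H_t,a_t,z_t,s_{t+1})\) factors as
\[
\prob(dh)\,\Pi^\flat(da\mid h)\,R_t(dz\mid h,a)\,\nature_t(h,a)(ds').
\]
Pushing forward through \(\upsilon\) yields exactly \(\prob_{\rmdp[\nature],\Pi^\flat}|_{H_{t+1}}\otimes\kappa_{t+1}\).

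Agreement of finite-dimensional marginals on all cylinder sets then gives equality of the path measures, by the Ionescu-Tulcea uniqueness theorem and a \(\pi\)-\(\lambda\) argument.

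The main obstacle is the step where versions are chosen. The kernels \(R_t\) and \(\kappa_t\) are constructed once and are therefore adversary-independent. I need them to be valid versions of the conditional laws under every adversary simultaneously, even though different adversaries induce different, possibly mutually singular, history laws. I would handle this as above: prove the disintegration identity directly as an equality of measures for each fixed \(\nature\), rather than appealing to uniqueness of conditional expectations. The identity only uses that \(R_t\) disintegrates \(\kappa_t\otimes\sigma\) pointwise in \(h\), which is a property independent of \(\nature\). The choice of version off null sets is then harmless for each adversary separately.
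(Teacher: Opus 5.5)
Your proposal is correct and follows essentially the same route as the paper's proof. Both build an adversary-independent belief kernel over the private memory, update it by conditioning on the observed action through a regular conditional distribution and pushing through $\upsilon$, and run the induction on the fact that $\nature_t(h_t,a_t)$ does not see the memory. Phrasing the induction as the measure identity $\prob_{\rmdp[\nature],\Pi^\flat}|_{H_t}\otimes\kappa_t$ for each fixed $\nature$ is a slightly cleaner treatment of the simultaneous-version issue across mutually singular adversaries, which the paper's ``version of the conditional law'' wording passes over.
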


\begin{proof}
Let \(\Pi=(\mathsf Z,\lambda,\sigma,\upsilon)\). We construct an HR policy
\(\Pi^\flat\) that keeps, as a function of the public history, the belief
over the private memory of \(\Pi\).

For a history \(h_t=s_0a_0\cdots a_{t-1}s_t\), let
\(b_t(\cdot\mid h_t)\) be a probability measure on \(\mathsf Z\). At time
\(0\), set
\[
b_0(\cdot\mid s_0)=\lambda.
\]
Given \(b_t(\cdot\mid h_t)\), define
\[
\Pi^\flat(E\mid h_t)
=
\int_{\mathsf Z}\sigma(s_t,z)(E)\,b_t(dz\mid h_t),
\qquad E\subseteq\mathsf A.
\]

It remains to define \(b_{t+1}\). Fix \(h_t\) and a possible next state
\(s_{t+1}\). Define a probability measure \(K_{h_t,s_{t+1}}\) on
\(\mathsf A\times\mathsf Z\) by
\[
K_{h_t,s_{t+1}}(E\times B)
=
\int_{\mathsf Z}
\int_E
\upsilon(B\mid z,s_t,a,s_{t+1})\,
\sigma(da\mid s_t,z)\,
b_t(dz\mid h_t),
\]
for measurable \(E\subseteq\mathsf A\) and \(B\subseteq\mathsf Z\).
Since the spaces are standard Borel, this measure admits a regular
conditional distribution of the next memory \(z_{t+1}\) given the observed
action \(a_t\). Choose one such version and denote it by
\[
b_{t+1}(\cdot\mid h_ta_ts_{t+1}).
\]
On histories where this conditional distribution is not uniquely
determined, any version may be chosen.

We claim that, for every adversary \(\nature\), \(b_t(\cdot\mid h_t)\) is
a version of the conditional law of the private memory \(Z_t\) given the
public history \(H_t=h_t\) under the process induced by \((\Pi,\nature)\).
The proof is by induction on \(t\). The claim is immediate at \(t=0\).
Assume it holds at time \(t\). Given \(H_t=h_t\), the MBR policy samples
\(a_t\) according to
\[
\int_{\mathsf Z}\sigma(s_t,z)(\cdot)\,b_t(dz\mid h_t).
\]
The adversary then samples \(s_{t+1}\) according to
\(\nature_t(h_t,a_t)\), which depends only on the public history and the
chosen action, not on the private memory. Therefore, after conditioning on
the public next history \(h_ta_ts_{t+1}\), the conditional law of
\(Z_{t+1}\) is exactly the regular conditional distribution used above to
define \(b_{t+1}(\cdot\mid h_ta_ts_{t+1})\). This proves the induction.

Consequently, under every adversary \(\nature\), after every public
history \(h_t\), the action distribution of \(\Pi^\flat\) equals the
conditional action distribution of \(\Pi\) after marginalizing its private
memory. Since both policies then face the same adversarial transition
kernel, they induce the same measures on all finite cylinders, and hence
on infinite state-action paths:
\[
\prob_{\rmdp[\nature],\Pi}
=
\prob_{\rmdp[\nature],\Pi^\flat}.
\]
\end{proof}

Bernoulli mixtures of policies are a special case of MBR policies. Let
\(\Pi^0\) and \(\Pi^1\) be two policies on the same RMDP, and let
\(\varepsilon\in[0,1]\). We write
\[
(1-\varepsilon)\Pi^0\oplus\varepsilon\Pi^1
\]
for the policy that first samples \(Z\in\{0,1\}\) with $
\Pr(Z=1)=\varepsilon$,
and then follows \(\Pi^Z\) for the whole execution. We call
\(\oplus\) an \emph{external mixture}. It should not be confused with the combination of policies that, at each history, independently
randomizes between the action distributions of \(\Pi^0\) and \(\Pi^1\).

We now define the policy class used in the following completeness theorem.
Throughout this paragraph, \(\mdp\) is finite. Let
\(\mathfrak D_{\mathcal A}(\mdp)\) be the finite set of maps $
d\colon S\times Q\to\mathsf A$
such that \(d(s,q)\in A(s)\) for every \((s,q)\in S\times Q\).

For every \(\lambda\in\distr{\mathfrak D_{\mathcal A}(\mdp)}\), let
\(\Pi_\lambda\) be the MBR policy:
\begin{itemize}
    \item memory space
    \(
    \mathsf Z=\mathfrak D_{\mathcal A}(\mdp)\times Q,
    \)
    \item initial memory distribution
    \(
    \lambda\otimes\Dirac{\qinit},
    \)
    \item action-selection kernel
    \(
    \sigma(s,(d,q))=\Dirac{d(s,q)},
    \) and
    \item memory-update kernel
    \(
    \upsilon(\cdot\mid(d,q),s,a,s')
    =
    \Dirac{(d,\delta(q,\labelfunc(s')))}.
    \)
\end{itemize}
Thus \(\Pi_\lambda\) first samples a deterministic map \(d\), keeps it
fixed for the whole execution, and uses the DFA state as finite memory.
Furthermore, we define
\[
\mathcal C_{\mathcal A}(\mdp)
=
\left\{
\Pi_\lambda^\flat:
\lambda\in\distr{\mathfrak D_{\mathcal A}(\mdp)}
\right\},
\]
where \(\Pi_\lambda^\flat\) denotes the HR realization of
\(\Pi_\lambda\) from Lemma~\ref{lemma:MBR-to-HR}.

\textbf{In the rest of this section, we assume that the MDP $\mdp$ is finite, and that \(\widehat{\mdp}\) is graph-preserving}. We define the minimum transition probability parameter $\underline p_{\min}$ as 
\[
\underline p_{\min}
\coloneqq
\min\left\{
\inf_{\mu\in\widetilde{\mathcal P}(s,a)}\mu(s')
\;\middle|\;
s\in S,\ a\in A(s),\ P(s,a,s')>0
\right\}.
\]
Furthermore, we define $
N=|S| \cdot |Q|$, $
H_{\max}=\frac{N}{\underline p_{\min}^{\,N}}$, and 
$$\eta=\sup_{(\xi,a)\in \Xi\times A_\alpha(\xi)} \sup_{\widehat P(\xi,a)\in \widehat{\mathcal P}(\xi,a)} \mathrm{TV}(P_\alpha(\xi,a),\widehat{P}(\xi,a)).$$

\begin{lemma}[Concrete one-step perturbation]
For every \((s,a)\in\operatorname{Gr}(A)\) and every
\(\mu\in\widetilde{\mathcal P}(s,a)\),
\[
\mathrm{TV}\bigl(P(s,a),\mu\bigr)\le \eta.
\]
\end{lemma}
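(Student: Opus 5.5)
The plan is to write every element of \(\widetilde{\mathcal P}(s,a)\) as an image under the mixing map \(\Lambda_{s,a}\), write the true transition \(P(s,a)\) as an image under the same map, and then use that \(\Lambda_{s,a}\) cannot increase total variation.

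Fix \((s,a)\in\operatorname{Gr}(A)\) and put \(\xi=\alpha(s)\). By definition of the conditional concrete lift, any \(\mu\in\widetilde{\mathcal P}(s,a)\) has the form \(\mu=\Lambda_{s,a}(\widehat\mu)\) for some \(\widehat\mu\in\widehat{\mathcal P}(\xi,a)\). In the proof of Lemma~\ref{lemma:lifted-RMDP} it was shown that
\[
P(s,a)=\Lambda_{s,a}\bigl(P_\alpha(\xi,a)\bigr).
\]
The claim therefore reduces to the inequality
\[
\mathrm{TV}\bigl(\Lambda_{s,a}(P_\alpha(\xi,a)),\Lambda_{s,a}(\widehat\mu)\bigr)\le \mathrm{TV}\bigl(P_\alpha(\xi,a),\widehat\mu\bigr).
\]
The right-hand side is at most \(\eta\) by the definition of \(\eta\), taking \(\widehat P(\xi,a)=\widehat\mu\).

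For the contraction step, write \(c_{\xi'}=P_\alpha(\xi,a)(\xi')-\widehat\mu(\xi')\) for \(\xi'\in\Xi\). This is a finite family with \(\sum_{\xi'}c_{\xi'}=0\). For any measurable \(B\subseteq S\), linearity of \(\Lambda_{s,a}\) gives
\[
\Lambda_{s,a}(P_\alpha(\xi,a))(B)-\Lambda_{s,a}(\widehat\mu)(B)=\sum_{\xi'\in\Xi}c_{\xi'}K^{\xi'}_{s,a}(B).
\]
Each \(K^{\xi'}_{s,a}(B)\) lies in \([0,1]\). Hence this difference lies between \(-\sum_{\xi'}c_{\xi'}^-\) and \(\sum_{\xi'}c_{\xi'}^+\). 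Because \(\sum_{\xi'}c_{\xi'}=0\), both of these quantities equal \(\tfrac12\sum_{\xi'}|c_{\xi'}|\). Since \(\Xi\) is finite, that value is exactly \(\mathrm{TV}(P_\alpha(\xi,a),\widehat\mu)\), attained by the set \(\{\xi':c_{\xi'}>0\}\). Taking the supremum over \(B\) yields the reduced inequality and hence the lemma.

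No step is a serious obstacle. The one point that needs care is that the same kernels \(K^{\xi'}_{s,a}\) appear in both representations. This holds for \(P(s,a)\) even on cells where \(P_\alpha(\xi,a)(\xi')=0\) and the kernel is the arbitrary Dirac measure, because those terms carry weight zero in the mixture. Measurability is not an issue here: \(\Xi\) is finite and each \(K^{\xi'}_{s,a}\) is a genuine probability measure on \(S\).
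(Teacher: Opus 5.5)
Your proof is correct and follows the paper's structure: you write $\mu=\Lambda_{s,a}(\widehat\mu)$ and $P(s,a)=\Lambda_{s,a}(P_\alpha(\alpha(s),a))$ and reduce to the total variation between the cell-mass distributions on $\Xi$, which is at most $\eta$ by definition. The only difference is the last step. The paper uses that the kernels $K^{\xi'}_{s,a}$ have disjoint supports $C_{\xi'}$ to assert that the two distances are equal, whereas you prove the needed inequality via the general contraction of total variation under a Markov kernel, which does not need disjointness and which you spell out more fully than the paper does.
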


\begin{proof}
By definition of the conditional lift, there exists
\(\widehat\mu\in\widehat{\mathcal P}(\alpha(s),a)\) such that
\[
\mu=\Lambda_{s,a}(\widehat\mu)
=
\sum_{\xi'\in\Xi}
\widehat\mu(\xi')K_{s,a}^{\xi'}.
\]
Moreover,
\[
P(s,a)
=
\Lambda_{s,a}\bigl(P_\alpha(\alpha(s),a)\bigr)
=
\sum_{\xi'\in\Xi}
P_\alpha(\alpha(s),a)(\xi')K_{s,a}^{\xi'}.
\]
The kernels \(K_{s,a}^{\xi'}\) are supported on the disjoint cells
\(C_{\xi'}=\alpha^{-1}(\xi')\). Hence, the total variation distance between
the two concrete mixtures is exactly the total variation distance between
their cell-mass distributions:
\[
\mathrm{TV}\bigl(P(s,a),\mu\bigr)
=
\mathrm{TV}\bigl(P_\alpha(\alpha(s),a),\widehat\mu\bigr).
\]
By definition of \(\eta\), the right-hand side is at most \(\eta\).
\end{proof}

\begin{lemma}[Concrete perturbation bound]
Let \(\pi\in\mathcal C_{\mathcal A}(\mdp)\). Then
\[
\inf_{\widetilde\nature}
\prob_{\widetilde{\mdp}[\widetilde\nature],\pi}(\formula)
\ge
\prob_{\mdp,\pi}(\formula)-\eta H_{\max},
\]
where the infimum ranges over adversaries of \(\widetilde{\mdp}\).
\end{lemma}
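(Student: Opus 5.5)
The plan is to reduce to a single deterministic $\mathcal A$-memory policy, and then run a supermartingale argument on the true violation value along the perturbed process, stopped at a hitting time whose expectation is bounded by $H_{\max}$.

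\emph{Step 1: reduce to one deterministic map.} Let $\pi=\Pi_\lambda^\flat\in\mathcal C_{\mathcal A}(\mdp)$. By Lemma~\ref{lemma:MBR-to-HR}, for every adversary $\widetilde\nature$ of $\widetilde{\mdp}$ the path law of $\pi$ equals that of the MBR policy $\Pi_\lambda$. Since $\Pi_\lambda$ samples $d\sim\lambda$ once and then follows $\Pi_d$, and the adversary is a fixed measurable function of the public history, the path law is the $\lambda$-mixture of the path laws of the $\Pi_d$ under the same $\widetilde\nature$. This holds both in $\widetilde{\mdp}[\widetilde\nature]$ and in $\mdp$. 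It therefore suffices to show that for each $d\in\mathfrak D_{\mathcal A}(\mdp)$ and each $\widetilde\nature$,
\[
\prob_{\widetilde{\mdp}[\widetilde\nature],\Pi_d}(\neg\formula)\le \prob_{\mdp,\Pi_d}(\neg\formula)+\eta H_{\max}.
\]
The claim then follows by integrating over $\lambda$ and taking the infimum over $\widetilde\nature$.

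\emph{Step 2: value function and near-supermartingale.} Fix $d$. Under $\mdp$, the pair $X_t=(S_t,Q_t)$ is a finite Markov chain on $S\times Q$, with $N$ states. Let $v(s,q)$ be the probability, under this true chain started at $(s,q)$, of reaching $S\times F$. Then $v$ satisfies $v=1$ on $S\times F$ and the one-step equation $v(s,q)=\mathbb E_{s'\sim P(s,d(s,q))}[v(s',\delta(q,\labelfunc(s')))]$ elsewhere.

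Let $Z$ be the set of product states from which $S\times F$ is unreachable in the graph of the true chain, so $v=0$ on $Z$. Define the stopping time $T$ as the first time $X_t\in (S\times F)\cup Z$.

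Under any adversary of $\widetilde{\mdp}$, each step uses some $\mu\in\widetilde{\mathcal P}(s,a)$. By the concrete one-step perturbation lemma, $\mathrm{TV}(P(s,a),\mu)\le\eta$, and $v\in[0,1]$. Hence, before $T$,
\[
\mathbb E_{\widetilde\nature}[v(X_{t+1})\mid\mathcal F_t]\le v(X_t)+\eta .
\]
So $v(X_{t\wedge T})-\eta\,(t\wedge T)$ is a supermartingale. Optional stopping at $t\wedge T$ gives $\mathbb E[v(X_{t\wedge T})]\le v(x_0)+\eta\,\mathbb E[T]$, and letting $t\to\infty$ requires $T<\infty$ almost surely.

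Graph preservation ensures that under the perturbed dynamics $Z$ still cannot reach $F$. Thus on $\{T<\infty\}$, violation occurs exactly when $v(X_T)=1$. This yields the displayed bound with $\eta\,\mathbb E_{\widetilde\nature}[T]$ in place of $\eta H_{\max}$.

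\emph{Step 3: bound the hitting time (the main obstacle).} The remaining task is to show $\mathbb E_{\widetilde\nature}[T]\le H_{\max}$ uniformly over history-dependent adversaries. This is the step I expect to be hardest.

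From any state outside $(S\times F)\cup Z$, there is by definition a path in the true graph to $S\times F$ of length at most $N$. Graph preservation and the definition of $\underline p_{\min}$ guarantee that every edge of this path has probability at least $\underline p_{\min}$ under every $\mu\in\widetilde{\mathcal P}$, whatever the adversary chooses at that history. Hence, conditionally on any history, the probability of hitting the stopping set within the next $N$ steps is at least $\underline p_{\min}^{N}$.

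A geometric-blocks argument then gives $\Pr(T>kN)\le(1-\underline p_{\min}^N)^k$. Consequently $T<\infty$ almost surely and $\mathbb E[T]\le N/\underline p_{\min}^{N}=H_{\max}$.

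The delicate points are that this bound must be conditional on arbitrary histories, since the adversary is adaptive, and that $Z$ must be defined through the graph, which graph preservation makes adversary-invariant. Once these are in place, the three steps combine to give the lemma.
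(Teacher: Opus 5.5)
Your proof is correct. It shares the paper's outer structure: it reduces to a single deterministic $\mathcal A$-memory map $d$ by averaging over $\lambda$, and it proves the geometric-blocks bound $\mathbb E[T]\le N/\underline p_{\min}^{N}=H_{\max}$ uniformly over adaptive adversaries. The paper bounds its hitting time in exactly this way.

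The genuine difference is how the per-step error $\eta$ is accumulated.
\begin{itemize}
\item The paper couples the true run and the adversarial run step by step with maximal couplings and lets $\sigma$ be the first mismatch time. It argues that the two satisfaction events can differ only on $\{\sigma\le\tau_d\}$, and bounds $\Pr(\sigma\le\tau_d)\le\eta\,\mathbb E[\tau_d]$.
\item You instead use the true reachability value $v$ as a potential. You show that $v(X_{t\wedge T})-\eta\,(t\wedge T)$ is a supermartingale under every adversary of $\widetilde{\mdp}$, using only $|\mathbb E_\mu f-\mathbb E_{P(s,a)} f|\le\mathrm{TV}(\mu,P(s,a))$ for $f\in[0,1]$, and you conclude by optional stopping.
\end{itemize}

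Your stopping set also differs. The paper stops on $C_d\cup(S\times F)$, where $C_d$ is the union of bottom SCCs avoiding $S\times F$. You stop on $Z\cup(S\times F)$ with $Z=\{v=0\}$. Since $Z\supseteq C_d$, your $T\le\tau_d$.

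What each route buys:
\begin{itemize}
\item The coupling gives the two-sided estimate $|\prob_{\widetilde{\mdp}[\widetilde\nature],d}(\formula)-\prob_{\mdp,d}(\formula)|\le\eta H_{\max}$ and a pathwise picture. The price is building a joint process against a history-dependent adversary.
\item Your argument needs no coupling. It yields exactly the one-sided inequality the lemma asks for, with the slightly sharper intermediate bound $\eta\,\mathbb E_{\widetilde\nature}[T]$.
\end{itemize}

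Both routes use graph preservation of the lift $\widetilde{\mdp}$ in the same two places. First, it makes the $\underline p_{\min}^{N}$ escape probability adversary-independent. Second, it keeps the absorbing safe set unable to reach $S\times F$ under the perturbed dynamics. This property follows because every $\widehat\mu\in\widehat{\mathcal P}(\xi,a)$ has the support of $P_\alpha(\xi,a)$, and each $K^{\xi'}_{s,a}$ on those cells is a true conditional.
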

\begin{proof}
First fix a deterministic \(\mathcal A\)-memory policy
\(d\colon S\times Q\to\mathsf A\), and let \(C_d\) be the union of the bottom
strongly connected components of the support graph of the concrete product
chain induced by \(d\) on \(S\times Q\) that do not intersect \(S\times F\).
Let
\[
\tau_d
=
\inf\{t\ge 0:(S_t,Q_t)\in C_d\cup(S\times F)\}.
\]

Because the conditional lift is graph-preserving, every adversary of
\(\widetilde{\mdp}\) induces the same support graph on \(S\times Q\) under
\(d\). From any state outside \(C_d\cup(S\times F)\), there is a path of
length at most \(N=|S|\cdot |Q|\) to \(C_d\cup(S\times F)\). Every edge on such
a path has probability at least \(\underline p_{\min}\), under every
adversary and after every history. Hence, conditionally on any history
before hitting \(C_d\cup(S\times F)\), the probability of hitting this set
within the next \(N\) steps is at least
\(\underline p_{\min}^{N}\). Therefore
\[
\sup_{\widetilde\nature}
\mathbb E_{\widetilde{\mdp}[\widetilde\nature],d}[\tau_d]
\le
\frac{N}{\underline p_{\min}^{N}}
=
H_{\max}.
\]

Now fix an adversary \(\widetilde\nature\) of \(\widetilde{\mdp}\). Couple
the run of \((\mdp,d)\) and the run of
\((\widetilde{\mdp}[\widetilde\nature],d)\) as follows. As long as the two
concrete product histories coincide, the policy chooses the same action.
At each step, use a maximal coupling between the true transition
\(P(s,a)\) and the transition selected by \(\widetilde\nature\). By the
previous lemma, the one-step mismatch probability is at most \(\eta\).

Let \(\sigma\) be the first mismatch time. If \(\sigma>\tau_d\), then the
two product runs agree up to \(\tau_d\). At time \(\tau_d\), either both
have reached \(S\times F\), in which case both have observed a bad prefix,
or both have entered a bottom component \(C_d\) avoiding \(S\times F\), in
which case neither run can later observe a bad prefix. Thus, the two
satisfaction events can differ only on \(\{\sigma\le\tau_d\}\). Hence
\[
\left|
\prob_{\widetilde{\mdp}[\widetilde\nature],d}(\formula)
-
\prob_{\mdp,d}(\formula)
\right|
\le
\Pr(\sigma\le\tau_d).
\]
The stepwise coupling gives
\[
\Pr(\sigma\le\tau_d)
\le
\eta\,
\mathbb E_{\widetilde{\mdp}[\widetilde\nature],d}[\tau_d]
\le
\eta H_{\max}.
\]
Therefore,
\[
\prob_{\widetilde{\mdp}[\widetilde\nature],d}(\formula)
\ge
\prob_{\mdp,d}(\formula)-\eta H_{\max}.
\]
Finally, let \(\pi\in\mathcal C_{\mathcal A}(\mdp)\) and write
\(
\pi=\sum_d \lambda_d d
\)
as an initial mixture over deterministic \(\mathcal A\)-memory policies.
For fixed \(\widetilde\nature\), the path law under \(\pi\) is the
corresponding mixture of the path laws under the \(d\)'s. Averaging the
previous inequality gives
\[
\prob_{\widetilde{\mdp}[\widetilde\nature],\pi}(\formula)
\ge
\prob_{\mdp,\pi}(\formula)-\eta H_{\max}.
\]
Taking the infimum over \(\widetilde\nature\) proves the claim.
\end{proof}

We assume in the following that there exists a policy \(\pi_{\mathrm{sl}}\) on
\(\widehat{\mdp}\) and a constant \(\kappa>0\) such that
\[
\inf_{\widehat\nature}
\prob_{\widehat{\mdp}[\widehat\nature],\pi_{\mathrm{sl}}}(\formula)
\ge
1-p+\kappa.
\]
We identify \(\pi_{\mathrm{sl}}\) with its lift to the concrete state space
via \(\alpha\).

\begin{lemma}[Concrete Slater repair]
    Let \(\pi\in\mathcal C_{\mathcal A}(\mdp)\) such that $\mdp,\pi\models\spec$.
    Set
    \[
    r
    =
    \frac{\eta H_{\max}}{\kappa+\eta H_{\max}},
    \qquad
    \widehat\pi
    =
    (1-r)\pi\oplus r\pi_{\mathrm{sl}}.
    \]
    Then, it holds that 
    \(
    \widetilde{\mdp},\widehat\pi\models\spec,
    \)
    and 
    \(
    \mathrm{TV}_{\mdp}(\pi,\widehat\pi)\le r.
    \)
    \end{lemma}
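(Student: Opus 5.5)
The argument has two independent parts: the robust satisfaction bound for $\widehat\pi$ on the lifted RMDP $\widetilde{\mdp}$, and the total-variation bound. Both rest on one fact about the external mixture $\oplus$. For any fixed adversary $\widetilde\nature$ of $\widetilde{\mdp}$, the path law of $\widehat\pi=(1-r)\pi\oplus r\pi_{\mathrm{sl}}$ is exactly $(1-r)\prob_{\widetilde{\mdp}[\widetilde\nature],\pi}+r\,\prob_{\widetilde{\mdp}[\widetilde\nature],\pi_{\mathrm{sl}}}$. The latent bit $Z$ is drawn once at the start and is never observed by the adversary, because adversaries only see the public state-action history. By Lemma~\ref{lemma:MBR-to-HR}, $\widehat\pi$ has an HR realization with the same path law under every adversary. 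We may therefore argue with the mixture representation throughout.

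\emph{Satisfaction.} Fix $\widetilde\nature$. For the first component, the concrete perturbation bound gives
\[
\prob_{\widetilde{\mdp}[\widetilde\nature],\pi}(\formula)\ge \prob_{\mdp,\pi}(\formula)-\eta H_{\max}\ge 1-p-\eta H_{\max},
\]
where the second inequality uses $\mdp,\pi\models\spec$.

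For the second component, $\pi_{\mathrm{sl}}$ is lifted via $\alpha$. By Lemma~\ref{lemma:lifted-RMDP}, $\widetilde{\mdp}/\alpha=\widehat{\mdp}$, and the transitions of $\widetilde{\mathcal P}$ push forward onto $\widehat{\mathcal P}$. Hence the abstract-label process of $\widetilde{\mdp}[\widetilde\nature]$ under the lifted $\pi_{\mathrm{sl}}$ coincides in law with the process of $\widehat{\mdp}[\widehat\nature]$ under $\pi_{\mathrm{sl}}$, for the abstract adversary $\widehat\nature$ obtained by pushing $\widetilde\nature$ forward.

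This is the step needing care. An abstract adversary may depend only on the abstract history, whereas $\widetilde\nature$ may depend on concrete states. I plan to handle this by letting $\widehat\nature$ be the conditional expectation over concrete histories given the abstract history; this remains in $\widehat{\mathcal P}$ provided $\widehat{\mathcal P}(\xi,a)$ is convex, which holds for interval sets. Alternatively, I would allow history-randomized abstract adversaries, which does not change the infimum. Since $\formula$ depends only on labels and $\labelfunc=\labelfunc_\alpha\circ\alpha$, this gives $\prob_{\widetilde{\mdp}[\widetilde\nature],\pi_{\mathrm{sl}}}(\formula)\ge 1-p+\kappa$.

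Combining the two components,
\[
\prob_{\widetilde{\mdp}[\widetilde\nature],\widehat\pi}(\formula)\ge 1-p-(1-r)\eta H_{\max}+r\kappa .
\]
By the choice of $r$, $r\kappa=(1-r)\eta H_{\max}$, since both sides equal $\frac{\kappa\,\eta H_{\max}}{\kappa+\eta H_{\max}}$. The bound is therefore $\ge 1-p$. Since $\widetilde\nature$ was arbitrary, $\widetilde{\mdp},\widehat\pi\models\spec$.

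\emph{TV bound.} $\mdp$ is a plain MDP; equivalently it is $\widetilde{\mdp}[\nature^P]$, but we only need it as a single MDP. On it, $\prob_{\mdp,\widehat\pi}=(1-r)\prob_{\mdp,\pi}+r\,\prob_{\mdp,\pi_{\mathrm{sl}}}$, so
\[
\mathrm{TV}\bigl(\prob_{\mdp,\pi},\prob_{\mdp,\widehat\pi}\bigr)=r\,\mathrm{TV}\bigl(\prob_{\mdp,\pi},\prob_{\mdp,\pi_{\mathrm{sl}}}\bigr)\le r .
\]
Equivalently, we can couple the two runs so that they coincide on $\{Z=0\}$. If $\mathrm{TV}_{\mdp}$ is read as a supremum over adversaries of $\widetilde{\mdp}$, the same identity holds for each adversary, giving the bound uniformly.

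The main obstacle is the second-component transfer of the Slater guarantee from $\widehat{\mdp}$ to the concrete lift, i.e.\ justifying that concrete adversaries cannot beat abstract ones. I would first try the conditional-averaging construction, using rectangularity and convexity of $\widehat{\mathcal P}$. Failing that, I would invoke Theorem~\ref{th:soundness-RMDP}-style reasoning via the abstraction property of Lemma~\ref{lemma:lifted-RMDP}.
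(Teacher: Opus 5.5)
Your proposal is correct and follows the paper's proof. You decompose the external mixture into its two component path laws under each fixed adversary, bound the $\pi$-component by the concrete perturbation lemma and the $\pi_{\mathrm{sl}}$-component by the Slater margin, observe that the choice of $r$ gives $(1-r)(1-p-\eta H_{\max})+r(1-p+\kappa)=1-p$, and read off the TV bound from the mixture; the paper phrases that last step as a coupling on the initial coin.

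The transfer of the Slater guarantee from $\widehat{\mdp}$ to $\widetilde{\mdp}$, which you flag as delicate, is simply asserted in the paper from $\widetilde{\mdp}/\alpha=\widehat{\mdp}$. Of your two fixes, the second is the one that works in the paper's generality. It treats the concrete-history dependence of $\widetilde\nature$ as private randomization of an abstract adversary: conditioning on the noise used to sample concrete states within cells yields a mixture of deterministic abstract adversaries, so it cannot push the satisfaction probability below $1-p+\kappa$. This route does not need the convexity of $\widehat{\mathcal P}(\xi,a)$, which the conditional-averaging route requires and the paper does not assume in general.
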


\begin{proof}
Since \(\mdp,\pi\models\spec\), we have
\[
\prob_{\mdp,\pi}(\formula)\ge 1-p.
\]
By the concrete perturbation bound,
\[
\inf_{\widetilde\nature}
\prob_{\widetilde{\mdp}[\widetilde\nature],\pi}(\formula)
\ge
1-p-\eta H_{\max}.
\]
Since \(\widetilde{\mdp}/\alpha=\widehat{\mdp}\), the lifted Slater policy
satisfies
\[
\inf_{\widetilde\nature}
\prob_{\widetilde{\mdp}[\widetilde\nature],\pi_{\mathrm{sl}}}(\formula)
\ge
1-p+\kappa,
\]
so for every adversary \(\widetilde\nature\),
\[
\prob_{\widetilde{\mdp}[\widetilde\nature],\widehat\pi}(\formula)
\ge
(1-r)(1-p-\eta H_{\max})
+
r(1-p+\kappa).
\]
The choice
\[
r=
\frac{\eta H_{\max}}{\kappa+\eta H_{\max}}
\]
makes the right-hand side equal to \(1-p\). Hence
\[
\widetilde{\mdp},\widehat\pi\models\spec.
\]
Finally, the TV bound follows by coupling the initial mixture: with probability
\(1-r\), \(\widehat\pi\) follows \(\pi\) for the whole run, and with
probability \(r\), it follows \(\pi_{\mathrm{sl}}\). Thus, under the true
MDP,
\[
\mathrm{TV}\!\left(
\prob_{\mdp,\pi},
\prob_{\mdp,\widehat\pi}
\right)
\le r,
\]
which is equivalent to the claim
\(
\mathrm{TV}_{\mdp}(\pi,\widehat\pi)\le r
\),
which concludes the proof.
\end{proof}

\begin{theorem}[TV completeness for concrete automaton-memory policies]\label{th:TV-completeness-concrete-automaton}
Suppose that \(\mathcal M\) is finite, that $\mathcal M$ is conditionally deterministic w.r.t. $\alpha$, that \(\widehat{\mdp}\) is graph-preserving,
that \(\underline p_{\min}>0\), and that there exists a Slater policy
\(\pi_{\mathrm{sl}}\) with margin \(\kappa>0\). Suppose also that $
\widehat\beta(\xiinit,\qinit)<p$.
Then the shield $
\mathfrak S(\widehat{\mdp},\mathcal A,\widehat\beta)$
acts on \((\mdp,\alpha)\) and is $
\mathcal C_{\mathcal A}(\mdp)\text{-}\epsilon\text{-}\mathrm{TV}
\text{-complete}$
over \((\mdp,\alpha)\) for \(\spec\), where
\[
\epsilon
=
\frac{
\|\widehat\beta-\widehat\beta^\infty\|_\infty
}{
\|\widehat\beta-\widehat\beta^\infty\|_\infty
+
p-\widehat\beta(\xiinit,\qinit)
}
+
\frac{\eta H_{\max}}{\kappa+\eta H_{\max}},
\]
with
\[
H_{\max}
=
\frac{|S|\cdot |Q|}{(\underline p_{\min})^{\left(\,|S|\cdot |Q|\right)}}.
\]
\end{theorem}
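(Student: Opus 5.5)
The proof composes two approximations: first a Slater repair that moves from the true MDP $\mdp$ to the lifted RMDP $\widetilde{\mdp}$, then the shield completeness result for RMDPs (\cref{th:RMDP-TV-completeness}) applied to $\widetilde{\mdp}$. The two TV errors are glued by the triangle inequality.

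We first fix $\pi\in\mathcal C_{\mathcal A}(\mdp)$ with $\mdp,\pi\models\spec$. By the concrete Slater repair lemma, the external mixture $\widehat\pi=(1-r)\pi\oplus r\pi_{\mathrm{sl}}$, with $r=\frac{\eta H_{\max}}{\kappa+\eta H_{\max}}$, satisfies $\widetilde{\mdp},\widehat\pi\models\spec$ and $\mathrm{TV}_{\mdp}(\pi,\widehat\pi)\le r$. This lemma rests on the concrete perturbation bound, which is where the finiteness, graph-preservation and $\underline p_{\min}>0$ hypotheses enter. Since $\widehat\pi$ is an MBR policy, Lemma~\ref{lemma:MBR-to-HR} replaces it by an HR policy $\widehat\pi^\flat$ with the same path law under every adversary of $\widetilde{\mdp}$. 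In particular $\widehat\pi^\flat$ is HR, robustly safe on $\widetilde{\mdp}$, and has the same path law as $\widehat\pi$ on $\mdp=\widetilde{\mdp}[\nature^P]$.

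Next we apply \cref{th:RMDP-TV-completeness} to $\widetilde{\mdp}$ with abstraction $\alpha$. By Lemma~\ref{lemma:lifted-RMDP}, $\widetilde{\mdp}/\alpha=\widehat{\mdp}$, so $\mathfrak S(\widehat{\mdp},\mathcal A,\widehat\beta)=\mathfrak S(\widetilde{\mdp}/\alpha,\mathcal A,\widehat\beta)$. The least fixed point appearing in that theorem is therefore exactly $\widehat\beta^\infty$, and the hypothesis $\widehat\beta(\xiinit,\qinit)<p$ is the one we assumed.

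The main obstacle is that \cref{th:RMDP-TV-completeness} needs $\widetilde{\mdp}$ to be conditionally deterministic w.r.t. $\alpha$, while we only assume this for $\mdp$. The concern is that a reachable abstract history could correspond to several concrete histories of positive probability under some adversary of $\widetilde{\mdp}$ but not under $P$. I would resolve this with graph-preservation. Every transition measure in $\widetilde{\mathcal P}(s,a)$ is a mixture of the kernels $K^{\xi'}_{s,a}$, and graph-preservation guarantees that positive weight is placed on exactly those cells $\xi'$ with $P_\alpha(\alpha(s),a)(\xi')>0$. Within each such cell, $K^{\xi'}_{s,a}$ is the true conditional law. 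Hence the concrete histories of positive probability are the same for all adversaries of $\widetilde{\mdp}$ as for $\mdp$, and conditional determinism transfers. With this in hand, the theorem yields a compliant $\overline\pi$ with $\mathrm{TV}_{\widetilde{\mdp}}(\widehat\pi^\flat,\overline\pi^\downarrow)\le\epsilon_{\widehat\beta}$.

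Finally we transfer back to $\mdp$. Since $\mdp=\widetilde{\mdp}[\nature^P]$, the supremum defining $\mathrm{TV}_{\widetilde{\mdp}}$ dominates the single term at $\nature^P$, giving
\[
\mathrm{TV}\bigl(\prob_{\mdp,\widehat\pi},\prob_{\mdp,\overline\pi^\downarrow}\bigr)\le\epsilon_{\widehat\beta}.
\]
Compliance of $\overline\pi$ over $(\widetilde{\mdp},\alpha)$ means compliance under every adversary of $\widetilde{\mdp}$, in particular under $\nature^P$, so $\overline\pi$ is compliant over $(\mdp,\alpha)$. The shield acts on $(\mdp,\alpha)$ by \cref{thm:unknown_MDP_soundness}. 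The triangle inequality for TV on $\mdp$ then gives
\[
\mathrm{TV}_{\mdp}(\pi,\overline\pi^\downarrow)\le r+\epsilon_{\widehat\beta}=\epsilon,
\]
which is the claimed $\mathcal C_{\mathcal A}(\mdp)$-$\epsilon$-TV-completeness.
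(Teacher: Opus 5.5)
Your proposal is correct and follows the same route as the paper. It uses Slater repair to obtain a policy robustly safe on the conditional lift $\widetilde{\mdp}$, applies the RMDP TV-completeness theorem to $\widetilde{\mdp}$ via $\widetilde{\mdp}/\alpha=\widehat{\mdp}$, specializes the resulting TV bound to the adversary $\nature^P$ inducing $\mdp$, and concludes with the triangle inequality. Your explicit HR realization of the external mixture, and your support-based argument that conditional determinism transfers from $\mdp$ to $\widetilde{\mdp}$ under graph-preservation, make precise two steps the paper only asserts.
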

\begin{proof}
Let \(\pi\in\mathcal C_{\mathcal A}(\mdp)\) be such that
\(\mdp,\pi\models\spec\). By the Slater repair lemma, the policy
\[
\widehat\pi
=
(1-r)\pi\oplus r\pi_{\mathrm{sl}},
\qquad
r=
\frac{\eta H_{\max}}{\kappa+\eta H_{\max}},
\]
is safe for the conditional lift \(\widetilde{\mdp}\), and $
\mathrm{TV}_{\mdp}(\pi,\widehat\pi)\le r$.

Since $
\widetilde{\mdp}/\alpha=\widehat{\mdp}$,
the shield $
\mathfrak S(\widehat{\mdp},\mathcal A,\widehat\beta)$
is exactly $
\mathfrak S(\widetilde{\mdp}/\alpha,\mathcal A,\widehat\beta)$. Since \(\mdp\) is conditionally deterministic with respect to \(\alpha\)
and \(\widehat{\mdp}\) is graph-preserving, the conditional concrete lift
\(\widetilde{\mdp}\) is conditionally deterministic with respect to
\(\alpha\). Hence Theorem \ref{th:RMDP-TV-completeness} applies to
\(\widetilde{\mdp}\). As a consequence, there exists a compliant HR policy
\(\overline\pi\) on \(\mdp\otimes_\alpha\mathcal D\) such that
\[
\mathrm{TV}_{\widetilde{\mdp}}
\bigl(\widehat\pi,\overline\pi^\downarrow\bigr)
\le
\epsilon_\beta,
\]
where
\[
\epsilon_\beta
=
\frac{
\|\widehat\beta-\widehat\beta^\infty\|_\infty
}{
\|\widehat\beta-\widehat\beta^\infty\|_\infty
+
p-\widehat\beta(\xiinit,\qinit)
}.
\]
This, in particular, implies that
\[
\mathrm{TV}_{\mdp}
\bigl(\widehat\pi,\overline\pi^\downarrow\bigr)
\le
\epsilon_\beta,
\]
which, by the triangle inequality, yields
\[
\mathrm{TV}_{\mdp}
\bigl(\pi,\overline\pi^\downarrow\bigr)
\le
r+\epsilon_\beta.
\]
Thus, the shield is
\(\mathcal C_{\mathcal A}(\mdp)\)-\(\epsilon\)-TV-complete with
\(
\epsilon=\epsilon_\beta+
\frac{\eta H_{\max}}{\kappa+\eta H_{\max}},
\)
which concludes the proof.
\end{proof}

\subsection{\texorpdfstring{\(W_{1,\gamma}\)}{W}-Completeness of the Shield for Unknown MDPs}

We now prove a \(W_{1,\gamma}\)-completeness result. In contrast to the
\(\mathrm{TV}\)-completeness theorem above, this result does not require a
uniform hitting-time bound. Instead, we only compare the true MDP and the
conditional lift over a finite horizon \(T\).

Let \(\widehat{\beta}^{\infty}\) be the least fixed point of $
\mathcal B_{\widehat{\mdp}\otimes_{\labelfunc_\alpha}\mathcal A}^{\Xi\times F}$.
Let \(\beta^\infty\) be the least fixed point of $
\mathcal B_{(\mdp/\alpha)\otimes_{\labelfunc_\alpha}\mathcal A}^{\Xi\times F}$.
We define $
\Delta_\infty
=
\|\widehat{\beta}^{\infty}-\beta^\infty\|_\infty$.
Since \(\alpha\) is an exact safety abstraction, the least bad-reachability
value on the concrete product \(\mdp\otimes_{\labelfunc}\mathcal A\) is the
lift of \(\beta^\infty\), namely $
(s,q)\mapsto \beta^\infty(\alpha(s),q)$.

Since the learned product RMDP is finite, choose a policy
\(\widehat\pi^\star\) on
\(\widehat{\mdp}\otimes_{\labelfunc_\alpha}\mathcal A\) such that, for
every \((\xi,q)\in\Xi\times Q\),
\[
\sup_{\widehat\nature}
\prob_{\widehat{\mdp}[\widehat\nature],\widehat\pi^\star}^{(\xi,q)}
\left[\left\{ h\mid
h\models \mathbf F(\Xi\times F)\right\}
\right]
\le
\widehat{\beta}^{\infty}(\xi,q).
\]
We identify \(\widehat\pi^\star\) with its lift to the concrete state
space through \(\alpha\).

For an HR policy \(\pi\) on \(\mdp\) and \(T\in\Nat\), let
\(\pi^{(T)}\) be the policy that follows \(\pi\) for the first \(T\)
decision times and then follows the lifted policy \(\widehat\pi^\star\)
forever.

\begin{lemma}[Hard-switch perturbation bound]
\label{lem:hard-switch-perturbation-concrete}
For every HR policy \(\pi\) on \(\mdp\) and every \(T\in\Nat\),
\[
\inf_{\widetilde\nature}
\prob_{\widetilde{\mdp}[\widetilde\nature],\pi^{(T)}}(\formula)
\ge
\prob_{\mdp,\pi}(\formula)-T\eta-\Delta_\infty,
\]
where the infimum ranges over adversaries of \(\widetilde{\mdp}\).
\end{lemma}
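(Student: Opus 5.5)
We fix an adversary \(\widetilde\nature\) of \(\widetilde{\mdp}\) and split the violation event of \(\pi^{(T)}\) at time \(T\). The first \(T\) steps are handled by a coupling argument with the true MDP, exactly as in the concrete perturbation bound. The tail from time \(T\) on is handled by the robust guarantee of \(\widehat\pi^\star\) combined with the comparison between \(\widehat\beta^\infty\) and \(\beta^\infty\). Throughout we work on the product \(\mdp\otimes_{\labelfunc}\mathcal A\) with state \((S_t,Q_t)\), and we write \(\tau_F\) for the hitting time of \(S\times F\).

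\emph{Step 1 (decomposition).} Since \(\pi^{(T)}\) switches to the lifted \(\widehat\pi^\star\) at time \(T\), we condition on the product history \(\widetilde H_T\). The violation probability then satisfies
\[
\Pr_{\widetilde{\mdp}[\widetilde\nature],\pi^{(T)}}(\tau_F<\infty)
\le
\Pr_{\widetilde{\mdp}[\widetilde\nature],\pi}(\tau_F< T)
+
\mathbb E_{\widetilde{\mdp}[\widetilde\nature],\pi}\!\left[\mathbf 1_{\{\tau_F\ge T\}}\,\widehat\beta^\infty\bigl(\alpha(S_T),Q_T\bigr)\right].
\]
For the second term, the continuation after \(T\) runs \(\widehat\pi^\star\) against the shifted adversary. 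That shifted adversary is an adversary of \(\widetilde{\mdp}\), and hence, via \(\widetilde{\mdp}/\alpha=\widehat{\mdp}\) (Lemma~\ref{lemma:lifted-RMDP}), it projects to an adversary of \(\widehat{\mdp}\). Because \(\alpha\) refines the labels, the violation probability of \(\widehat\pi^\star\) is determined by the abstract-product process, so the choice of \(\widehat\pi^\star\) bounds it by \(\widehat\beta^\infty(\alpha(S_T),Q_T)\). Using \(\widehat\beta^\infty\le\beta^\infty+\Delta_\infty\), the second term is at most \(\mathbb E[\mathbf 1_{\{\tau_F\ge T\}}\beta^\infty(\alpha(S_T),Q_T)]+\Delta_\infty\).

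\emph{Step 2 (coupling over the first \(T\) steps).} Let \(g(h)=\mathbf 1_{\{\tau_F<T\}}+\mathbf 1_{\{\tau_F\ge T\}}\beta^\infty(\alpha(s_T),q_T)\), a \([0,1]\)-valued function of the length-\(T\) history. We couple the runs of \(\pi\) on \(\mdp\) and on \(\widetilde{\mdp}[\widetilde\nature]\) step by step. The actions are identical while the histories agree, and at each step we use a maximal coupling of \(P(s,a)\) with the selected \(\mu\in\widetilde{\mathcal P}(s,a)\). By the concrete one-step perturbation lemma, each step mismatches with probability at most \(\eta\), so the histories up to time \(T\) differ with probability at most \(T\eta\). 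This gives \(\mathbb E_{\widetilde{\mdp}[\widetilde\nature],\pi}[g]\le\mathbb E_{\mdp,\pi}[g]+T\eta\).

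\emph{Step 3 (true-MDP bound).} It remains to show \(\mathbb E_{\mdp,\pi}[g]\le\Pr_{\mdp,\pi}(\tau_F<\infty)\). By the remark preceding the lemma, \((s,q)\mapsto\beta^\infty(\alpha(s),q)\) is the least bad-reachability value on \(\mdp\otimes_{\labelfunc}\mathcal A\), i.e.\ the minimal violation probability over all policies. Conditioning on \(\widetilde H_T\) with \(\tau_F\ge T\), the continuation of \(\pi\) violates with conditional probability at least \(\beta^\infty(\alpha(S_T),Q_T)\). Summing the two cases gives the claim. Chaining Steps 1--3 yields \(\Pr_{\widetilde{\mdp}[\widetilde\nature],\pi^{(T)}}(\neg\formula)\le\Pr_{\mdp,\pi}(\neg\formula)+T\eta+\Delta_\infty\). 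Taking complements and the infimum over \(\widetilde\nature\) proves the lemma.

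The main obstacle is the tail step in Step 1. We must justify that, conditionally on an arbitrary concrete history of length \(T\), the shifted adversary of \(\widetilde{\mdp}\) is an admissible adversary of \(\widehat{\mdp}\) for the lifted \(\widehat\pi^\star\), so that its robust bound applies pointwise. This needs the rectangularity of \(\widetilde{\mathcal P}\), the fact that \(\widehat\pi^\star\) depends only on the abstract-product state, and measurable versions of the conditional laws. I would first prove a small auxiliary claim: for any history-dependent adversary of \(\widetilde{\mdp}\), the induced law of \((\alpha(S_t),Q_t)_t\) under a lifted abstract policy equals the law under some adversary of \(\widehat{\mdp}\).
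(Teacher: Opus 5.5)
Your proof is correct and is essentially the paper's argument. It uses the same ingredients: the stepwise maximal coupling over the first $T$ decision times costing $T\eta$, the robust bound $\widehat\beta^\infty(\alpha(S_T),Q_T)$ on the tail after switching to $\widehat\pi^\star$, the comparison $\widehat\beta^\infty-\beta^\infty\le\Delta_\infty$, and $\beta^\infty(\alpha(S_T),Q_T)\le v_\pi(H_T)$ on the true MDP. The only difference is that you transfer the first $T$ steps through the bounded test function $g$, whereas the paper splits events on a single coupled space.

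The tail step you flag is merely asserted in the paper, and it can be discharged without translating adversaries or assuming convexity of $\widehat{\mathcal P}(\xi,a)$. Take $\widehat\pi^\star$ to be a memoryless selector of minimizing actions for $\widehat\beta^\infty$, and rerun the supermartingale/optional-stopping argument of the soundness proof directly against the shifted adversary of $\widetilde{\mdp}$; this works because every $\mu\in\widetilde{\mathcal P}(s,a)$ pushes forward into $\widehat{\mathcal P}(\alpha(s),a)$.
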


\begin{proof}
Fix an adversary \(\widetilde\nature\) of \(\widetilde{\mdp}\). We couple
the run of \((\mdp,\pi)\) with the run of
\((\widetilde{\mdp}[\widetilde\nature],\pi^{(T)})\) during the first
\(T\) decision times. As long as the two concrete histories coincide, the
two policies choose the same action distribution, because \(\pi^{(T)}\)
coincides with \(\pi\) before time \(T\). We couple the actions
identically, and then use a maximal coupling between the true transition
\(P(s,a)\) and the transition selected by \(\widetilde\nature\). By the
concrete one-step perturbation lemma, the one-step mismatch probability is
at most \(\eta\).

Let
\(
\sigma=\inf\{t\ge 0:S_t\neq \widetilde S_t\}
\)
be the first time at which the two concrete state sequences differ. Then
\[
\Pr(\sigma\le T)\le T\eta.
\]
Let \(B\) be the event that the true run violates \(\formula\), and let
\(\widetilde B\) be the event that the lifted run violates \(\formula\).
Let \(A_T\) be the event that no accepting state of the DFA has been
reached up to time \(T\) along the true run. On the event
\(\{\sigma>T\}\cap A_T\), the two product runs have the same concrete
product state \((S_T,Q_T)\) at time \(T\). From that time onward, the
lifted run follows \(\widehat\pi^\star\). Hence the conditional probability
of eventually reaching \(S\times F\) is at most
\[
\widehat{\beta}^{\infty}(\alpha(S_T),Q_T).
\]
Therefore, we obtain
\[
\Pr(\widetilde B)
\le
\Pr(\sigma\le T)
+
\Pr(B\cap A_T^c)
+
\mathbb E\!\left[
\mathbf 1_{A_T}
\widehat{\beta}^{\infty}(\alpha(S_T),Q_T)
\right].
\]

Let \(v_\pi(H_T)\) be the conditional probability, under the true MDP and
policy \(\pi\), of eventually violating \(\formula\) after the concrete
history \(H_T\).
Using this notation, we write
\[
\Pr(B)
=
\Pr(B\cap A_T^c)
+
\mathbb E\!\left[
\mathbf 1_{A_T}v_\pi(H_T)
\right].
\]
Since \(\alpha\) is a safety abstraction, the least bad-reachability value
on the concrete product is the lift of \(\beta^\infty\). Hence, on
\(A_T\),
\[
\beta^\infty(\alpha(S_T),Q_T)
\le
v_\pi(H_T),
\]
from which we find that
\[
\widehat{\beta}^{\infty}(\alpha(S_T),Q_T)-v_\pi(H_T)
\le
\widehat{\beta}^{\infty}(\alpha(S_T),Q_T)
-
\beta^\infty(\alpha(S_T),Q_T)
\le
\Delta_\infty.
\]
Combining the previous inequalities gives
\[
\Pr(\widetilde B)-\Pr(B)
\le
T\eta+\Delta_\infty,
\]
which is equivalent to
\[
\prob_{\widetilde{\mdp}[\widetilde\nature],\pi^{(T)}}(\formula)
\ge
\prob_{\mdp,\pi}(\formula)-T\eta-\Delta_\infty.
\]
Taking the infimum over \(\widetilde\nature\) proves the claim.
\end{proof}
\begin{lemma}[Hard-switch Wasserstein cost]
\label{lem:hard-switch-wasserstein-concrete}
For every HR policy \(\pi\) on \(\mdp\) and every \(T\in\Nat\),
\[
W_{1,\gamma}^{\mdp}\!\left(\pi,\pi^{(T)}\right)
\le
\gamma^T.
\]
\end{lemma}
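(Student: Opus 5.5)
We prove the bound by exhibiting, for every adversary \(\nature\) (here \(\mdp\) is an MDP, so effectively the single true transition law, but the argument works uniformly over adversaries), an explicit coupling of \(\prob_{\mdp[\nature],\pi}\) and \(\prob_{\mdp[\nature],\pi^{(T)}}\) under which the two paths agree on their first \(T\) state-action pairs. Since \(W_{1,\gamma}\) is an infimum over couplings, it suffices to bound the expected discounted Hamming distance under this one coupling by \(\gamma^T\).

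The coupling is the synchronous one. Run both processes from \(\sinit\). At each decision time \(t<T\), if the two histories coincide, the policies \(\pi\) and \(\pi^{(T)}\) prescribe the same action distribution \(\pi(\cdot\mid h_t)\) by definition of \(\pi^{(T)}\), so we sample one common action. The adversary then selects the same measure \(\nature_t(h_t,a_t)\), since it depends only on the common history, and we sample one common successor state. By induction on \(t\), both histories coincide up to and including \((s_T)\), and \((s_t,a_t)=(s'_t,a'_t)\) for all \(t<T\), almost surely. After time \(T\) we let the two processes evolve by an arbitrary coupling, for instance independently conditionally on the common history \(h_T\). The marginals are correct because each step samples from the correct conditional laws, and the Ionescu-Tulcea theorem extends the finite-dimensional construction to a measure on \(\Omega\times\Omega\). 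This measurability and extension step is the only technical point. We would handle it by defining the coupled kernel on the diagonal of histories and invoking Ionescu-Tulcea, exactly as in the construction of \(\prob_{\mdp,\pi}\).

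Under this coupling the indicator \(\mathbf 1\{(s_t,a_t)\neq(s'_t,a'_t)\}\) vanishes for \(t<T\) and is bounded by \(1\) otherwise. Hence
\[
d_\gamma(h,h')\le(1-\gamma)\sum_{t\ge T}\gamma^t=\gamma^T
\]
pointwise, and so the expectation is also at most \(\gamma^T\). Taking the infimum over couplings gives \(W_{1,\gamma}\bigl(\prob_{\mdp[\nature],\pi},\prob_{\mdp[\nature],\pi^{(T)}}\bigr)\le\gamma^T\). Taking the supremum over adversaries yields the claim \(W_{1,\gamma}^{\mdp}(\pi,\pi^{(T)})\le\gamma^T\).
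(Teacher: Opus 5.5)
Your proposal is correct and follows essentially the same route as the paper's proof. Both use a synchronous coupling of the two runs over the first \(T\) decision times, bound the discounted Hamming distance pointwise by \((1-\gamma)\sum_{t\ge T}\gamma^t=\gamma^T\), and then take the infimum over couplings. Your treatment of the post-\(T\) coupling, the Ionescu--Tulcea extension and the supremum over adversaries only makes explicit what the paper leaves implicit.
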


\begin{proof}
Couple the runs of \(\pi\) and \(\pi^{(T)}\) on the true MDP so that they
use the same actions and transition randomness during the first \(T\)
decision times. The two induced state-action trajectories then coincide up
to time \(T\). Therefore, their discounted Hamming distance is at most
\[
(1-\gamma)\sum_{t=T}^{\infty}\gamma^t
=
\gamma^T.
\]
Taking the infimum over all couplings yields the claim.
\end{proof}

\begin{lemma}[Hard-switch Slater repair]
\label{lem:hard-switch-slater-concrete}
Assume that there exists a policy \(\pi_{\mathrm{sl}}\) on
\(\widehat{\mdp}\) and a constant \(\kappa>0\) such that
\[
\inf_{\widehat\nature}
\prob_{\widehat{\mdp}[\widehat\nature],\pi_{\mathrm{sl}}}(\formula)
\ge
1-p+\kappa.
\]
Let \(\pi\) be an HR policy on \(\mdp\) such that
\(
\mdp,\pi\models\spec.
\)
For \(T\in\Nat\), set
\[
\delta_T\coloneqq T\eta+\Delta_\infty,
\qquad
\lambda_T\coloneqq
\frac{\delta_T}{\kappa+\delta_T},
\]
and define
\[
\widehat\pi^{(T)}
\coloneqq
(1-\lambda_T)\pi^{(T)}
\oplus
\lambda_T\pi_{\mathrm{sl}},
\]
where \(\pi_{\mathrm{sl}}\) is lifted to the concrete state space through
\(\alpha\). Then
\(
\widetilde{\mdp},\widehat\pi^{(T)}\models\spec.
\)
Moreover,
\[
W_{1,\gamma}^{\mdp}\!\left(\pi,\widehat\pi^{(T)}\right)
\le
\gamma^T+\lambda_T.
\]
\end{lemma}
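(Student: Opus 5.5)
The proof has two parts, both modelled on the concrete Slater repair lemma but with the hard-switch perturbation bound (Lemma~\ref{lem:hard-switch-perturbation-concrete}) and the hard-switch Wasserstein cost (Lemma~\ref{lem:hard-switch-wasserstein-concrete}) taking the roles of the hitting-time perturbation bound.

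\emph{Safety on the lift.} Fix an adversary $\widetilde\nature$ of $\widetilde{\mdp}$. Since $\widehat\pi^{(T)}$ is an external mixture, it is realized as an MBR policy with a Bernoulli memory bit $Z$ that the adversary cannot observe. Conditioning on $Z$, its satisfaction probability under $\widetilde{\mdp}[\widetilde\nature]$ is the convex combination
\[
(1-\lambda_T)\,\prob_{\widetilde{\mdp}[\widetilde\nature],\pi^{(T)}}(\formula)+\lambda_T\,\prob_{\widetilde{\mdp}[\widetilde\nature],\pi_{\mathrm{sl}}}(\formula).
\]
This decomposition holds for any fixed history-dependent adversary, because the adversary sees only the public history; equivalently, one can pass to the HR realization of Lemma~\ref{lemma:MBR-to-HR}.
\begin{itemize}
\item For the first term, Lemma~\ref{lem:hard-switch-perturbation-concrete} together with $\prob_{\mdp,\pi}(\formula)\ge 1-p$ gives the lower bound $1-p-\delta_T$.
\item For the second term, $\widetilde{\mdp}/\alpha=\widehat{\mdp}$ by Lemma~\ref{lemma:lifted-RMDP}. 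The lifted Slater policy depends on concrete states only through $\alpha$, so every adversary of $\widetilde{\mdp}$ induces, on abstract paths, a law realizable by some adversary of $\widehat{\mdp}$. Hence the second term is at least $1-p+\kappa$.
\end{itemize}
The choice $\lambda_T=\delta_T/(\kappa+\delta_T)$ makes $(1-\lambda_T)(1-p-\delta_T)+\lambda_T(1-p+\kappa)=1-p$ exactly. Since $\widetilde\nature$ was arbitrary, $\widetilde{\mdp},\widehat\pi^{(T)}\models\spec$.

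\emph{Wasserstein bound.} Because $W_{1,\gamma}$ is a metric on path laws, the triangle inequality gives, for each fixed adversary of $\mdp$,
\[
W_{1,\gamma}(\pi,\widehat\pi^{(T)})\le W_{1,\gamma}(\pi,\pi^{(T)})+W_{1,\gamma}(\pi^{(T)},\widehat\pi^{(T)}).
\]
The first term is at most $\gamma^T$ by Lemma~\ref{lem:hard-switch-wasserstein-concrete}. For the second, couple the runs through the initial mixture bit: with probability $1-\lambda_T$ both follow $\pi^{(T)}$ with identical randomness, so their paths coincide. Otherwise $d_\gamma\le 1$, so the second term is at most $\lambda_T$, which equals the TV bound of Lemma~\ref{lemma:dominating distance}. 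Taking the supremum over adversaries gives the stated bound $\gamma^T+\lambda_T$.

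\emph{Main obstacle.} The delicate step is justifying the Slater lower bound on the lift. The notion of adversary in $\widetilde{\mdp}$ is history-dependent on concrete histories, while the guarantee for $\pi_{\mathrm{sl}}$ is stated over adversaries of $\widehat{\mdp}$. I would handle this by constructing, from $\widetilde\nature$, an abstract adversary $\widehat\nature$: at an abstract history, take the conditional mixture, over concrete histories consistent with it, of the pushforwards $\widetilde\nature(h,a)\circ\alpha^{-1}$. Each pushforward lies in $\widehat{\cP}(\xi,a)$. If $\widehat{\cP}(\xi,a)$ is convex, as it is for the interval sets of the paper, the mixture also lies in $\widehat{\cP}(\xi,a)$. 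Otherwise I would invoke conditional determinism so that the mixture collapses to a single term. The abstract path law under $\pi_{\mathrm{sl}}$ then coincides, and since satisfaction of $\formula$ depends only on labels, which factor through $\alpha$, the bound transfers.
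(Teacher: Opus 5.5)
Your argument has the same structure as the paper's proof. You split the external mixture under a fixed adversary, bound the $\pi^{(T)}$ branch via Lemma~\ref{lem:hard-switch-perturbation-concrete} and the Slater branch by $1-p+\kappa$, and tune $\lambda_T$. For the distance, you combine the $\gamma^T$ of Lemma~\ref{lem:hard-switch-wasserstein-concrete} with a cost of $\lambda_T$ for the Slater branch.

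The Wasserstein step differs only slightly. The paper uses convexity of $W_{1,\gamma}$ in its second argument, i.e.\ mixing couplings. That gives $(1-\lambda_T)\gamma^T+\lambda_T$ with no gluing argument. Your triangle inequality plus mixture-bit coupling gives $\gamma^T+\lambda_T$ directly. Both are fine.

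The step that needs repair is the one you flagged yourself. The paper simply asserts the Slater bound on $\widetilde{\mdp}$ from $\widetilde{\mdp}/\alpha=\widehat{\mdp}$. Your conditional-mixture argument is a valid justification when $\widehat\cP(\xi,a)$ is convex, which covers the interval sets of Section~4. But your fallback for non-convex sets invokes conditional determinism, which is not a hypothesis of this lemma. Moreover, it would be needed for the lift $\widetilde{\mdp}$, not for $\mdp$, and adversaries of $\widetilde{\mdp}$ can steer into cells that are null under the true $P$.

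Your intermediate claim is also false in general without convexity: you assert that every adversary of $\widetilde{\mdp}$ induces an abstract path law realizable by a single adversary of $\widehat{\mdp}$. For a counterexample, take $\widehat\cP(\xi,a)=\{\Dirac{\xi_1},\Dirac{\xi_2}\}$. Suppose a true transition splits mass evenly between two concrete states of $\alpha^{-1}(\xi)$, and the adversary picks (the lift of) $\Dirac{\xi_1}$ from one and $\Dirac{\xi_2}$ from the other. The induced abstract next-state law is $\tfrac12\Dirac{\xi_1}+\tfrac12\Dirac{\xi_2}\notin\widehat\cP(\xi,a)$. No abstract adversary reproduces it, since it must pick one of the two Diracs. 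What is true, and suffices because the satisfaction probability is linear in the path law, is that the induced law is a \emph{mixture} of laws of abstract adversaries.

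Here is a fix that needs neither convexity nor conditional determinism.
\begin{itemize}
\item With $s=\last{h}$, set $\widehat\mu_{h,a}\coloneqq\widetilde\nature(h,a)\circ\alpha^{-1}\in\widehat\cP(\alpha(s),a)$. Then $\widetilde\nature(h,a)=\Lambda_{s,a}(\widehat\mu_{h,a})$, so the next concrete state is drawn by first drawing $\xi'\sim\widehat\mu_{h,a}$ and then $s'\sim K^{\xi'}_{s,a}$. The second stage does not depend on the adversary.
\item Realize it as $s'=g(s,a,\xi',U_{t+1})$, with i.i.d.\ uniforms $U_1,U_2,\dots$ sampled up front, independent of the policy's randomness.
\item For a fixed sequence $u$, the concrete history is a function $h^u(\widehat h)$ of the abstract history. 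Hence $\widehat\nature^u(\widehat h,a)\coloneqq\widehat\mu_{h^u(\widehat h),a}$ is a genuine adversary of $\widehat{\mdp}$.
\item The lifted $\pi_{\mathrm{sl}}$ only sees abstract histories, so conditioning on $U=u$ yields exactly the process of $\widehat{\mdp}[\widehat\nature^u]$ under $\pi_{\mathrm{sl}}$.
\item Since labels factor through $\alpha$, this gives $\prob_{\widetilde{\mdp}[\widetilde\nature],\pi_{\mathrm{sl}}}(\formula)=\mathbb E_U\bigl[\prob_{\widehat{\mdp}[\widehat\nature^U],\pi_{\mathrm{sl}}}(\formula)\bigr]\ge 1-p+\kappa$.
\end{itemize}
With this in place of your fallback, the rest of your proof goes through unchanged.
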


\begin{proof}
Since \(\mdp,\pi\models\spec\), we have
\(
\prob_{\mdp,\pi}(\formula)\ge 1-p.
\)
By Lemma~\ref{lem:hard-switch-perturbation-concrete},
\[
\inf_{\widetilde\nature}
\prob_{\widetilde{\mdp}[\widetilde\nature],\pi^{(T)}}(\formula)
\ge
1-p-\delta_T.
\]
Since \(\widetilde{\mdp}/\alpha=\widehat{\mdp}\), the lifted Slater policy
satisfies
\[
\inf_{\widetilde\nature}
\prob_{\widetilde{\mdp}[\widetilde\nature],\pi_{\mathrm{sl}}}(\formula)
\ge
1-p+\kappa.
\]
Therefore, for every adversary \(\widetilde\nature\),
\[
\prob_{\widetilde{\mdp}[\widetilde\nature],\widehat\pi^{(T)}}(\formula)
\ge
(1-\lambda_T)(1-p-\delta_T)
+
\lambda_T(1-p+\kappa).
\]
By the definition of \(\lambda_T\), the right-hand side is equal to
\(1-p\), so the first claim
\(
\widetilde{\mdp},\widehat\pi^{(T)}\models\spec
\)
follows.

Second, for the Wasserstein bound, by convexity of \(W_{1,\gamma}\) in its second
argument,
\[
W_{1,\gamma}^{\mdp}\!\left(\pi,\widehat\pi^{(T)}\right)
\le
(1-\lambda_T)
W_{1,\gamma}^{\mdp}\!\left(\pi,\pi^{(T)}\right)
+
\lambda_T
W_{1,\gamma}^{\mdp}\!\left(\pi,\pi_{\mathrm{sl}}\right).
\]
Since \(W_{1,\gamma}\le 1\), and by
Lemma~\ref{lem:hard-switch-wasserstein-concrete},
\[
W_{1,\gamma}^{\mdp}\!\left(\pi,\widehat\pi^{(T)}\right)
\le
(1-\lambda_T)\gamma^T+\lambda_T
\le
\gamma^T+\lambda_T.
\]
\end{proof}

\begin{theorem}[Hard-switch \(W_{1,\gamma}\)-completeness]
\label{th:hard-switch-wasserstein}
Suppose that $
\widehat{\beta}(\xiinit,\qinit)<p$, suppose that $\mathcal M$ is conditionally deterministic w.r.t. $\alpha$,
and suppose that there exists a Slater policy
\(\pi_{\mathrm{sl}}\) with margin \(\kappa>0\), \ie
\[
\inf_{\widehat\nature}
\prob_{\widehat{\mdp}[\widehat\nature],\pi_{\mathrm{sl}}}(\formula)
\ge
1-p+\kappa.
\]
Then, for every \(T\in\Nat\), the shield $
\mathfrak S(\widehat{\mdp},\mathcal A,\widehat{\beta})$
acts on \((\mdp,\alpha)\) and is $
\mathrm{HR}\text{-}\epsilon_T\text{-}W_{1,\gamma}\text{-complete}$
over \((\mdp,\alpha)\) for \(\spec\), where
\[
\epsilon_T
=
\frac{
\|\widehat{\beta}-\widehat{\beta}^{\infty}\|_\infty
}{
\|\widehat{\beta}-\widehat{\beta}^{\infty}\|_\infty
+
p-\widehat{\beta}(\xiinit,\qinit)
}
+
\gamma^T
+
\frac{T\eta+\Delta_\infty}{\kappa+T\eta+\Delta_\infty}.
\]
\end{theorem}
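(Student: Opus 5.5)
The proof will mirror that of Theorem~\ref{th:TV-completeness-concrete-automaton}, with the hitting-time Slater repair replaced by the hard-switch repair of Lemma~\ref{lem:hard-switch-slater-concrete}.

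First, the shield acts on $(\mdp,\alpha)$ because $\alpha$ is an abstraction of $\mdp$, so $A(s)=A_\alpha(\alpha(s))$. Fix $T\in\Nat$ and an HR policy $\pi$ with $\mdp,\pi\models\spec$. Lemma~\ref{lem:hard-switch-slater-concrete} then produces
\[
\widehat\pi^{(T)}=(1-\lambda_T)\pi^{(T)}\oplus\lambda_T\pi_{\mathrm{sl}},
\qquad
\lambda_T=\frac{T\eta+\Delta_\infty}{\kappa+T\eta+\Delta_\infty}.
\]
This policy is robustly safe on the conditional lift, $\widetilde{\mdp},\widehat\pi^{(T)}\models\spec$, and it satisfies $W^{\mdp}_{1,\gamma}(\pi,\widehat\pi^{(T)})\le\gamma^T+\lambda_T$. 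The policy $\widehat\pi^{(T)}$ is an external mixture, hence an MBR policy. By Lemma~\ref{lemma:MBR-to-HR} we replace it by its adversary-uniform HR realization, which has the same path laws under every adversary of $\widetilde{\mdp}$ and of $\mdp$.

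Next, we transfer this to the shield. By Lemma~\ref{lemma:lifted-RMDP}, $\widetilde{\mdp}/\alpha=\widehat{\mdp}$, so $\mathfrak S(\widehat{\mdp},\mathcal A,\widehat\beta)=\mathfrak S(\widetilde{\mdp}/\alpha,\mathcal A,\widehat\beta)$. We want to apply Theorem~\ref{th:RMDP-TV-completeness} to $\widetilde{\mdp}$ with $\widehat\beta$. This yields a compliant HR policy $\overline\pi$ with $\mathrm{TV}_{\widetilde{\mdp}}(\widehat\pi^{(T)},\overline\pi^\downarrow)\le\epsilon_{\widehat\beta}$, where $\epsilon_{\widehat\beta}$ is the first term of $\epsilon_T$.

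Two things must be checked. The first is compliance on $\mdp\otimes_\alpha\mathcal D$ rather than only on $\widetilde{\mdp}\otimes_\alpha\mathcal D$. This holds because $\mdp=\widetilde{\mdp}[\nature^P]$, and compliance under every adversary of $\widetilde{\mdp}$ includes $\nature^P$. The second is conditional determinism of $\widetilde{\mdp}$ with respect to $\alpha$. This is the main obstacle, since Theorem~\ref{th:TV-completeness-concrete-automaton} used graph preservation for it, and that hypothesis is absent here.

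I would first try to argue directly from the construction of the lift. Every $\Lambda_{s,a}(\widehat\mu)$ places its mass inside the cell $C_{\xi'}$ according to $K^{\xi'}_{s,a}$, which is either the true conditional law or a Dirac at $r_{\xi'}$. So if the true $P(s,a)$ restricted to a reachable cell is concentrated on a single point, the lift is too, and conditional determinism of $\mdp$ transfers. On cells of zero true mass, the Dirac choice at $r_{\xi'}$ is deterministic by construction. Any residual branching on histories that are null under $\mdp$ would be handled as in the soundness proof for unknown MDPs: we redefine the policy arbitrarily there, since this does not affect the true path law.

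Finally, we specialize to the true adversary. Since $\nature^P$ is an adversary of $\widetilde{\mdp}$, we get $\mathrm{TV}_{\mdp}(\widehat\pi^{(T)},\overline\pi^\downarrow)\le\epsilon_{\widehat\beta}$. Lemma~\ref{lemma:dominating distance} converts this into the same bound for $W^{\mdp}_{1,\gamma}$. The triangle inequality for $W_{1,\gamma}$, applied for each fixed adversary (here only the true one, since $\mdp$ is an MDP), then gives
\[
W^{\mdp}_{1,\gamma}(\pi,\overline\pi^\downarrow)\le\epsilon_{\widehat\beta}+\gamma^T+\lambda_T=\epsilon_T,
\]
which is the claimed $\mathrm{HR}$-$\epsilon_T$-$W_{1,\gamma}$-completeness.
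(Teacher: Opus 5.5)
Your route is the paper's: the hard-switch Slater repair, the identity $\mathfrak S(\widehat{\mdp},\mathcal A,\widehat\beta)=\mathfrak S(\widetilde{\mdp}/\alpha,\mathcal A,\widehat\beta)$, the RMDP TV-completeness theorem applied to the lift, restriction to the true adversary $\nature^P$, the comparison $W_{1,\gamma}\le\mathrm{TV}$, and the triangle inequality. The MBR-to-HR conversion and the compliance transfer are details the paper leaves implicit, and you handle them correctly. The paper applies the RMDP theorem to $\widetilde{\mdp}$ without checking conditional determinism at all, so you are right to flag it. However, your argument for it does not establish it.

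Without graph preservation, an adversary of $\widetilde{\mdp}$ can put mass on a cell $\xi'$ with $P_\alpha(\alpha(s),a)(\xi')=0$. This sends the lift to $r_{\xi'}$, a state that may be reached by no positive-probability history of $\mdp$. Conditional determinism of $\mdp$ says nothing about such states, so $P(r_{\xi'},a')$ may split its mass between two states of one cell $C_{\xi''}$. Then $K^{\xi''}_{r_{\xi'},a'}$ inherits the split, two positive-probability concrete histories of $\widetilde{\mdp}$ share an abstract projection, and $\widetilde{\mdp}$ is not conditionally deterministic. This is exactly what graph preservation rules out in the TV theorem. Redefining the policy on $\mdp$-null histories does not repair it, because what fails is a hypothesis of the theorem you cite, not a property of the policy.

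What works is to reopen that proof instead of citing it as a black box. Conditional determinism is used there only one step ahead: to define $z_{\widetilde h,a}$, to verify the shield inequality, and to preserve the budget invariant. Completeness over $(\mdp,\alpha)$ only requires compliance and the distance bound under $\nature^P$, so it is needed only at histories of positive probability in $\mdp$. At such a history every cell has a unique one-step successor in the lift: the unique true atom if the cell has positive true mass, and $r_{\xi'}$ otherwise. Define $z_{\widetilde h,a}$ through these successors, with $V$ the robust violation value of $\widehat\pi^{(T)}$ on $\widetilde{\mdp}$. The robust dynamic-programming inequality then gives compliance at every $\mdp$-reachable history, and the coupling under $\nature^P$ yields $\mathrm{TV}_{\mdp}(\widehat\pi^{(T)},\overline\pi^{\downarrow})\le\epsilon_{\widehat\beta}$ directly. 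You never need the intermediate bound on $\mathrm{TV}_{\widetilde{\mdp}}$.
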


\begin{proof}
Let \(\pi\) be an HR policy on \(\mdp\) such that
\[
\mdp,\pi\models\spec.
\]
By Lemma~\ref{lem:hard-switch-slater-concrete}, the policy
\(\widehat\pi^{(T)}\) is safe for the conditional lift
\(\widetilde{\mdp}\), and
\[
W_{1,\gamma}^{\mdp}\!\left(\pi,\widehat\pi^{(T)}\right)
\le
\gamma^T+
\frac{T\eta+\Delta_\infty}{\kappa+T\eta+\Delta_\infty}.
\]

For brevity, define
\[
\epsilon_\beta
=
\frac{
\|\widehat{\beta}-\widehat{\beta}^{\infty}\|_\infty
}{
\|\widehat{\beta}-\widehat{\beta}^{\infty}\|_\infty
+
p-\widehat{\beta}(\xiinit,\qinit)
}.
\]
Since
\(
\widetilde{\mdp}/\alpha=\widehat{\mdp},
\)
the shield
\(
\mathfrak S(\widehat{\mdp},\mathcal A,\widehat{\beta})
\)
is exactly
\(
\mathfrak S(\widetilde{\mdp}/\alpha,\mathcal A,\widehat{\beta}).
\)
Applying the RMDP \(\mathrm{TV}\)-completeness theorem to
\(\widetilde{\mdp}\), there exists a compliant HR policy
\(\overline\pi\) on \(\mdp\otimes_\alpha\mathcal D\) such that
\[
\mathrm{TV}_{\widetilde{\mdp}}
\left(
\widehat\pi^{(T)},\overline\pi^\downarrow
\right)
\le
\epsilon_\beta.
\]
Since the true MDP \(\mdp\) is a resolution of \(\widetilde{\mdp}\), this
implies
\[
\mathrm{TV}_{\mdp}
\left(
\widehat\pi^{(T)},\overline\pi^\downarrow
\right)
\le
\epsilon_\beta.
\]
By Lemma~\ref{lemma:dominating distance}, we obtain
\[
W_{1,\gamma}^{\mdp}
\left(
\widehat\pi^{(T)},\overline\pi^\downarrow
\right)
\le
\epsilon_\beta,
\]
which, via the triangle inequality gives
\[
W_{1,\gamma}^{\mdp}
\left(
\pi,\overline\pi^\downarrow
\right)
\le
\epsilon_\beta
+
\gamma^T
+
\frac{T\eta+\Delta_\infty}{\kappa+T\eta+\Delta_\infty}.
\]
Thus, the shield is
\(\mathrm{HR}\)-\(\epsilon_T\)-\(W_{1,\gamma}\)-complete.
\end{proof}

\subsection{Optimality Results}

We now derive optimality guarantees presented in \cref{thm:unknown_MDP_optimality} from the completeness results above.
The \(W_{1,\gamma}\)-result already applies to all HR policies. For the
\(\mathrm{TV}\)-result, which was proved only for the class
\(\mathcal C_{\mathcal A}(\mdp)\), we first recall that this class is
sufficient to attain the optimal value of the finite undiscounted safe
policy optimization problem under standard total-reward assumptions.

\begin{lemma}[Initial mixtures with automaton memory suffice]
\label{lem:CA-suffices-for-undiscounted-optimality}
Assume that \(\mdp\) is finite. Consider the undiscounted safe policy
optimization problem
\[
\sup_{\pi\in\mathrm{HR}} J^1_{\mdp}(\pi)
\qquad
\text{subject to}
\qquad
\mdp,\pi\models\mathbb P_{\ge 1-p}(\formula).
\]
Suppose that:
\begin{enumerate}
    \item for every HR policy \(\pi\), the total reward
    \(
    \sum_{t=0}^{\infty}\rewardfunc(s_t,a_t)
    \)
    is almost surely convergent and has finite expectation;
    \item an optimal feasible HR policy exists.
\end{enumerate}
Then there exists an optimal feasible policy
\(
\pi^\star\in\mathcal C_{\mathcal A}(\mdp).
\)
Moreover, \(\pi^\star\) can be chosen as an initial mixture of at most two
deterministic \(\mathcal A\)-memory policies.
\end{lemma}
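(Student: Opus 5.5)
We reduce the statement to a constrained linear program over occupation measures of the finite product MDP \(\mdp\otimes_{\labelfunc}\mathcal A\). First, we make \(F\) absorbing in the product, which changes neither feasibility nor the satisfaction probability. Safety then becomes a reachability constraint: the probability of ever visiting \(S\times F\) is at most \(p\). This constraint is linear in the state-action frequencies.

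Next, we use the total-reward assumption. Together with the standard theory of constrained MDPs (Altman), it gives the following reduction. Every HR policy on \(\mdp\) induces the same path law as some HR policy on the product. Its expected total reward and its violation probability can then be written as linear functionals of its expected visitation vector \(x\in\mathbb R_{\ge 0}^{(S\times Q)\times\mathsf A}\). This vector lies in the polyhedron cut out by the flow-conservation equations. We would handle transient and recurrent behaviour using the almost-sure convergence of total reward, which forces reward zero on end components that are visited infinitely often.

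Once the problem is in this form, the optimal feasible HR value equals the optimal value of an LP with one extra linear inequality, the safety constraint, added to the flow constraints. By hypothesis 2 the LP optimum is attained. The next step is to apply a Carathéodory/extreme-point argument. Vertices of the flow polytope correspond to deterministic \(\mathcal A\)-memory policies \(d\in\mathfrak D_{\mathcal A}(\mdp)\). Adding a single linear constraint means an optimal vertex of the intersected polyhedron lies on an edge of the flow polytope. Hence it is a convex combination of at most two such vertices \(d_1,d_2\), with weights \(\lambda_1,\lambda_2\).

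Finally, we take the external mixture \(\Pi_\lambda\) with \(\lambda=\lambda_1\delta_{d_1}+\lambda_2\delta_{d_2}\). Since path laws mix linearly under an initial randomization, the reward and the violation probability of \(\Pi_\lambda\) are the same convex combinations. So \(\Pi_\lambda\) is feasible and optimal, and by Lemma~\ref{lemma:MBR-to-HR} its HR realization \(\Pi^\flat_\lambda\) belongs to \(\mathcal C_{\mathcal A}(\mdp)\).

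The main obstacle is the occupation-measure step in the undiscounted total-reward setting. Visitation frequencies can be infinite on recurrent parts of the chain, and the flow polytope may be unbounded. We would first try the standard remedy of collapsing end components that carry zero reward and are safe into absorbing sinks, with one exit action each. After this, every policy with finite expected reward has finite occupation, the polytope is bounded on the relevant face, and its vertices are exactly the deterministic memoryless product policies. Mixing at the occupation level then coincides with the initial external mixture.
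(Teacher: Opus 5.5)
Your proposal is correct and follows the paper's route. You pass to the product $\mathcal M\otimes_L\mathcal A$, write the violation probability as a linear total-cost constraint on occupation measures, and use the single constraint (Carath\'eodory/basic feasible solutions, i.e.\ your vertex-on-an-edge argument) to obtain an initial mixture of at most two deterministic stationary product policies, which is exactly an element of $\mathcal C_{\mathcal A}(\mathcal M)$. The paper only cites the standard occupation-measure theory for finite total-reward CMDPs, whereas you make the end-component collapse explicit. When you do so, collapse every reachable end component rather than turning $S\times F$ into a zero-reward sink, because such a sink would change the undiscounted objective after a violation; the components inside $S\times F$ also have zero reward by hypothesis~1. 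Also lift each collapsed component's exit choice back to a deterministic memoryless product policy by navigating inside the component with its zero-reward internal actions.
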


\begin{proof}
Consider the product MDP
\(
\mdp\otimes_{\labelfunc}\mathcal A
\)
with state space \(S\times Q\), and let \(G=S\times F\) be the set of bad
product states. We make \(G\) absorbing and set the reward to zero after
\(G\) is reached. This does not change either the event of violating
\(\formula\) or the value of the total reward up to violation.

The safety constraint can be written as a total-cost constraint. Define the
one-step cost
\[
c((s,q),a)
=
\mathbb P_{s'\sim P(s,a)}
\left[
\delta(q,\labelfunc(s'))\in F
\right]
\]
for \(q\notin F\), and \(c((s,q),a)=0\) for \(q\in F\). Then, for every
policy \(\pi\),
\[
\mathbb E^\pi
\left[
\sum_{t=0}^{\infty} c((s_t,q_t),a_t)
\right]
\]
is the probability to reach $G$ following $\pi$, so the constraint
\[
\mdp,\pi\models\mathbb P_{\ge 1-p}(\formula)
\]
is equivalent to
\[
\mathbb E^\pi
\left[
\sum_{t=0}^{\infty} c((s_t,q_t),a_t)
\right]
\le p.
\]

Thus, the problem is a finite total-reward CMDP on the product MDP with one
total-cost constraint. By the standard occupation-measure formulation for
finite total-reward CMDPs, the achievable reward-cost region is the convex
hull of the performance vectors of deterministic stationary policies on
the product MDP. Hence, an optimal feasible point can be achieved by an
initial mixture of deterministic stationary product policies. Since there
is one cost constraint, Carathéodory's theorem, or equivalently, the basic
feasible solution structure of the corresponding linear program, implies
that at most two deterministic stationary product policies are needed.

Finally, deterministic stationary policies on the product \(S\times Q\)
are exactly deterministic \(\mathcal A\)-memory policies on \(\mdp\), and
their initial mixtures are precisely the policies in
\(\mathcal C_{\mathcal A}(\mdp)\). Therefore, an optimal feasible policy
exists in \(\mathcal C_{\mathcal A}(\mdp)\).
\end{proof}

We are now ready to state our two final optimality results, as presented in \cref{sec:unknown_MDPs}.
The first follows directly from combining Theorem \ref{th:TV-completeness-concrete-automaton}, Lemma \ref{lem:CA-suffices-for-undiscounted-optimality}, and Lemma \ref{lemma:completeness-to-optimality}.

\begin{theorem}[Undiscounted HR-optimality from TV-completeness; extension of~\cref{thm:unknown_MDP_optimality},~case~1]
\label{th:tv-hr-optimality-unknown-mdp}
Assume the hypotheses of Theorem~\ref{th:TV-completeness-concrete-automaton}
and Lemma~\ref{lem:CA-suffices-for-undiscounted-optimality}. Suppose
moreover that the absolute value of the total undiscounted return is
uniformly bounded by \(B\). Let
\[
\epsilon_{\mathrm{TV}}
=
\frac{
\|\widehat\beta-\widehat\beta^\infty\|_\infty
}{
\|\widehat\beta-\widehat\beta^\infty\|_\infty
+
p-\widehat\beta(\xiinit,\qinit)
}
+
\frac{\eta H_{\max}}{\kappa+\eta H_{\max}}.
\]
Then the shield $
\mathfrak S(\widehat{\mdp},\mathcal A,\widehat\beta)$
is
\[
\mathrm{HR}\text{-}(2B\epsilon_{\mathrm{TV}},1)\text{-optimal}
\]
over \((\mdp,\alpha)\) for \(\spec\).
\end{theorem}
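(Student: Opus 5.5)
The plan is to chain three earlier results, as the sentence preceding the statement suggests. First, Theorem~\ref{th:TV-completeness-concrete-automaton} makes the shield $\mathfrak S(\widehat{\mdp},\mathcal A,\widehat\beta)$ $\mathcal C_{\mathcal A}(\mdp)$-$\epsilon_{\mathrm{TV}}$-$\mathrm{TV}$-complete over $(\mdp,\alpha)$ for $\spec$. Here $\mdp$ is an MDP, viewed as an RMDP with singleton uncertainty sets, so there is a single adversary.

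Second, apply Lemma~\ref{lemma:completeness-to-optimality}, item~2, with class $\mathcal C=\mathcal C_{\mathcal A}(\mdp)$ and the bound $B$ on the total undiscounted return. This gives $\mathcal C_{\mathcal A}(\mdp)$-$(2B\epsilon_{\mathrm{TV}},1)$-optimality: for every feasible $\pi\in\mathcal C_{\mathcal A}(\mdp)$ there is a compliant $\overline\pi$ with
\[
J^1_{\mdp}(\overline\pi^\downarrow)\ge J^1_{\mdp}(\pi)-2B\epsilon_{\mathrm{TV}}.
\]

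The remaining step, and the main obstacle, is lifting this from $\mathcal C_{\mathcal A}(\mdp)$ to all of $\mathrm{HR}$. Completeness does not transfer to arbitrary HR policies, so I would argue at the level of values, using Lemma~\ref{lem:CA-suffices-for-undiscounted-optimality}. Under its hypotheses, there is an optimal feasible $\pi^\star\in\mathcal C_{\mathcal A}(\mdp)$ with $J^1_{\mdp}(\pi^\star)\ge J^1_{\mdp}(\pi)$ for every feasible HR policy $\pi$. Given any feasible HR $\pi$, I take the compliant $\overline\pi$ associated with $\pi^\star$ in the second step. Then
\[
J^1_{\mdp}(\overline\pi^\downarrow)\ge J^1_{\mdp}(\pi^\star)-2B\epsilon_{\mathrm{TV}}\ge J^1_{\mdp}(\pi)-2B\epsilon_{\mathrm{TV}}.
\]
The witness $\overline\pi$ does not depend on $\pi$, which is allowed by the definition of approximate optimality.

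Two points need checking. First, the optimality definition quantifies over adversaries of $\mdp$; this is trivial since there is only one. Second, the return functional in Lemma~\ref{lem:CA-suffices-for-undiscounted-optimality} must be the same $J^1_{\mdp}$ as in the optimality definition. This should hold because the reward-zeroing modification in that lemma's proof is only internal to its argument.
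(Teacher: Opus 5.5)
Your proposal is correct and is essentially the paper's argument: the paper obtains this theorem by directly combining Theorem~\ref{th:TV-completeness-concrete-automaton}, item~2 of Lemma~\ref{lemma:completeness-to-optimality}, and Lemma~\ref{lem:CA-suffices-for-undiscounted-optimality}. The paper leaves the lifting from $\mathcal C_{\mathcal A}(\mdp)$ to $\mathrm{HR}$ implicit, and your value-level step supplies it: the optimal feasible $\pi^\star\in\mathcal C_{\mathcal A}(\mdp)$ gives a single compliant witness $\overline\pi$ that works for every feasible $\mathrm{HR}$ policy.
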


The following result follows directly from combining Theorem \ref{th:hard-switch-wasserstein} and Lemma \ref{lemma:completeness-to-optimality}.

\begin{theorem}[Discounted HR-optimality from hard-switch completeness; extension of~\cref{thm:unknown_MDP_optimality},~case~2]
\label{th:wasserstein-hr-optimality-unknown-mdp}
Assume the hypotheses of Theorem~\ref{th:hard-switch-wasserstein}. Suppose
that $
\sup_{(s,a)\in S\times\mathsf A}|\rewardfunc(s,a)|\le Z$.
Then, the shield $
\mathfrak S(\widehat{\mdp},\mathcal A,\widehat\beta)$
is
\[
\mathrm{HR}\text{-}
\left(
\frac{2Z\epsilon}{1-\gamma},
\gamma
\right)\text{-optimal}
\]
over \((\mdp,\alpha)\) for \(\spec\), where
\[
\epsilon
=
\min_{T\in\mathbb N}\left(\frac{
\|\widehat\beta-\widehat\beta^\infty\|_\infty
}{
\|\widehat\beta-\widehat\beta^\infty\|_\infty
+
p-\widehat\beta(\xiinit,\qinit)
}
+
\gamma^T
+
\frac{T\eta+\Delta_\infty}{\kappa+T\eta+\Delta_\infty}
\right).\]
\end{theorem}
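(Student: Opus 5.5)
The statement is a direct combination of Theorem~\ref{th:hard-switch-wasserstein} and Lemma~\ref{lemma:completeness-to-optimality}, item~1. I will argue in three steps.

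First, I fix an arbitrary $T\in\Nat$. Under the standing hypotheses of Theorem~\ref{th:hard-switch-wasserstein}, that theorem says the shield $\mathfrak S(\widehat{\mdp},\mathcal A,\widehat\beta)$ acts on $(\mdp,\alpha)$. The hypotheses are $\widehat\beta(\xiinit,\qinit)<p$, conditional determinism of $\mdp$ w.r.t.\ $\alpha$, and a Slater policy with margin $\kappa>0$. The theorem also says the shield is $\mathrm{HR}$-$\epsilon_T$-$W_{1,\gamma}$-complete over $(\mdp,\alpha)$ for $\spec$, where
\[
\epsilon_T=\epsilon_{\widehat\beta}+\gamma^T+\frac{T\eta+\Delta_\infty}{\kappa+T\eta+\Delta_\infty}.
\]
Since $\mdp$ is an MDP, it is viewed as an RMDP with singleton uncertainty sets. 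Hence the supremum over adversaries in $W^{\mdp}_{1,\gamma}$ is trivial, and completeness gives a single compliant $\overline\pi$ for each safe $\pi$.

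Second, I apply Lemma~\ref{lemma:completeness-to-optimality}, item~1, with $\mathcal C=\mathrm{HR}$ and reward bound $Z$, where $0<\gamma<1$. Its proof only uses the Wasserstein--reward Lipschitz lemma, which bounds the return difference by $\frac{2Z}{1-\gamma}W_{1,\gamma}$, applied path-measure by path-measure. This yields that the shield is $\mathrm{HR}$-$\bigl(\frac{2Z\epsilon_T}{1-\gamma},\gamma\bigr)$-optimal for every $T$.

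Third, I pass to the minimum over $T$. The quantity $\epsilon_T$ is bounded below by $0$, and $\gamma^T\to0$ while the last term grows with $T$. So the infimum over $\Nat$ is attained: only finitely many $T$ can have $\gamma^T$ below the value at $T=0$, and the last term is nondecreasing in $T$, so the minimum is taken over a finite set. Choosing a minimizer $T^\star$ and invoking the previous step at $T^\star$ gives optimality with $\epsilon=\min_T\epsilon_T$.

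The only genuinely delicate point is the attainment of the minimum together with the degenerate case $\eta=\Delta_\infty=0$. In that case the last term vanishes, $\epsilon_T$ is decreasing in $T$, and the infimum $\epsilon_{\widehat\beta}$ may not be attained. I would handle it by reading the $\min$ as an infimum. Optimality for every $\epsilon'>\inf_T\epsilon_T$ then holds, and I would either note that the paper's $\min$ is meant in this sense or restrict to $\eta>0$ or $\Delta_\infty>0$, where the minimum is genuinely attained. Everything else is a direct citation.
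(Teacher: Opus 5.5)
Your core argument is the paper's proof. For each fixed $T$, Theorem~\ref{th:hard-switch-wasserstein} gives $\mathrm{HR}$-$\epsilon_T$-$W_{1,\gamma}$-completeness, and Lemma~\ref{lemma:completeness-to-optimality}, item~1, turns this into $\bigl(\frac{2Z\epsilon_T}{1-\gamma},\gamma\bigr)$-optimality. The paper says nothing beyond this, including nothing about attainment of the minimum.

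Your justification that the minimum is attained is wrong as written. Every $T\ge 1$ has $\gamma^T<\gamma^0$. Also, a nondecreasing last term does not by itself confine the minimizers to a finite set: with a constant last term, the infimum is not attained. Here is a correct argument for $\eta>0$. Write $g(T)=\frac{T\eta+\Delta_\infty}{\kappa+T\eta+\Delta_\infty}$. Then $\epsilon_T\to\epsilon_{\widehat\beta}+1$, and
\[
\gamma^T+g(T)-1=\gamma^T-\frac{\kappa}{\kappa+T\eta+\Delta_\infty}<0
\]
for all large $T$, since geometric decay beats $1/T$ decay. So some $\epsilon_{T_0}$ lies strictly below the limit. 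Only finitely many $T$ satisfy $\epsilon_T\le\epsilon_{T_0}$, and the minimum over that finite set is attained.

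Your alternative condition ``$\Delta_\infty>0$'' adds nothing, because it already implies $\eta>0$. If $\eta=0$ and $P_\alpha\in\widehat{\mathcal P}$, then $\widehat{\mathcal P}(\xi,a)=\{P_\alpha(\xi,a)\}$. The two Bellman operators then coincide, so $\Delta_\infty=0$. In that degenerate case the conditional lift $\widetilde{\mdp}$ equals $\mdp$. Applying the RMDP TV-completeness theorem to it directly gives the bound with $\epsilon_{\widehat\beta}=\inf_T\epsilon_T$. So reading $\min$ as $\inf$ there is exact, not a weakening.
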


\section{Details for Experiments}

\subsection{Implementation Details}
\label{appendix:implementation}

In this appendix, we further detail the implementation of our shielding pipeline, in particular, for step (3) of running policy optimization with the shield active.
Recall that the transition function $P$ of the true MDP $\mdp$ is unknown, but we instead have access to a simulator.
Thus, enabling the shield means that we obtain a shielded simulator MDP $\overline{\mdp}^{\mathfrak S} \left(\overline S,A^{\mathfrak S},P^{\mathfrak S},\overline\sinit,AP,\overline{\labelfunc}\right)$, 
obtained from the restriction of policy distributions allowed by the shield. More precisely, $\overline{\mdp}^{\mathfrak S}$ is the MDP with same set of states, labels, labelling, and rewards as $\mathcal M\otimes_{\alpha} \mathcal D$, but with action space $A^{\mathfrak S}(s,m)=\Gamma(\alpha(s),m)$, and probability transition function \[
\overline P^{\mathfrak S}((s,m),\rho)(E)
=
\int_{\mathsf A\times U}
\overline P((s,m),(a,u))(E)\,
\rho(d(a,u)).
\]
Given the definition of $\Gamma$, the space $A^{\mathfrak S}(s,m)$ is convex, and, for optimization purposes, $A^{\mathfrak S}(s,m)$ can be taken as its extremal points. 
Furthermore, since $\Gamma(\alpha(s),m)$ is defined over $\overline\mu\in\distr{\mathsf A\times\mathcal V_\beta}$ by a single scalar inequality, an extremal point of $A^{\mathfrak S}(s,m)$ is always a mixture of at most two Dirac distributions over $A\times U$. In addition, for a fixed action distribution \(\mu\) over two actions $a$ and $a'$, the set of all auxiliary actions $(v,v')$ that satisfy compliance with the shield is a convex polytope itself. More precisely, for any action distribution $\mu$, the set of all possible auxiliary actions for every base action $a$ that satisfy compliance with the shield is the set $\mathcal V_{\lambda}(\xi,q,y)$ of all $(v_a)_{a\in A_\alpha(\xi)}
\in
\mathcal V_\beta^{A_\alpha(\xi)}$ such that
\[
\sum_{a\in A_\alpha(\xi)}
\mu(a)\sup_{\mu\in\mathcal P_\alpha(\xi,a)}
\mathbb E_{\xi'\sim\mu}
\left[
v_a\bigl(\xi',\delta(q,\labelfunc_\alpha(\xi'))\bigr)
\right]
\le y.
\]

As a consequence, since we still assume the the quotient $\mathcal M/\alpha$ is finite, the $\mathcal V_\beta^{A_\alpha(\xi)}$ has finitely many extremal points. Thus, we can restrict the action space of $\overline{\mdp}^{\mathfrak S}$ to 
\begin{enumerate}
    \item the choice of two actions, of a trade-off $\lambda\in [0;1]$ between those two action, inducing a distribution $\mu$ over actions, and
    \item the choice of an extremal point of $\mathcal V_{\lambda}(\xi,q,y)$.
\end{enumerate}
In practice, we optimize with PPO~\cite{DBLP:journals/corr/SchulmanWDRK17} over $\overline{\mdp}^{\mathfrak S}$, we leave the choice of the trade-off $\lambda$ to the actor, and implement the following heuristic for the choice of the extremal point of $\mathcal V_{\lambda}(\xi,q,y)$.

\paragraph{Auxiliary action heuristic.}
We describe the heuristic used to choose the auxiliary actions after
the actor has selected a mixed base action. Fix a shield state
\((\xi,q,y)\), and let \(\lambda\in\distr{A_\alpha(\xi)}\) be the action
distribution selected by the actor. In principle, a compliant shielded
action may assign an action-dependent continuation certificate
\((v_a)_{a\in A_\alpha(\xi)}\in\mathcal V_{\widehat\beta}^{A_\alpha(\xi)}\)
satisfying
\[
\sum_{a\in A_\alpha(\xi)}
\lambda(a)
\sup_{\widehat\mu\in\widehat{\mathcal P}(\xi,a)}
\mathbb E_{\xi'\sim\widehat\mu}
\left[
v_a\bigl(\xi',\delta(q,\labelfunc_\alpha(\xi'))\bigr)
\right]
\le y .
\]
Rather than optimizing over this full polytope, our implementation uses a
single shared continuation certificate \(v\in\mathcal V_{\widehat\beta}\)
for all actions in the support of \(\lambda\). This is conservative but
simplifies the shielded action parametrization. Starting from the baseline
certificate \(\widehat\beta\), we distribute the available residual budget
uniformly by considering the one-dimensional family
\[
v_m(\xi',q')
=
\min\{\widehat\beta(\xi',q')+m,1\},
\qquad m\ge 0.
\]
We then choose the largest margin \(m\) preserving one-step robust
feasibility:
\[
m^\star
=
\sup
\left\{
m\ge 0:
\sum_{a\in A_\alpha(\xi)}
\lambda(a)
\sup_{\widehat\mu\in\widehat{\mathcal P}(\xi,a)}
\mathbb E_{\xi'\sim\widehat\mu}
\left[
v_m\bigl(\xi',\delta(q,\labelfunc_\alpha(\xi'))\bigr)
\right]
\le y
\right\}.
\]
The auxiliary action used by the shield is \(v_{m^\star}\), i.e. we take
\(v_a=v_{m^\star}\) for every action \(a\) in the support of
\(\lambda\). If \(m=0\) is already infeasible, then the proposed action
distribution \(\lambda\) is not compatible with the shield; in that case
the implementation falls back to a precomputed safe distribution witnessing
the inductiveness of \(\widehat\beta\).

For finite RMDPs, the map
\[
m\mapsto
\sum_{a\in A_\alpha(\xi)}
\lambda(a)
\sup_{\widehat\mu\in\widehat{\mathcal P}(\xi,a)}
\mathbb E_{\xi'\sim\widehat\mu}
\left[
v_m\bigl(\xi',\delta(q,\labelfunc_\alpha(\xi'))\bigr)
\right]
\]
is monotone and piecewise affine. Its breakpoints occur only when a
coordinate \(\widehat\beta(\xi',q')+m\) reaches~\(1\). We therefore compute
\(m^\star\) by sorting these saturation breakpoints and scanning the
corresponding affine pieces. On the first segment where the robust
expectation reaches the budget \(y\), \(m^\star\) is obtained by solving a
one-dimensional affine equation. If the robust expectation remains below
\(y\) after all coordinates have saturated, then all continuation
thresholds can be set to~\(1\).

\subsection{Additional Experimental Results}
\label{app:additional_results}

This section reports the full experimental results for each environment considered in the main paper.

\paragraph{Environments.}

We now provide a description of the environments used to evaluate our approach. 
These environments have been used in prior work, and their behavior remains unchanged unless stated otherwise.
In particular, we consider two variants of the \textbf{Media Streaming} environment, two variants of the \textbf{Colour Bomb Gridworld}, as well as Pacman and its variant, \textbf{Pacman Slippery}, in which the agent may execute slippery actions. 

Overall, this results in a total of six environments. 
For brevity, we provide a detailed description only for the general version of each environment family (\ie three environments in total), while the corresponding variants differ only in the aspects mentioned above.

\begin{itemize}
    \item \textbf{Media streaming.}
    The agent manages a data buffer of size $20$. Packets leave the buffer according to a Bernoulli process with rate $\mu_{\textit{out}}=0.7$. At each timestep, the agent chooses between two actions $A=\{\textit{slow},\textit{fast}\}$, which add packets according to Bernoulli rates $\mu_{\textit{slow}}=0.1$ and $\mu_{\textit{fast}}=0.9$, respectively. The reward objective is to minimise outage time: the agent receives reward $-1$ whenever the buffer is empty and $0$ otherwise. Safety is governed by a counter $c_t$ recording the number of times the fast action has been used, capped at $C+1$, with $C=\lfloor T/2\rfloor$ and $T=100$. We use the probabilistic safety constraint
    \[
        \mathbb P_{\geq 0.5}
        \left(
            \mathbf G(c_t \leq C)
        \right).
    \]
    The environment has $20 \times 52 = 1040$ states, corresponding to the buffer level and the capped fast-action counter. A related benchmark has been considered in \cite{DBLP:conf/nips/BuraHKSC22}.

    \item \textbf{Media streaming - Alternative.}
    We also consider a stochastic-safety variant of the media-streaming environment. The buffer dynamics and reward function are the same as above, but safety is governed by a danger level $d_t$, initialised at $0$ and capped at $D_{\max}+1$, where $D_{\max}=20$. We use the safety constraint
    \[
        \mathbb P_{\geq 0.5}
        \left(
            \mathbf G(d_t \leq D_{\max})
        \right).
    \]
    Under the slow action, the danger level decreases by one with probability $0.5$, remains unchanged with probability $0.1$, and increases by one with probability $0.4$. Under the fast action, the danger level increases by one with probability $0.8$, remains unchanged with probability $0.1$, and decreases by one with probability $0.1$. Thus, the fast action improves the buffer more reliably but tends to increase the probability of entering unsafe states, while the slow action is safer on average but does not guarantee that danger decreases. The full observation is $(d_t,b_t,t)$, giving $22 \times 20 \times 101 = 44440$ possible observations. Since the buffer affects reward but not safety, the shield uses the abstraction $(d_t,t)$, which has $22 \times 101 = 2222$ states.

    \item \textbf{Colour bomb gridworld.}
    The agent operates in a $15\times 15$ gridworld, with discrete actions
    $A=\{\textit{left},\textit{right},\textit{down},\textit{up},\textit{stay}\}$.
    Outside the medic states, the intended action is executed with probability $0.9$, while with probability $0.1$ the agent slips uniformly to one of the other actions. In medic states, the dynamics are deterministic. The environment contains wall cells, bomb cells, medic cells, and coloured goal regions. At the beginning of each episode, the initial state is sampled uniformly from a fixed set of start states. The agent receives reward $+1$ upon entering any coloured goal region and reward $0$ otherwise; after reaching such a goal region, the process is reset to a uniformly sampled start state. The base gridworld has $15^2=225$ states. We consider the probabilistic safety constraint
    \[
        \mathbb{P}_{\geq 0.5}
        \left(
            \mathbf{G}\,\neg\mathsf{bomb}
        \right),
    \]
    and an LTL-safety variant
    \[
        \mathbb P_{\geq 0.99}
        \left(
        \mathbf{G}
        \left(
            \mathsf{bomb}
            \Rightarrow
            \mathbf{F}_{\leq 10}
            \left(
                \mathsf{medic} \wedge \mathbf{X}\mathsf{medic}
            \right)
        \right)
        \right).
    \]
    The LTL property is represented by a $22$-state safety automaton, so the product MDP used by the shield has $225\times 22 = 4950$ states. A related gridworld has been used in~\cite{ABENTShielding}.

    \item \textbf{Pacman and slippery Pacman.}
    We consider a Pacman-with-coins environment on a $7\times 10$ map. The environment contains one Pacman agent, one ghost, walls, and collectible coins, with reward $+1$ obtained when Pacman visits a cell containing an uncollected coin. The full coin-augmented state is combinatorial: the map has $28$ free cells, giving up to $2^{28}$ possible coin configurations for each agent--ghost configuration. For shielding, we use a safety abstraction that discards the coins and retains only the positions and directions of Pacman and the ghost. This abstraction has $4624$ states. The ghost dynamics are stochastic: the ghost chooses an available action directed toward Pacman with probability $0.4$, and otherwise randomises uniformly over the other available actions with total probability $0.6$. We use the safety specification
    \[
        \mathbb P_{\geq 1-p}
        \left(
            \mathbf G(\mathrm{loc}_{\mathrm{Pacman}} \neq \mathrm{loc}_{\mathrm{Ghost}})
        \right),
    \]
    where $p=0.05$ for Pacman and $0.5$ for Pacman slippery. We also evaluate a slippery variant on the same map and with the same $4624$-state safety abstraction. In this variant, the requested Pacman action is executed with probability $0.9$, while with probability $0.1$ one of the other actions is executed uniformly at random. The task is to collect as many coins as possible while satisfying the probabilistic safety specification of avoiding the ghost throughout the episode. Similar environments were considered in \cite{ABENTShielding,DBLP:conf/aaai/CourtBG25}.
\end{itemize}

We use the shorthand labels \texttt{MDP}, \texttt{MLE}, \texttt{Kno}, and \texttt{Unk} in all plots to indicate the four different cases described in \cref{sec:experiments}.

\paragraph{Evaluation metrics.}
We report two evaluation metrics:
\begin{itemize}
    \item \textbf{Expected discounted reward} (\texttt{rew}): the average episodic return, corresponding to an empirical estimate of $J^\gamma_M(\pi)$.
    \item \textbf{Specification satisfaction probability} (\texttt{sat}): the empirical probability that a rollout satisfies the specification.
\end{itemize}

When learning curves are shown, we additionally report the corresponding training-time quantities
\texttt{rew} and \texttt{sat}.
For each environment, we provide two groups of plots:
\begin{enumerate}
    \item learning curves over training steps, shown for each sample size (the specific number that changes across the various environments represents the minimum number of samples needed for \texttt{Unk});
    \item averaged performance as a function of sample size.
\end{enumerate}

\input{figures/results/reformatted_plots}

%% file: figures/results/reformatted_plots.tex
\captionsetup{hypcap=false}
\FloatBarrier
\paragraph{Colour bomb gridworld}

\begin{center}  %
    \begin{longtable}{@{}ccr@{}}
        \multicolumn{2}{c}{\input{figures/results/legend}} & Samples\\
        \input{figures/results/GW/GW_SA_10_learning_rew} & 
        \input{figures/results/GW/GW_SA_10_learning_sat} &
        10 \\
        \input{figures/results/GW/GW_SA_100_learning_rew} &
        \input{figures/results/GW/GW_SA_100_learning_sat} &
        100 \\
    \input{figures/results/GW/GW_SA_980_learning_rew} &
    \input{figures/results/GW/GW_SA_980_learning_sat} &
    980 \\

    \input{figures/results/GW/GW_SA_1000_learning_rew} &
    \input{figures/results/GW/GW_SA_1000_learning_sat} &
    1000 \\

    \input{figures/results/GW/GW_SA_5000_learning_rew} &
    \input{figures/results/GW/GW_SA_5000_learning_sat} &
    5000 \\

    \input{figures/results/GW/GW_SA_10000_learning_rew} &
    \input{figures/results/GW/GW_SA_10000_learning_sat} &
    10000 \\
    
        \input{figures/results/GW/GW_SA_20000_learning_rew} &
        \input{figures/results/GW/GW_SA_20000_learning_sat} &
        20000
    \end{longtable}
    \captionof{figure}{Learning curves for the \textbf{Colour bomb gridworld} environment. Left: \texttt{rew}. Right: \texttt{sat}. Rows correspond to different sample sizes.}
    \label{fig:gw-learning-curves-4}
\end{center}  %

\begin{center}  %
    \begin{tabular}{cc}
        \multicolumn{2}{c}{\input{figures/results/legend}}\\
        \input{figures/results/GW/GW_finalavg_rew} &
        \input{figures/results/GW/GW_finalavg_sat}
    \end{tabular}
    \captionof{figure}{Average performance on \textbf{Colour bomb gridworld} as a function of sample size. Left: \texttt{rew}. Right: \texttt{sat}.}
    \label{fig:gw-sample-size}
\end{center}  %

\FloatBarrier
\paragraph{Colour bomb gridworld -- LTL goal}
\begin{center}  %
    \begin{longtable}{@{}ccc@{}}
    \multicolumn{2}{c}{\input{figures/results/legend}} & Samples \\

    \input{figures/results/GWltl/GWltl_SA_10_learning_rew} &
    \input{figures/results/GWltl/GWltl_SA_10_learning_sat} &
    10 \\

    \input{figures/results/GWltl/GWltl_SA_100_learning_rew} &
    \input{figures/results/GWltl/GWltl_SA_100_learning_sat} &
    100 \\

    \input{figures/results/GWltl/GWltl_SA_1000_learning_rew} &
    \input{figures/results/GWltl/GWltl_SA_1000_learning_sat} &
    1000 \\

    \input{figures/results/GWltl/GWltl_SA_1226_learning_rew} &
    \input{figures/results/GWltl/GWltl_SA_1226_learning_sat} &
    1226 \\

    \input{figures/results/GWltl/GWltl_SA_5000_learning_rew} &
    \input{figures/results/GWltl/GWltl_SA_5000_learning_sat} &
    5000 \\

    \input{figures/results/GWltl/GWltl_SA_10000_learning_rew} &
    \input{figures/results/GWltl/GWltl_SA_10000_learning_sat} &
    10000 \\
    
        \input{figures/results/GWltl/GWltl_SA_20000_learning_rew} &
        \input{figures/results/GWltl/GWltl_SA_20000_learning_sat} &
        20000 \\
    \end{longtable}
    \captionof{figure}{Learning curves for the \textbf{Colour bomb gridworld -- LTL} environment. Left: \texttt{rew}. Right: \texttt{sat}. Rows correspond to different sample sizes.}
    \label{fig:gwltl-learning-curves-4}
\end{center}  %

\begin{center}  %
    \begin{tabular}{cc}
        \multicolumn{2}{c}{\input{figures/results/legend}}\\
        \input{figures/results/GWltl/GWltl_finalavg_rew} &
        \input{figures/results/GWltl/GWltl_finalavg_sat}
    \end{tabular}
    \captionof{figure}{Average performance on \textbf{Colour bomb gridworld -- LTL} as a function of sample size. Left: \texttt{rew}. Right: \texttt{sat}.}
    \label{fig:gwltl-sample-size}
\end{center}  %

\FloatBarrier

\paragraph{Pacman}

\begin{center}  %
    \begin{longtable}{@{}ccc@{}}
    \multicolumn{2}{c}{\input{figures/results/legend}} & Samples \\

    \input{figures/results/Mini/Mini_SA_10_learning_rew} &
    \input{figures/results/Mini/Mini_SA_10_learning_sat} &
    10 \\

    \input{figures/results/Mini/Mini_SA_100_learning_rew} &
    \input{figures/results/Mini/Mini_SA_100_learning_sat} &
    100\\

    \input{figures/results/Mini/Mini_SA_317_learning_rew} &
    \input{figures/results/Mini/Mini_SA_317_learning_sat} &
    317 \\

    \input{figures/results/Mini/Mini_SA_1000_learning_rew} &
    \input{figures/results/Mini/Mini_SA_1000_learning_sat} &
    1000\\

    \input{figures/results/Mini/Mini_SA_5000_learning_rew} &
    \input{figures/results/Mini/Mini_SA_5000_learning_sat} &
    5000 \\

    \input{figures/results/Mini/Mini_SA_10000_learning_rew} &
    \input{figures/results/Mini/Mini_SA_10000_learning_sat} &
    10000 \\
        \input{figures/results/Mini/Mini_SA_20000_learning_rew} &
        \input{figures/results/Mini/Mini_SA_20000_learning_sat} & 20000 
    \end{longtable}
    \captionof{figure}{Learning curves for the \textbf{Pacman} environment. Left: \texttt{rew}. Right: \texttt{sat}. Rows correspond to different sample sizes.}
    \label{fig:mini-learning-curves-4}
\end{center}  %

\begin{center}  %
    \begin{tabular}{cc}
        \multicolumn{2}{c}{\input{figures/results/legend}}\\
        \input{figures/results/Mini/Mini_finalavg_rew} &
        \input{figures/results/Mini/Mini_finalavg_sat}
    \end{tabular}
    \captionof{figure}{Average performance on \textbf{Pacman} as a function of sample size. Left: \texttt{rew}. Right: \texttt{sat}.}
    \label{fig:mini-sample-size}
\end{center}  %

\FloatBarrier
\paragraph{Pacman Slippery}
\begin{center}  %
    \begin{longtable}{@{}ccc@{}}
    \multicolumn{2}{c}{\input{figures/results/legend}} & Samples \\

    \input{figures/results/MiniSlip/MiniSlip_SA_10_learning_rew} &
    \input{figures/results/MiniSlip/MiniSlip_SA_10_learning_sat} &
    10 \\

    \input{figures/results/MiniSlip/MiniSlip_SA_100_learning_rew} &
    \input{figures/results/MiniSlip/MiniSlip_SA_100_learning_sat} &
    100\\

    \input{figures/results/MiniSlip/MiniSlip_SA_1000_learning_rew} &
    \input{figures/results/MiniSlip/MiniSlip_SA_1000_learning_sat} &
    1000 \\

    \input{figures/results/MiniSlip/MiniSlip_SA_5000_learning_rew} &
    \input{figures/results/MiniSlip/MiniSlip_SA_5000_learning_sat} &
    5000\\

    \input{figures/results/MiniSlip/MiniSlip_SA_10000_learning_rew} &
    \input{figures/results/MiniSlip/MiniSlip_SA_10000_learning_sat} &
    10000 \\

    \input{figures/results/MiniSlip/MiniSlip_SA_16215_learning_rew} &
    \input{figures/results/MiniSlip/MiniSlip_SA_16215_learning_sat} &
    16215\\
        \input{figures/results/MiniSlip/MiniSlip_SA_20000_learning_rew}&
        \input{figures/results/MiniSlip/MiniSlip_SA_20000_learning_sat} & 20000 \\
    \end{longtable}
    \captionof{figure}{Learning curves for the \textbf{Pacman Slippery} environment. Left: \texttt{rew}. Right: \texttt{sat}. Rows correspond to different sample sizes.}
    \label{fig:minislip-learning-curves-4}
\end{center}  %

\begin{center}  %
    \begin{tabular}{cc}
        \multicolumn{2}{c}{\input{figures/results/legend}}\\
        \input{figures/results/MiniSlip/MiniSlip_finalavg_rew} &
        \input{figures/results/MiniSlip/MiniSlip_finalavg_sat}
    \end{tabular}
    \captionof{figure}{Average performance on \textbf{Pacman Slippery} as a function of sample size. Left: \texttt{rew}. Right: \texttt{sat}.}
    \label{fig:minislip-sample-size}
\end{center}  %

\FloatBarrier
\paragraph{Media Streaming}

\begin{center}  %
    \begin{longtable}{@{}ccc@{}}
    \multicolumn{2}{c}{\input{figures/results/legend}} & Samples \\

    \input{figures/results/Stream/Stream_SA_10_learning_rew} &
    \input{figures/results/Stream/Stream_SA_10_learning_sat} &
    10 \\

    \input{figures/results/Stream/Stream_SA_100_learning_rew} &
    \input{figures/results/Stream/Stream_SA_100_learning_sat} &
    100\\

    \input{figures/results/Stream/Stream_SA_869_learning_rew} &
    \input{figures/results/Stream/Stream_SA_869_learning_sat} &
    869 \\

    \input{figures/results/Stream/Stream_SA_1000_learning_rew} &
    \input{figures/results/Stream/Stream_SA_1000_learning_sat} &
    1000\\

    \input{figures/results/Stream/Stream_SA_5000_learning_rew} &
    \input{figures/results/Stream/Stream_SA_5000_learning_sat} &
    5000 \\

    \input{figures/results/Stream/Stream_SA_10000_learning_rew} &
    \input{figures/results/Stream/Stream_SA_10000_learning_sat} &
    10000\\
        \input{figures/results/Stream/Stream_SA_20000_learning_rew} &
        \input{figures/results/Stream/Stream_SA_20000_learning_sat} & 20000
    \end{longtable}
    \captionof{figure}{Learning curves for the \textbf{Media Streaming} environment. Left: \texttt{rew}. Right: \texttt{sat}. Rows correspond to different sample sizes.}
    \label{fig:stream-learning-curves-4}
\end{center}  %

\begin{center}  %
    \begin{tabular}{cc}
        \multicolumn{2}{c}{\input{figures/results/legend}}\\
        \input{figures/results/Stream/Stream_finalavg_rew} &
        \input{figures/results/Stream/Stream_finalavg_sat}
    \end{tabular}
    \captionof{figure}{Average performance on \textbf{Media Streaming} as a function of sample size. Left: \texttt{rew}. Right: \texttt{sat}.}
    \label{fig:stream-sample-size}
\end{center}  %

\FloatBarrier
\paragraph{Media Streaming -- Alternative}
\begin{center}  %
    \begin{longtable}{@{}ccc@{}}
    \multicolumn{2}{c}{\input{figures/results/legend}} & Samples \\

    \input{figures/results/StreamV2/StreamV2_SA_10_learning_rew} &
    \input{figures/results/StreamV2/StreamV2_SA_10_learning_sat} &
    10 \\

    \input{figures/results/StreamV2/StreamV2_SA_100_learning_rew} &
    \input{figures/results/StreamV2/StreamV2_SA_100_learning_sat} &
    100\\

    \input{figures/results/StreamV2/StreamV2_SA_1000_learning_rew} &
    \input{figures/results/StreamV2/StreamV2_SA_1000_learning_sat} &
    1000 \\

    \input{figures/results/StreamV2/StreamV2_SA_5000_learning_rew} &
    \input{figures/results/StreamV2/StreamV2_SA_5000_learning_sat} &
    5000\\

    \input{figures/results/StreamV2/StreamV2_SA_10000_learning_rew} &
    \input{figures/results/StreamV2/StreamV2_SA_10000_learning_sat} &
    10000 \\

    \input{figures/results/StreamV2/StreamV2_SA_20000_learning_rew} &
    \input{figures/results/StreamV2/StreamV2_SA_20000_learning_sat} &
    20000
\end{longtable}
    \captionof{figure}{Learning curves for the \textbf{Media Streaming -- Alternative} environment. Left: \texttt{rew}. Right: \texttt{sat}. Rows correspond to different sample sizes.}
    \label{fig:streamv2-learning-curves-3}
\end{center}  %

\begin{center}  %
    \begin{tabular}{cc}
        \multicolumn{2}{c}{\input{figures/results/legend}}\\
        \input{figures/results/StreamV2/StreamV2_finalavg_rew} &
        \input{figures/results/StreamV2/StreamV2_finalavg_sat}
    \end{tabular}
    \captionof{figure}{Average performance on \textbf{Media Streaming -- Alternative} as a function of sample size. Left: \texttt{rew}. Right: \texttt{sat}.}
    \label{fig:streamv2-sample-size}
\end{center}  %

%% file: figures/results/legend.tex
\begin{tikzpicture}[baseline=(current bounding box.center)]
\begin{axis}[
    hide axis,
    xmin=0, xmax=1,
    ymin=0, ymax=0,
    legend columns=5,
    legend style={
        font=\small,
        draw=none,
        fill=none,
        column sep=4pt,
        row sep=0pt,
        inner xsep=2pt,
        inner ysep=1pt,
        /tikz/every even column/.append style={column sep=6pt},
        at={(0.5,0.5)},
        anchor=center
    },
    legend cell align=left
]

\addplot[
    color={rgb,255:red,214;green,39;blue,40},
    line width=1.4pt
] coordinates {(0,0)};
\addlegendentry{\texttt{MDP}}

\addplot[
    color={rgb,255:red,44;green,160;blue,44},
    line width=1.4pt
] coordinates {(0,0)};
\addlegendentry{\texttt{MLE}}

\addplot[
    color={rgb,255:red,255;green,127;blue,14},
    line width=1.4pt
] coordinates {(0,0)};
\addlegendentry{\texttt{Kno}}

\addplot[
    color={rgb,255:red,31;green,119;blue,180},
    line width=1.4pt
] coordinates {(0,0)};
\addlegendentry{\texttt{Unk}}

\addplot[
    black, dashed, line width=1.4pt
] coordinates {(0,0)};
\addlegendentry{\texttt{satisfaction prob $(1-p)$}}

\end{axis}
\end{tikzpicture}